\def\eqref#1{equation~\ref{#1}}
\def\1{\bm{1}}
\DeclareMathAlphabet{\mathsfit}{\encodingdefault}{\sfdefault}{m}{sl}
\SetMathAlphabet{\mathsfit}{bold}{\encodingdefault}{\sfdefault}{bx}{n}
\renewcommand{\eqref}[1]{(\ref{#1})}
\theoremstyle{plain}
\theoremstyle{definition}
\newtheorem{definition}{Definition}
\newtheorem{assumption}{Assumption}
\theoremstyle{remark}
\title{Neural Graduated Assignment for Maximum Common Edge Subgraphs}
\author{Chaolong Ying$^\dagger$, Yingqi Ruan$^\dagger$, Xuemin Chen$^\ddagger$, Yaomin Wang$^\dagger$, Tianshu Yu$^\dagger$\thanks{Corresponding author} \\
$^\dagger$ School of Data Science, The Chinese University of Hong Kong, Shenzhen\\
$^\ddagger$ School of Science and Engineering, The Chinese University of Hong Kong, Shenzhen\\
\texttt{\{chaolongying,yingqiruan,xueminchen,yaominwang\}@link.cuhk.edu.cn},\\ \texttt{yutianshu@cuhk.edu.cn}
}
\begin{document}

\maketitle

\begin{abstract}
The Maximum Common Edge Subgraph (MCES) problem is a crucial challenge with significant implications in domains such as biology and chemistry. Traditional approaches, which include transformations into max-clique and search-based algorithms, suffer from scalability issues when dealing with larger instances. This paper introduces ``Neural Graduated Assignment'' (NGA), a simple, scalable, unsupervised-training-based method that addresses these limitations. Central to NGA is stacking of differentiable assignment optimization with neural components, enabling high-dimensional parameterization of the matching process through a learnable temperature mechanism. We further theoretically analyze the learning dynamics of NGA, showing its design leads to fast convergence, better exploration-exploitation tradeoff, and ability to escape local optima. Extensive experiments across MCES computation, graph similarity estimation, and graph retrieval tasks reveal that NGA not only significantly improves computation time and scalability on large instances but also enhances performance compared to existing methodologies. The introduction of NGA marks a significant advancement in the computation of MCES and offers insights into other assignment problems.
\end{abstract}

\section{Introduction}






\textbf{Background.} The Maximum Common Edge Subgraph (MCES) problem is a cornerstone task in combinatorial optimization~\citep{ndiaye2011cp}, particularly significant within the realms of biology and chemistry~\citep{ehrlich2011maximum}. As a variant of the broader Maximal Common Subgraph (MCS) problem~\citep{bunke1998graph}, MCES stands alongside the Maximal Common Induced Subgraph (MCIS) in its complexity and utility. 
Efficiently solving the MCES problem at scale is of both theoretical and practical significance in this context. For instance, in drug discovery, identifying MCES between molecules can reveal shared pharmacological properties. In cybersecurity, comparing network traffic graphs via MCES enables the detection of recurring attack patterns~\citep{ehrlich2011maximum}. Established approaches for solving MCES have been successfully integrated into widely-used cheminformatics libraries such as RDKit~\citep{bento2020rdkit} and Molassembler~\citep{sobez2020molassembler}, which have become indispensable in both industrial and academic research. 


\begin{wrapfigure}{r}{0.46\linewidth}
    \centering
\includegraphics[width=0.5\linewidth]{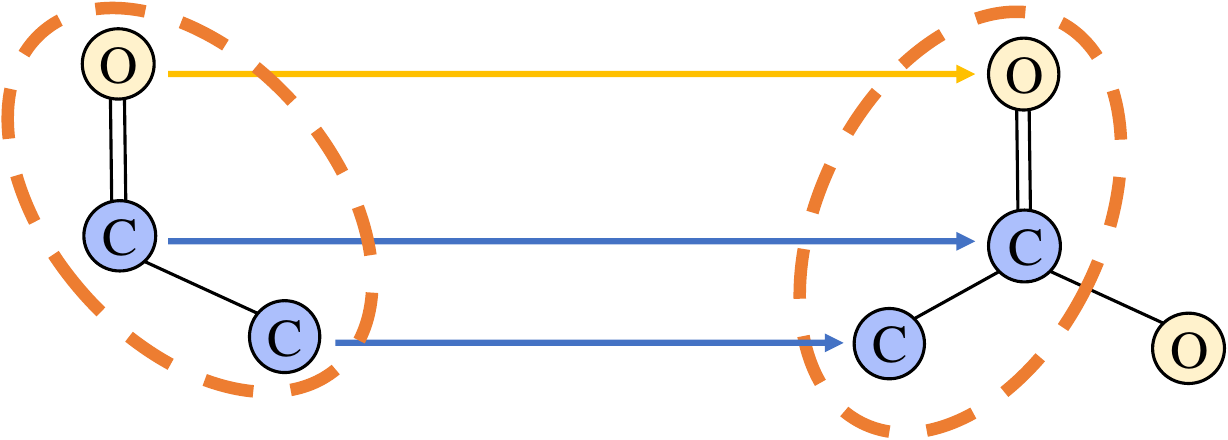}
    \caption{For two labeled graphs, e.g. molecular graphs, the MCES is highlighted in circle and the node correspondences are encoded in arrowed lines.}
    \label{fig:MCES_example}
    \vspace{-0.114mm}
\end{wrapfigure}

\textbf{Problem \& Existing Challenges.} MCES involves identifying a subgraph that contains the maximum number of edges common to both input graphs. An example is in Figure~\ref{fig:MCES_example}. It is inherently NP-complete~\citep{garey1979computers}, posing significant computational challenges in terms of scalability and efficiency. Historically, it has been tackled through transformations into the maximum clique problem~\citep{bomze1999maximumclique} or search-based  algorithms~\citep{raymond2002rascal, mccreesh2017mcsplit, mccreesh2016clique}. However, these traditional methods often struggle with large-scale graphs: transformations can introduce additional computational overhead, while search-based branch-and-bound approaches suffer from exponential scaling, rendering them impractical for complex instances. This challenging landscape underscores the need for novel techniques that can deliver efficient and reliable solutions to the MCES problem without compromising accuracy.


\textbf{Our approach.} To address the challenges of MCES, we propose \textbf{Neural Graduated Assignment} (NGA) -- a novel neuralized optimization framework designed to approximate MCES solutions in polynomial time. Inspired by annealing mechanism in statistical physics~\citep{gold1996graduated}, our approach seeks to find an assignment matrix that identifies node correspondences between the two graphs, such that the number of preserved edges -- i.e., edges that exist between matched node pairs in both graphs -- is maximized. NGA tackles the aforementioned intrinsic challenges in the MCES problem by:
(\textbf{1}) constructing an \textbf{Association Common Graph} (ACG) and formulating MCES as a Quadratic Assignment Problem (QAP), which ensures the extraction of exact common subgraphs between input graphs;
(\textbf{2}) iteratively updating the learned assignments through a high-dimensional and learnable temperature parameterization;
and (\textbf{3}) optimizing this formulation via unsupervised training\footnote{Unsupervised training means that no information about the MCES itself—such as what the MCES is or the MCES size—is required during training, including in the loss function~\citep{schuetz2022combinatorial}.}, thereby eliminating the need for training data. During optimization, the learned assignments correspond to a current identified common edge subgraph that serves as a \emph{lower bound}, which is progressively and monotontically improved toward the optimal MCES. An illustrative example of such improved lower bound is provided in Figure~\ref{fig: optimization example} of the Appendix.

\textbf{Theory.} Despite the simplicity, we theoretically justify the superiority of NGA by showing: (\textbf{1}) how NGA updates around local optima; (\textbf{2}) the implicit exploration-exploitation mechanism impacts the optimization trajectory; and (\textbf{3}) the accelerating effect on the convergence. By embedding these operations within a neural context, NGA achieves strong scalability, adaptability, and seamless integration with machine learning frameworks. This theoretical underpinning lays the groundwork for a new class of algorithms capable of rethinking assignment problems from a neural perspective.

\textbf{Results \& Contribution.} To validate the efficacy of our approach, we conduct a series of challenging experiments across various settings, including large-scale MCES computation, graph similarity estimation, and structure-based graph retrieval. The results consistently demonstrate the superior efficiency and performance of our method compared to existing ones. These findings not only highlight the potential of our approach in advancing MCES computation but also suggest its applicability to diverse domains. We summarize our contributions as follows:
\begin{itemize}
    \item \textbf{A novel and strong approach.} We for the first time formulate the MCES problem via the construction of an ACG, and based on this formulation, we propose NGA -- the first neural-style algorithm to approximate the MCES solution efficiently in polynomial time without relying on exhaustive exploration of the solution space, delivering superior performance.
    \item \textbf{Training-data-free.} NGA operates in a fully unsupervised manner, eliminating the need for supervision signals. In cases where enumeration or search of the exact solution becomes computationally infeasible, our approach provides efficient approximations. 
    \item \textbf{Theoretical analysis.} We provide the first theoretical analysis on the behavior of dynamic and parameterized temperature in NGA, shedding light on its behavior at local optima, convergence properties and adaptive dynamics. 
    \item \textbf{Interpretability, scalability, and versatility.} NGA is inherently interpretable, producing explicit MCES structures. It also exhibits strong scalability, making it suitable for large-scale graph data. Furthermore, NGA can be extended to tasks such as graph similarity computation and graph retrieval, highlighting its broad applicability.
\end{itemize}

\section{Related Work}
\paragraph{Efforts on Solving MCS.} MCS has been extensively studied in recent research, either by reduction to the maximum clique problem~\citep{mccreesh2017mcsplit} or by directly identifying matching pairs within the original problem formulation. McSplit~\citep{mccreesh2017mcsplit}, which belongs to the latter category, introduces an efficient branch-and-bound algorithm leveraging node labeling and partitioning techniques to reduce memory and computational requirements during the search. Building on this, GLSearch~\citep{bai2021glsearch} employs a GNN-based Deep Q-Network to select matched node pairs instead of relying on heuristics, significantly improving search efficiency. For the MCES problem, RASCAL~\citep{raymond2002rascal} proposes a branching-based search method to solve its maximum clique formulation, incorporating several heuristics to accelerate the search process. Additionally, MCES has been formulated as an integer programming problem and solved using a branch-and-cut enumeration algorithm~\citep{bahiense2012maximum}. However, these methods suffer from high computational costs, making them less scalable for solving challenging MCS problems.

\paragraph{Efforts on Graph Similarity Computation and Graph Retrieval.} Recent efforts have explored the application of MCS in various tasks, particularly in graph similarity computation and graph retrieval. For assessing graph similarity, one established approach is the use of Graph Edit Distance (GED)~\citep{zhao2013partition,zheng2013graph}. Alternatively, MCS provides a robust method for evaluating the structural similarity of graphs, especially in molecular applications. For example, SimGNN~\citep{bai2019simgnn} integrates node-level and graph-level embeddings to compute a similarity score, while GMN~\citep{li2019gmn} introduces a novel cross-graph attention mechanism for learning graph similarity. Similarly, INFMCS~\citep{lan2024interpretable} presents an interpretable framework that implicitly infers MCS to learn graph similarity. In the domain of graph retrieval, where the goal is to locate the most relevant or similar graphs in a database based on a query graph, various model architectures have been proposed. For instance, ISONET~\citep{roy2022isonet} employs subgraph matching within an interpretable framework to calculate similarity scores. Furthermore, XMCS~\citep{roy2022maximum} proposes late and early interaction networks to infer MCS as a similarity metric, offering competitive performance in terms of both accuracy and computational speed.

\section{Preliminary}
\label{sec: preliminary}
In this section, we briefly review the background of this topic, as well as elaborate on the notations. Additional background can be found in Appendix~\ref{sec: supp preliminary}.

Let $G = (\mathcal{V}, \mathcal{E}, \mathcal{A}, \mathbf{H}, \mathbf{E})$ be an undirected and labeled graph with $n$ nodes, where $\mathcal{V}$ is the node set, $\mathcal{E}$ is the edge set, $\mathcal{A} \in \{0, 1\}^{n \times n}$ is the adjacency matrix, $\mathbf{H} \in \mathbb{R}^{n \times \cdot}$ is the node feature matrix, and $\mathbf{E} \in \mathbb{R}^{n \times n \times \cdot}$ is the edge feature matrix. In a labeled graph, nodes and edges features refer to their labels. For example, in molecular graphs, atom types and bond types are considered as labels. 

\paragraph{Maximum Common Edge Subgraph.} Two graphs, $G_1$ and $G_2$, are isomorphic if there exists a bijective  mapping between their nodes such that any two nodes in $G_1$  are connected by an edge if and only if their corresponding images in $G_2$  are also connected. A common subgraph of two graphs $G_1$ and $G_2$ is a graph $G_{12}$ that is isomorphic to a subgraph of $G_1$ and a subgraph of $G_2$. Although there are possibly many common subgraphs between two graphs, our focus will be on the MCES~\citep{bahiense2012maximum}, which is a subgraph with the maximal number of edges common to both $G_1$ and $G_2$.

\paragraph{Quadratic Assignment Problem.}  Given two graphs $G_1$ and $G_2$, the goal is to find a hard assignment matrix $\mathbf{P} \in \{0, 1\}^{n_1 \times n_2}$ that maximizes a compatibility function while adhering to row and column constraints. The optimization problem is formalized as:
\begin{equation}
\begin{array}{ll} 
    \max_{\mathbf{P}}\;\; &\mathrm{vec}(\mathbf{P})^\top \mathbf{A}\mathrm{vec}(\mathbf{P})\\
    \text { s.t. } & \mathbf{P} \in\{0,1\}^{n_1 \times n_2}, \mathbf{P} \mathbf{1}_{n_2}=\mathbf{1}_{n_1}, \mathbf{P}^{\top} \mathbf{1}_{n_1} \leq \mathbf{1}_{n_2},
\end{array}
\label{eq: QAP}
\end{equation}
where $\mathbf{A}$ is the affinity matrix derived from the structural information of $G_1$ and $G_2$ and $\mathbf{1}_{n}$ is a column vector of length $n$ whose elements are all equal to 1. A common practice is to relax $\mathbf{P}$ into a soft assignment matrix $\mathbf{S} \in [0, 1]^{n_1 \times n_2}$, allowing continuous values, and optimizes the following relaxed objective:
\begin{equation}
    \max_{\mathbf{S}}\;\; \mathrm{vec}(\mathbf{S})^\top \mathbf{A}\mathrm{vec}(\mathbf{S})
\end{equation}

\begin{figure*}[tb]
    \centering
    \includegraphics[width=1\linewidth]{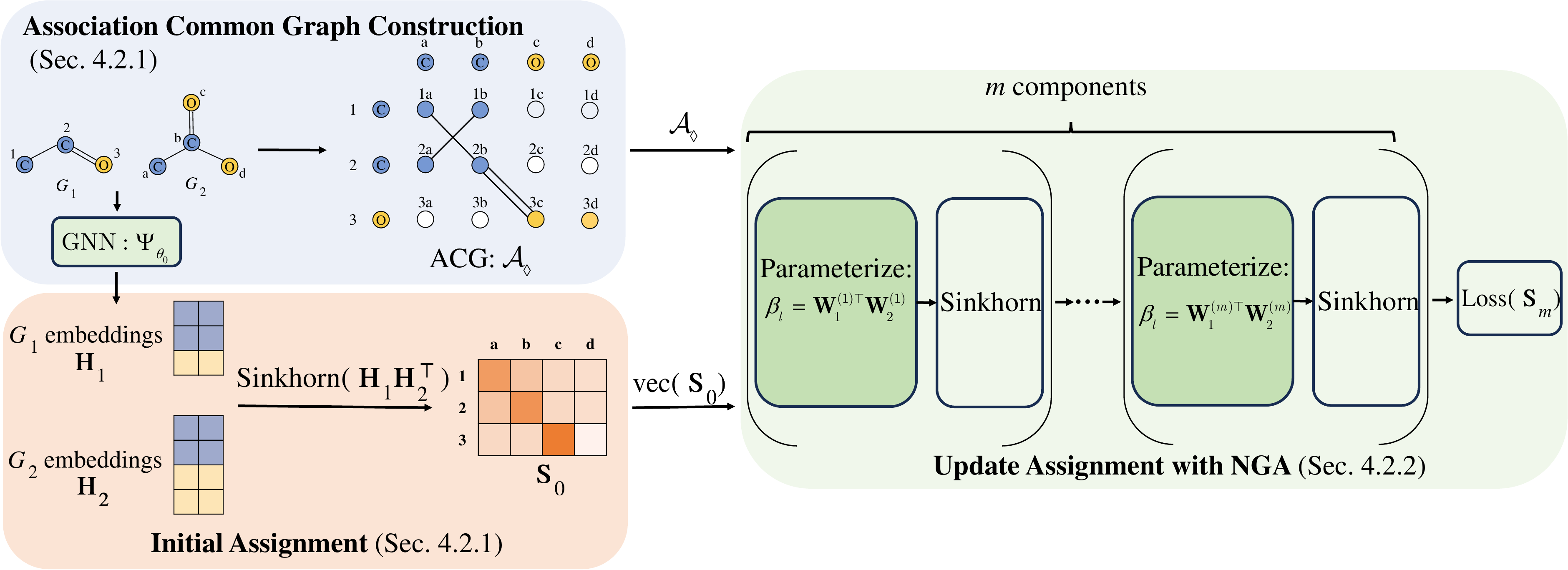}
    \caption{An overview of Neural Graduated Assignment.}
    \label{fig: framework}
\end{figure*}

\section{Methodology}
\subsection{Overview}
The MCES problem is intricately connected to graph matching, as both involve finding correspondences between the structures of two graphs. Specifically, the MCES problem aims to find the largest subgraph that is isomorphic to subgraphs in both input graphs. This can be viewed as a special case of \emph{partial graph matching}, where the matched pairs of nodes and edges constitute common subgraphs. Typically, graph matching seeks to find an assignment matrix $\mathbf{P}$ as follows~\citep{grohe2018graph,liu2023d2match,ying2025um3}:
\begin{equation}
\operatorname{argmin}_{\mathbf{P}}\left\|\mathcal{A}_1-\mathbf{P} \mathcal{A}_2 \mathbf{P}^{\top}\right\|_F^2.
\label{Eq: graph matching}
\end{equation}

However, a prominent challenge for the MCES of labeled graphs is ensuring that the corresponding nodes and edges in the common subgraphs have compatible labels, which are not considered in typical graph matching problems. One possible solution would be to add a penalty term for incompatible labels. However, directly optimizing Eq.~\eqref{Eq: graph matching} with penalty term is intractable, as it cannot guarantee that the obtained subgraph is a valid common subgraph. To this end, we present a novel approach centered around the construction of an ACG, which enables the extraction of exact common subgraphs. Building on this framework, we propose a new formulation for the MCES problem. This formulation leverages the ACG and facilitates optimization by learning an assignment between the nodes of the two input graphs, resulting in an efficient and scalable solution for MCES computation. An overview of our method is shown in Fig.~\ref{fig: framework}. In the following, Section~\ref{sec: learning the correspondences} presents the construction of the ACG and formulates the MCES problem as a QAP, Section~\ref{sec: method of NGA} describes the proposed NGA method, and Section~\ref{sec: gumbel sampling} introduces the Gumbel sampling strategy used during inference.

\subsection{Unsupervised Training of MCES}
\subsubsection{ACG for Learning the Correspondences}
\label{sec: learning the correspondences}

To efficiently explore solutions of MCES, we proposed to construct an ACG, where compatible nodes and edges can be identified. We define the ACG of $G_1$ and $G_2$ as $G_1 \Diamond G_2$. It is constructed on the node set $\mathcal{V}(G_1 \Diamond G_2) = \mathcal{V}(G_1) \times \mathcal{V}(G_2)$ where the respective node labels are compatible and two nodes $(u_i, v_i)$ and $(u_j, v_j)$ being adjacent whenever the three conditions are met together
\begin{equation}
\label{eq: association common graph}
    (u_i, u_j) \in \mathcal{E}(G_1), \  
    (v_i, v_j) \in \mathcal{E}(G_2), \
    \omega (u_i, u_j) = \omega (v_i, v_j),
\end{equation}
where $\omega (u_i, u_j) = \omega (v_i, v_j)$ indicates that the labels of nodes and edges are compatible. The design of ACG leads to a well-behaved property as follow.

\begin{restatable}{proposition}{acg}
\label{proposition:association_common_graph}
Denote the node set of $G_1 \Diamond G_2$ as $\mathcal{V}_{\Diamond} = \{(u, v)| u \in \mathcal{V}(G_1), v \in \mathcal{V}(G_2)\}$. Consider a set of subgraphs of $G_1 \Diamond G_2$ where each node of $G_1$ and $G_2$ is selected at most once, i.e. any two nodes $(u_i, v_i)$ and $(u_j, v_j)$ in this set satisfy $u_i \neq u_j$ and $v_i \neq v_j$. \textbf{Any} subgraph in this set is a valid \textit{common subgraph} of $G_1$ and $G_2$, and finding the MCES of two graphs is reduced to finding the largest subgraph in this set. 
\end{restatable}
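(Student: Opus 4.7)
The plan is to establish a bijection between the family of subgraphs described in the proposition and the set of all label-compatible common subgraphs of $G_1$ and $G_2$, and then argue the max-edge element corresponds to an MCES. The proof is essentially structural/definitional, so I would split it into a forward direction (any ACG subgraph in the set yields a common subgraph) and a converse direction (any common subgraph arises this way), then compare edge counts.

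For the forward direction, I would first take any subgraph $H$ of $G_1\Diamond G_2$ whose vertices $\{(u_k,v_k)\}$ use each node of $G_1$ and each node of $G_2$ at most once. The projection maps $\pi_1(u,v)=u$ and $\pi_2(u,v)=v$ then become injections on $V(H)$, so the assignment $\phi:\pi_1(V(H))\to\pi_2(V(H))$ defined by $\phi(u_k)=v_k$ is a well-defined bijection between node subsets of $G_1$ and $G_2$. By the membership condition for $\mathcal{V}_\Diamond$, node labels match under $\phi$. For each edge $\{(u_i,v_i),(u_j,v_j)\}\in E(H)$, the three conditions in Eq.~\eqref{eq: association common graph} give $(u_i,u_j)\in\mathcal{E}(G_1)$, $(\phi(u_i),\phi(u_j))\in\mathcal{E}(G_2)$, and equal edge labels. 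Thus the subgraph induced on $\pi_1(V(H))$ in $G_1$ (taking only those edges that come from $E(H)$) and the subgraph induced on $\pi_2(V(H))$ in $G_2$ are isomorphic via $\phi$ with compatible node and edge labels, i.e.\ a valid common subgraph of the same edge cardinality as $|E(H)|$.

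For the converse, I would start from any common subgraph $G_{12}$ of $G_1$ and $G_2$, realized by isomorphisms $\psi_1:V(G_{12})\to U_1\subseteq V(G_1)$ and $\psi_2:V(G_{12})\to U_2\subseteq V(G_2)$ preserving labels. Define $H$ with vertex set $\{(\psi_1(w),\psi_2(w)):w\in V(G_{12})\}$ and, for each edge $(w,w')\in E(G_{12})$, include the edge $\{(\psi_1(w),\psi_2(w)),(\psi_1(w'),\psi_2(w'))\}$. Injectivity of $\psi_1,\psi_2$ guarantees the ``at most once'' property, and the isomorphism together with label preservation ensures the three conditions in Eq.~\eqref{eq: association common graph} hold, so $H$ is indeed a subgraph of $G_1\Diamond G_2$ in the specified family, with $|E(H)|=|E(G_{12})|$.

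Since both directions preserve the number of edges, the maximization over the family is equivalent to the maximization over common subgraphs, proving that the MCES corresponds precisely to the largest (in edge count) element of the family. The main obstacle, though minor, will be bookkeeping: being careful that ``subgraph'' in the ACG sense is taken as an edge subset on the selected vertex subset (not the induced subgraph), so that every edge included is explicitly an ACG edge and hence a label-compatible match in both $G_1$ and $G_2$; this is what prevents spurious edges and guarantees validity of the recovered common subgraph.
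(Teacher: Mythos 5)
Your proposal is correct and follows essentially the same route as the paper's proof: both project an ACG subgraph's vertices and edges back to $G_1$ and $G_2$ to obtain an edge-count-preserving common subgraph, and both establish the converse (the paper phrases it as surjectivity of its map $f$) so that maximizing edges over the constrained ACG subgraphs is equivalent to finding the MCES. Your version is somewhat more explicit about the converse direction and about taking edge subsets rather than induced subgraphs, but the underlying argument is the same.
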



Since only the common edges are included in the ACG, Proposition~\ref{proposition:association_common_graph} shows that we can always extract common subgraphs from the constructed ACG. Besides, the use of ACG allows any assignment to be mapped to a common subgraph, thereby enhancing interpretability. Formal proof of Proposition~\ref{proposition:association_common_graph} can be found in Appendix~\ref{appendix: association common graph explanation}. Considering the adjacency matrix $\mathcal{A}_{\Diamond}$ of $G_1 \Diamond G_2$ as the affinity matrix, we can then give the QAP formulation of objective $J$ for MCES as:
\begin{equation}
    \label{eq: loss}
    J(\mathbf{S})=\mathrm{vec}(\mathbf{S})^\top\mathcal{A}_{\Diamond}\mathrm{vec}(\mathbf{S}).
\end{equation}
We model the initial node-to-node correspondence in close analogy to related approahches~\citep{bai2019simgnn, consensus} by computing the node similarities of input graphs $G_1$ and $G_2$. Specifically, given latent node embeddings $\mathbf{H}_1 = \mathbf{\Psi}_{\theta _0}(\mathcal{A}_1, \mathbf{X}_1, \mathbf{E}_1)$ and $\mathbf{H}_2 = \mathbf{\Psi}_{\theta _0}(\mathcal{A}_2, \mathbf{X}_2, \mathbf{E}_2)$ computed by a shared neural network $\mathbf{\Psi}_{\theta _0}$, the initial soft correspondences are obtained as
\begin{equation}
    \hat{\mathbf{S}}^{(0)}=\mathbf{H}_1 \mathbf{H}_2^{\top} \in \mathbb{R}^{n_1 \times n_2}, \qquad
    \mathbf{S}^{(0)} = \mathrm{Sinkhorn} (\hat{\mathbf{S}}_0) \in [0, 1]^{n_1 \times n_2},
    \label{Eq: sinkhorn}
\end{equation}
where $\mathrm{Sinkhorn}(\cdot)$ is a normalization operator to obtain double-stochastic assignment matrices~\citep{sinkhorn1967concerning}. In our implementation, $\mathbf{\Psi}_{\theta _0}$ is a GNN to obtain permutation equivariant node representations~\citep{hamilton2017representation}.

\subsubsection{Neural Graduated Assignment}
\label{sec: method of NGA}
Since many nodes have the same labels in molecular graphs, the initial assignment is a soft probability distribution where many entries have moderate probabilities, which leaves significant ambiguity in the assignments. Traditional assignment-based optimization methods often rely on a predefined temperature parameter $\beta$ to iteratively refine solutions by controlling the scale of probabilities or soft assignments. However, using a fixed or manually scheduled temperature introduces two key limitations: 
 \textbf{(1) Manual Tuning Overhead}: Selecting an optimal schedule or fixed value for $\beta$ often requires exhaustive tuning, depending on the specific task or dataset.
\textbf{(2) Limited Adaptability:} A fixed or scheduled $\beta$ is static and cannot dynamically adapt to the complexity of individual problems or local structures within data.

To address these limitations, we propose NGA, an end-to-end trainable framework in which we use learnable parameters as the temperature schedule. The NGA architecture is summarized in Alg.~\ref{alg:NGA}, and the overall training and inference framework are summarized in Alg.~\ref{alg: training and inference}. NGA maintains the iterative refinement process. The key innovation lies in replacing the scalar temperature parameter with a parameterized form:
$\beta_l = \mathbf{W}_1^{(l)\top}\mathbf{W}_2^{(l)}$ at the $l$th iteration layer, where $\mathbf{W}_1^{(l)} , \mathbf{W}_2^{(l)} \in \mathbb{R}^{d \times 1}$ are learnable weights. By allowing the model to optimize temperature directly as part of the learning process, our method adapts dynamically to the structure of the problem and the stage of optimization, eliminating the dependence on manual tuning while achieving superior performance.

\vspace{-6mm}
\begin{minipage}[t]{0.48\linewidth}
\begin{algorithm}[H]
\captionof{algorithm}{NGA Architecture}\label{alg:NGA}
\begin{algorithmic}[1]
\STATE {\bfseries Input:} The adjacency matrix of ACG $\mathcal{A}_{\Diamond}$, inital assignment matrix $\mathbf{S}^{(0)}$, number of iterations $m$, learnable parameters $\mathbf{W}_1^{(l)}$, $\mathbf{W}_2^{(l)}$ for $l\in\{1,...,m\}$
\STATE {\bfseries Output:} refined assignment matrix $\mathbf{S}$
\FOR{$l=1$ {\bfseries to} $m$}
    \STATE $\mathrm{vec}(\mathbf{S}^{(l)})\leftarrow\mathcal{A}_{\Diamond}\mathrm{vec}(\mathbf{S}^{(l-1)})$ 
    \STATE $\mathbf{S}^{(l)}\leftarrow\exp((\mathbf{W}_1^{(l)\top}\mathbf{W}_2^{(l)})\mathbf{S}^{(l)})$
    \STATE $\mathbf{S}^{(l)} = \mathrm{Sinkhorn} ({\mathbf{S}^{(l)}})$
\ENDFOR
\STATE Output $\mathbf{S} = \mathbf{S}^{(m)}$
\end{algorithmic}
\end{algorithm}
\end{minipage}
\hfill
\begin{minipage}[t]{0.48\linewidth}
\begin{algorithm}[H]
\captionof{algorithm}{Model Training and Inference}
\label{alg: training and inference}
\begin{algorithmic}[1]
    \STATE {\bfseries Input:} Molecular graphs $G_1$ and $G_2$
    \STATE {\bfseries Output:} MCES of $G_1$ and $G_2$
    \STATE Build ACG $G_1 \Diamond G_2$
    \STATE // Training Stage
    \FOR{$\mathrm{epoch} = 1, 2, 3, ...$ }
        \STATE Init assignment $\mathbf{S}^{(0)}$ with Eq. (\ref{Eq: sinkhorn})
        \STATE Refine $\mathbf{S}^{(0)}$ to $\mathbf{S}$ with Alg.~\ref{alg:NGA}
        \STATE Backpropagation w.r.t. loss in Eq.~\eqref{eq: loss}
    \ENDFOR

    \STATE // Inference Stage
    \STATE Decode $\mathbf{P}$ with $\mathbf{P} = \mathrm{Hungarian}(\mathbf{S})$
    \STATE Get MCES w.r.t. $\mathbf{P}$ via Eq.~\eqref{Eq: pred}
\end{algorithmic}
\end{algorithm}
\end{minipage}

\subsubsection{Gumbel Sampling for Optimization}
\label{sec: gumbel sampling}
Since $\mathbf{S}$ is a relaxed from $\mathbf{P}$, the Hungarian algorithm~\citep{burkard2012assignment} is commonly employed as a deterministic post-processing step to bridge the gap between them, i.e. $\mathbf{P} = \mathrm{Hungarian}(\mathbf{S})$. The adjacency matrix of predicted common subgraph $\mathcal{A}{\textsubscript{pred}}$ is obtained by:
\begin{equation}
    \mathcal{A}{\textsubscript{pred}} = \left(\mathrm{vec}(\mathbf{P})\mathrm{vec}(\mathbf{P})^{\top}\right) \odot \mathcal{A}_{\Diamond},
    \label{Eq: pred}
\end{equation}
where $\odot$ is the Hadamard product of matrices.

From a probabilistic perspective, $\mathbf{S}$ represents a latent distribution of assignment matrices. Assignment with the highest probability is selected by Hungarian algorithm. However, there may be better solutions within the distribution, especially when the quality of solutions can be easily assessed. Therefore, we switch to Gumbel-Sinkhorn~\citep{mena2018gumbel} by substituting Eq.~\eqref{Eq: sinkhorn} with
\begin{equation}
    \mathbf{S}^{(0)} = \mathrm{Sinkhorn} (\hat{\mathbf{S}}^{(0)} + g),
    \label{Eq: gumbel sinkhorn}
\end{equation}
where $g$ is sampled from a standard Gumbel distribution with the cumulative distribution function. The Gumbel term models the distribution of extreme values derived from another distribution. With Eq.~\eqref{Eq: gumbel sinkhorn}, we can sample repeatedly \emph{a batch of assignment matrices} from the original distribution in a differentiable way. This property actively benefits solution space exploration and makes it easier to find the optimal solution. These sampled assignment matrices are discretized by Hungarian algorithm and evaluated by Eq.~\eqref{Eq: pred}. The best-performing solution among them is chosen as the ﬁnal solution. The balance between exploration and speed can be adjusted by tuning the number of Gumbel samples.

\section{Theoretical Analysis}
In this section, we investigate the behavior of NGA from theoretical aspects, particularly answering subsequent essential questions: \textbf{1}) How NGA escapes from local optima in Theorem \ref{thm:localoptbehave}; \textbf{2}) Why NGA demonstrates better convergence and performance over standard GA in Proposition \ref{prop:better_conv}. Upon these theoretical findings, we argue that NGA offers an adaptive exploration-exploitation mechanism, leading to well-behaved learning dynamics.

\begin{restatable}{theorem}{localoptbehave}
\label{thm:localoptbehave}
Given a local optimum $\mathbf{S}^{(l)}$, the change $\Delta J = J(\mathbf{S}^{(l+1)})-J(\mathbf{S}^{(l)})$ satisfies
\begin{equation}
    \Delta J =  \frac{1}{2}(\sum_i \lambda_i - \sum_j \mu_j) + \beta_l \cdot \mathrm{Var}(h) + O(|\beta_l|^2),
\end{equation}
where $h_{ij} = [\mathcal{A}_{\Diamond}\mathrm{vec}(\mathbf{S}^{(l)})]_{ij}$, $\mathrm{Var}(h)$ is the variance, $J(\mathbf{S}^{(l)})=\mathrm{vec}(\mathbf{S}^{(l)})^\top\mathcal{A}_{\Diamond}\mathrm{vec}(\mathbf{S}^{(l)})$ is the objective, $\lambda_i$ and $\mu_j$ are the Lagrange multipliers.
\end{restatable}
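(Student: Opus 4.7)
The plan is to Taylor-expand the NGA update $\mathbf{S}^{(l+1)}=\mathrm{Sinkhorn}(\exp(\beta_l h))$ with $h=\mathcal{A}_{\Diamond}\mathrm{vec}(\mathbf{S}^{(l)})$ around $\beta_l=0$, and then read off the change $\Delta J = J(\mathbf{S}^{(l+1)})-J(\mathbf{S}^{(l)})$ order by order, exploiting the KKT characterization of $\mathbf{S}^{(l)}$ as a local optimum of the relaxed QAP in Eq.~\eqref{eq: QAP}. The key structural observation is that the linearization of Sinkhorn at the all-ones input is the row–column centering projector, so every $\beta_l$-dependent contribution to $\Delta J$ will naturally organize into (co)variances of $h$.

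First I would set up the Lagrangian of the relaxed QAP with the row-equality and column-inequality constraints, introducing $\lambda_i$ (row multipliers) and $\mu_j$ (column multipliers). Stationarity at $\mathbf{S}^{(l)}$ yields the KKT identity relating $h_{ij}$ to $\lambda_i$ and $\mu_j$ on the support of $\mathbf{S}^{(l)}$, together with the slackness condition on the inactive column constraints; these relations will collapse the relevant inner products. Second, I would write the entrywise expansion $\exp(\beta_l h)=\mathbf{1}\mathbf{1}^{\top}+\beta_l h+\tfrac{\beta_l^2}{2}(h\odot h)+O(\beta_l^3)$, and propagate it through Sinkhorn by implicitly differentiating its fixed-point equations at the uniform matrix $\tfrac1n\mathbf{1}\mathbf{1}^{\top}$. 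The resulting Jacobian is the centering projector $\Pi(h)_{ij}=h_{ij}-\bar h_{i\cdot}-\bar h_{\cdot j}+\bar h$, giving the linearized update $\mathbf{S}^{(l+1)}_{ij}=\tfrac1n+\tfrac{\beta_l}{n}\,\Pi(h)_{ij}+O(\beta_l^2)$.

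Third, I would substitute into $\Delta J=2\langle h,\Delta\mathbf{S}\rangle+\langle \Delta\mathbf{S},\mathcal{A}_{\Diamond}\Delta\mathbf{S}\rangle$. The zeroth-order piece ($\beta_l=0$) has $\Delta\mathbf{S}=\tfrac1n\mathbf{1}\mathbf{1}^{\top}-\mathbf{S}^{(l)}$; replacing $h_{ij}$ via the KKT identity and simplifying against the row-sum and column-sum structure of $\mathbf{S}^{(l)}$ yields the constant $\tfrac12(\sum_i\lambda_i-\sum_j\mu_j)$, the sign asymmetry tracing back to the row-equality versus column-inequality formulation. The first-order piece uses $\Pi(h)$: a direct computation gives $2\langle h,\Pi(h)\rangle/n=2\,\mathrm{Var}(h)$, since centering annihilates the row- and column-mean components of $h$ and leaves only the centered second moment; combined with the $\tfrac12$ in the bilinear expansion this produces the stated $\beta_l\cdot\mathrm{Var}(h)$ term. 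All remaining contributions — higher-order pieces of the exp–Sinkhorn expansion and the $\langle \Delta\mathbf{S},\mathcal{A}_{\Diamond}\Delta\mathbf{S}\rangle$ term evaluated beyond leading order — are absorbed into the $O(|\beta_l|^2)$ remainder by the standard Taylor estimate.

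The main obstacle will be the Sinkhorn perturbation analysis: Sinkhorn is defined only implicitly via alternating normalizations, so its Jacobian at the uniform matrix must be derived by differentiating through the fixed-point equations and verifying that the closed form is precisely the centering projector $\Pi$. A secondary subtlety is tracking the sign conventions of $\lambda_i$ and $\mu_j$ together with the exact support and slackness pattern of $\mathbf{S}^{(l)}$, so that the zeroth-order residual emerges as the signed difference $\sum_i\lambda_i-\sum_j\mu_j$ rather than as a sum of both. Once the Jacobian calculation and the KKT bookkeeping are in place, the rest is algebraic.
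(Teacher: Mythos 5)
Your overall strategy is the same as the paper's: perturb the $\exp$–Sinkhorn step around the uniform matrix for small $\beta_l$ (the paper does this through the Sinkhorn potentials and the implicit function theorem in its Lemma in Appendix E), then expand $\Delta J$, use the KKT identity $2h_{ij}=\lambda_i+\mu_j$ on the support plus the marginal constraints to extract the multiplier constant, and read the first-order piece off as a variance. The genuine gap is in your first-order term. Your linearization of Sinkhorn is the double-centering projector $\Pi(h)_{ij}=h_{ij}-\bar{h}_{i\cdot}-\bar{h}_{\cdot j}+\bar{h}$, but the identity you then invoke, $\langle h,\Pi(h)\rangle\propto\mathrm{Var}(h)$, is false in general: since $\Pi$ is an orthogonal projector, $\langle h,\Pi(h)\rangle=\|\Pi(h)\|^2$, the second moment of the \emph{doubly centered} residuals, which is strictly smaller than $\sum_{ij}(h_{ij}-\bar{h})^2=n^2\,\mathrm{Var}(h)$ whenever the row or column means of $h$ differ from the grand mean. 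Worse, the very KKT relation you use in the zeroth-order step makes $h_{ij}=(\lambda_i+\mu_j)/2$ additively separable on the support, and double centering annihilates exactly such separable fields, so your route, carried out consistently, drives the first-order coefficient toward zero rather than toward $\mathrm{Var}(h)$. The paper instead establishes (as its Lemma) the expansion $\mathbf{S}^{(l+1)}_{ij}=\frac{1}{n}\bigl[1+\beta_l(h_{ij}-\bar{h})+O(|\beta_l|^2)\bigr]$, i.e.\ centering only by the grand mean, and it is this form, paired with $\sum_{ij}(h_{ij}-\bar{h})h_{ij}=n^2\,\mathrm{Var}(h)$, that produces the stated $\beta_l\cdot\mathrm{Var}(h)$ term; you would need to either reproduce that lemma or reconcile your projector-based Jacobian with it before the theorem as stated follows.

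A secondary bookkeeping mismatch: you expand $\Delta J = 2\,\mathrm{vec}(\Delta\mathbf{S})^{\top}\mathcal{A}_{\Diamond}\mathrm{vec}(\mathbf{S}^{(l)})+\mathrm{vec}(\Delta\mathbf{S})^{\top}\mathcal{A}_{\Diamond}\mathrm{vec}(\Delta\mathbf{S})$, whereas the paper's proof works with coefficients $1$ and $\tfrac12$; with your coefficients the zeroth-order residual comes out as $\sum_i\lambda_i-\sum_j\mu_j$ (no $\tfrac12$) and the variance coefficient doubles, so even after the projector issue is resolved your constants would not match the statement without adopting the paper's normalization of the quadratic expansion.
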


\begin{restatable}{proposition}{convergence}\label{prop:better_conv}
    Under typical gradient descent conditions where the gradient pushes $\beta_l$ in a consistent direction for multiple iterations, the product parameterization $\beta_l = \mathbf{W}_1^{\top} \mathbf{W}_2$ can adaptively adjust the learning rate and induces an accelerating effect on the magnitude of updates to $\beta_l$. 
\end{restatable}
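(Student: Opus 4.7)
The plan is to express one step of gradient descent on the overparameterization $\beta_l = \mathbf{W}_1^{(l)\top}\mathbf{W}_2^{(l)}$ in terms of the scalar loss gradient $g_l := \partial J/\partial \beta_l$, and then compare the resulting update on $\beta_l$ with what a single scalar parameter would receive. By the chain rule, $\partial J/\partial \mathbf{W}_1 = g_l\,\mathbf{W}_2$ and $\partial J/\partial \mathbf{W}_2 = g_l\,\mathbf{W}_1$ (layer index suppressed), so for learning rate $\eta$ the joint step $\mathbf{W}_k \leftarrow \mathbf{W}_k - \eta g_l \mathbf{W}_{3-k}$ gives, after substitution,
\begin{equation*}
\beta_l^{\mathrm{new}} \;=\; \beta_l \;-\; \eta\, g_l\bigl(\|\mathbf{W}_1\|^2+\|\mathbf{W}_2\|^2\bigr) \;+\; \eta^{2}\, g_l^{2}\, \beta_l .
\end{equation*}
To first order in $\eta$, the effective learning rate experienced by $\beta_l$ is therefore $\eta^{(l)}_{\mathrm{eff}} = \eta\bigl(\|\mathbf{W}_1\|^2+\|\mathbf{W}_2\|^2\bigr)$, which is iteration-dependent rather than constant; this already establishes the adaptivity half of the claim.

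Next I would track the joint evolution of the norms to get the acceleration half. The same substitution gives
\begin{equation*}
\|\mathbf{W}_k^{\mathrm{new}}\|^2 \;=\; \|\mathbf{W}_k\|^2 \;-\; 2\eta\, g_l\, \beta_l \;+\; \eta^{2} g_l^{2}\,\|\mathbf{W}_{3-k}\|^2, \qquad k\in\{1,2\}.
\end{equation*}
Whenever $g_l\beta_l < 0$ --- i.e.\ the gradient drives $\beta_l$ further from zero in whatever direction it already points --- both squared norms strictly increase to first order in $\eta$. A short induction over consecutive iterations on which $\mathrm{sign}(g_l\beta_l)$ is preserved then yields that $\eta^{(l)}_{\mathrm{eff}}$ is monotonically non-decreasing in $l$, so the magnitude $|\beta_l^{\mathrm{new}}-\beta_l|\approx \eta^{(l)}_{\mathrm{eff}}\,|g_l|$ of successive updates compounds, which is exactly the accelerating effect. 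The cleanest presentation is in continuous time: the flow $\dot{\mathbf{W}}_1 = -g\mathbf{W}_2,\; \dot{\mathbf{W}}_2 = -g\mathbf{W}_1$ conserves $\|\mathbf{W}_1\|^2-\|\mathbf{W}_2\|^2$ and gives $\dot{\beta}_l = -g_l\bigl(\|\mathbf{W}_1\|^2+\|\mathbf{W}_2\|^2\bigr)$, an autonomous coupled system in which $|\beta_l|$ grows exponentially whenever $g_l$ retains its sign --- the classical ``implicit acceleration by over-parameterization'' phenomenon.

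The main obstacle will be keeping three logically distinct statements separate without overreaching: (i) adaptivity of the effective rate; (ii) monotone growth of that rate under consistent-sign gradients; (iii) genuinely faster cumulative progress of $\beta_l$ than a scalar baseline. Points (i) and (ii) fall out of the algebra above, but (iii) requires restricting attention to a finite window of $T$ iterations on which $\mathrm{sign}(g_l)$ is stable, choosing $\eta$ small enough that the $O(\eta^2)$ remainder is dominated by the first-order term, and then lower-bounding $\bigl|\beta_l^{(T)}-\beta_l^{(0)}\bigr|$ by a telescoping sum $\sum_{l} \eta^{(l)}_{\mathrm{eff}}|g_l|$ which, by (ii), strictly exceeds the baseline displacement $T\eta|g_l|$ produced by treating $\beta_l$ as a free scalar. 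A secondary subtlety is that $\mathbf{W}_1,\mathbf{W}_2\in\mathbb{R}^{d}$ rather than $\mathbb{R}$, but since each update lies in $\mathrm{span}(\mathbf{W}_1,\mathbf{W}_2)$ the whole trajectory stays in this two-dimensional subspace, and only the three scalars $\beta_l$, $\|\mathbf{W}_1\|^2$, $\|\mathbf{W}_2\|^2$ are needed to close the argument.
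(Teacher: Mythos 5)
Your proposal is correct and its first half is essentially identical to the paper's argument: the same chain-rule computation yields the one-step identity $\beta_l^{\mathrm{new}} = \beta_l - \eta g_l\bigl(\|\mathbf{W}_1\|^2+\|\mathbf{W}_2\|^2\bigr) + \eta^2 g_l^2 \beta_l$, from which the adaptive effective learning rate $\eta\bigl(\|\mathbf{W}_1\|^2+\|\mathbf{W}_2\|^2\bigr)$ is read off exactly as in the paper. Where you diverge is in how the acceleration half is established: the paper fixes a constant gradient magnitude $|g|$ over the window, solves the coupled linear recursion in closed form, obtaining $\mathbf{W}_1(k),\mathbf{W}_2(k)$ as combinations of $(1+\eta|g|)^k$ and $(1-\eta|g|)^k$, so that the first-order term grows exponentially in $k$ and the comparison with the scalar baseline's constant step $\eta|g|$ is immediate; you instead track the norm dynamics $\|\mathbf{W}_k^{\mathrm{new}}\|^2 = \|\mathbf{W}_k\|^2 - 2\eta g_l\beta_l + \eta^2 g_l^2\|\mathbf{W}_{3-k}\|^2$ and run an induction, supplemented by the gradient-flow picture with the conserved quantity $\|\mathbf{W}_1\|^2-\|\mathbf{W}_2\|^2$. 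Both routes buy the same conclusion; the paper's closed form is more explicit and handles the transient automatically (the growing mode dominates regardless of where $\beta_l$ starts), whereas your discrete induction needs the sign condition $g_l\beta_l<0$ to persist, which under "consistent $\mathrm{sign}(g_l)$" only holds after $\beta_l$ has moved to the side opposite $g_l$ — during the phase where $\beta_l$ is being driven toward zero the norms shrink to first order and your monotone-nondecreasing claim for the effective rate fails, so you would have to start the induction after that transient (your continuous-time hyperbolic-flow argument covers this correctly, since the growing mode $e^{2|g|t}$ eventually dominates). Your added telescoping comparison against the scalar baseline over a finite window is a reasonable way to make precise the informal "accelerating effect" that the paper asserts directly from the exponential growth of the first-order term.
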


The proof of Theorem~\ref{thm:localoptbehave} can be found in Appendix~\ref{app:local_opt}. This theorem quantifies the objective change around a local optima w.r.t. $\beta_l$. In general, $\Delta J$ has an additional constant term related to Lagrange multipliers, but still decreases through the variance term when $\beta_l<0$. This demonstrates that NGA's update is capable of escaping local optimum under mild conditions. 

Another contributing factor to the effectiveness of NGA lies in the gradient update aspect. As shown in Proposition~\ref{prop:better_conv}, the product parameterization $\beta_l=\mathbf{W}_1^{(l)\top}\mathbf{W}_2^{(l)}$ can adaptively adjust the learning rate and leads to faster convergence than scalar parameterization. This makes the training process more stable and efficient. This adaptive behavior is particularly beneficial when addressing complex, high-dimensional data. Detailed theoretical and empirical validations can be found in Appendix~\ref{appendix: faster convergence proof}.

We then analyze how switching $\beta_l$ positive and negative interchangeably can help the optimization.
To this end, we make the following definition:
\begin{definition}
In Alg.~\ref{alg:NGA}, let $\beta_l=\mathbf{W}_1^{(l)\top}\mathbf{W}_2^{(l)}$, then we define two phases according to the value of $\beta_l$: 1. Exploration phase: $\mathcal{E} = \{l | \beta_l < 0\}$; 2. Exploitation phase: $\mathcal{S} = \{l | \beta_l > 0\}$.
\end{definition}
Appendix~\ref{appendix: convergence} provides a proof explaining why the sign of $\beta_l$ represents these two processes, respectively. 
Imposing a negative and learnable $\beta_l$ in NGA grants the model the ability to balance exploration and exploitation. 
By allowing the two phases, NGA effectively incorporates exploration and exploitation into the optimization, avoiding premature convergence and enabling the identification of better solutions. 

\section{Experiments}
We evaluate the performance of our proposed method on MCES problems by comparing it with several baselines across three widely used molecular datasets, covering diverse graph structures. Our results show that NGA significantly outperforms search-based methods in computational efficiency, achieving comparable or better accuracy while reducing runtime by several orders of magnitude. This allows our method to scale to larger graphs that are infeasible for traditional search, highlighting its effectiveness and scalability in real-world applications.

Beyond solving the MCES problem, we further examine its practical impact on two related tasks: graph similarity computation and graph retrieval. By effectively solving MCES, our method achieves strong performance in these tasks, offering a reliable basis for similarity measurement and retrieval. This confirms that NGA not only addresses MCES effectively but also benefits related applications. 

\subsection{Experimental Setup}
\label{sec: experimental setup}
\paragraph{Baselines.} The MCES problem is NP-complete~\citep{garey1979computers, raymond2002rascal}, meaning no known polynomial-time solutions exist. While search-based algorithms can find exact solutions with enough computational resources, they quickly become infeasible for large graphs. Non-search-based methods aim to approximate solutions efficiently without exhaustive search. We compare our method with search-based RASCAL~\citep{raymond2002rascal} and Mcsplit~\citep{mccreesh2017mcsplit}, and use Graduated Assignment (GA)~\citep{rangarajan1996convergence, gold1996graduated} and Gurobi solver as baselines since we can formulate MCES as a QAP. Similar to these solvers, our NGA tries to solve each MCES instance case by case. We also adapt supervised learning method NGM~\citep{wang2021ngm} and unsupervised learning method GANN-GM~\citep{wang2023unsupervised} for comparison. For the graph similarity and retrieval tasks, we compare with state-of-the-art baselines: NeuroMatch~\citep{lou2020neuromatch}, SimGNN~\citep{bai2019simgnn}, GMN~\citep{li2019gmn}, XMCS~\citep{roy2022maximum}, and INFMCS~\citep{lan2024interpretable}. Appendix~\ref{appendix: baseline} details baseline choices and configurations.

For fair comparisons, supervised learning approaches are trained for a total of 200 epochs. This duration is selected to provide sufficient learning time for the models to converge, ensuring robust evaluation of their performance. For unsupervised learning and search methods, we follow previous works~\citep{bai2021glsearch} to set a practical time budget of 60 seconds for each instance. This constraint reflects real-world scenarios where computation time is limited and must be optimized for efficiency. While search methods are capable of identifying exact solutions given unlimited computational resources, this assumption is impractical for handling large graph pairs in real-world applications.

\paragraph{Datasets.} To evaluate our model across diverse domains, we use several graph datasets commonly employed in graph-related tasks: AIDS and MCF-7 from TU Dataset~\citep{morris2020tudataset}, and MOLHIV from OGB~\citep{hu2020open}, covering various molecular structures. Ground truth is obtained using exact solvers~\citep{raymond2002rascal}, with solution time and dataset statistics in Appendix~\ref{appendix: datasets}. While exact MCES solutions are feasible for small graphs, they become intractable for large ones. Unlike prior works that limit graphs to 15 nodes~\citep{bai2019simgnn,lan2024interpretable}, we focus on harder instances by selecting molecules with over 30 atoms, and randomly sample 1000 graph pairs per dataset for MCES and similarity evaluation. For graph retrieval, we use 100 query and 100 target graphs, yielding 10,000 query-target pairs. This setup enables a realistic assessment of scalability and efficiency.

\paragraph{Evaluation.} For MCES and graph similarity tasks, we split the dataset into 80\% training, 10\% validation, and 10\% testing. For graph retrieval, this split is applied to the query graphs. Supervised methods are trained on the training set, with hyperparameters tuned on the validation set and evaluated on the test set. Other methods are directly evaluated on the test set. Since all methods yield common subgraphs, MCES accuracy is measured by the percentage of the common graph size, defined as:
\begin{equation}
    Acc = \frac{|\mathcal{E}\left(\mathrm{CS}(G_1, G_2)\right)|}{|\mathcal{E}\left(\mathrm{MCES}(G_1, G_2)\right)|}.
\end{equation}
As for the graph similarity computation, the Johnson similarity~\citep{johnson1985relating} is calculated as 
\begin{equation}
\mathrm{sim}(G_1, G_2) = \frac{\left(\left|\mathcal{V}\left(G_{12}\right)\right|+\left|\mathcal{E}\left(G_{12}\right)\right|\right)^2}{\left(\left|\mathcal{V}\left(G_1\right)\right|+\left|\mathcal{E}\left(G_1\right)\right|\right) \cdot\left(\left|\mathcal{V}\left(G_2\right)\right|+\left|\mathcal{E}\left(G_2\right)\right|\right)},
\end{equation}
where $G_{12}$ is the MCES between graphs $G_1$ and $G_2$, and we use Root Mean Square Error (RMSE) as the evaluation metric. For graph retrieval, we evaluated all models using three metrics: Mean Reciprocal Rank (MRR), Precision at 10 (P@10), and Mean Average Precision (MAP). These metrics collectively provide insights into the retrieval capability from various perspectives.

\subsection{Implementation Details}
\label{sec: implementation details}
\paragraph{Hyperparameters.} 
In our method, we use two neural networks: the first $\mathbf{\Psi}_{\theta _0}$ is an 8-layer Graph Convolutional Network (GCN)~\citep{kipf2016semi}, which utilizes 32 feature channels. In our NGA model, the number of iterations $m$ is set to 4 and the hidden dimension of learnable weights is set to 32. For the Sinkhorn layer, we set the number of iterations to 20 to allow for sufficient optimization. The GumbelSinkhorn layer is configured with 10 times sampling. The model is trained using the Adam ~\citep{kingma2014adam} optimizer with a learning rate of 0.001. These hyperparameter settings were chosen to balance model performance and computational efficiency, allowing for effective training and convergence across the evaluated tasks. 

\paragraph{Training Details of NGA. }
The training and inference processes are summarized in Alg.~\ref{alg: training and inference}. Similar to conventional MCES solvers~\citep{mccreesh2017mcsplit, raymond2002rascal}, our method addresses one pair of molecular graphs from the test set at a time. During the training stage, our method learns to optimize the objective in Eq.~\eqref{eq: loss}. In the inference stage, the assignment matrix $\mathbf{S}$ learned at the training stage is transformed into an assignment matrix $\mathbf{P}$, which is then used to compute the MCES result with Eq.~\eqref{Eq: pred}. Specifically, when Gumbel sampling is employed for optimization, Eq.~\eqref{Eq: sinkhorn} in line 6 is replaced with Eq.~\eqref{Eq: gumbel sinkhorn}. Under this setup, in lines 11-12, $M$ assignment matrices are generated using $M$ sampled Gumbel noises, leading to $M$ predicted MCES results. These results are evaluated using Eq.~\eqref{eq: loss}, and the result with the best objective value is retained as the final output.

\paragraph{Model Configuration.}
Our implementation of NGA and the baseline methods was conducted using Pytorch and PyG~\citep{paszke2017automatic, fey2019pyg}. Most baseline methods were adapted from their official source code repositories to ensure consistency with their original implementations. However, for INFMCS, no official source code was publicly available, so we implemented this method from scratch, closely following the algorithmic details and hyperparameter settings described in the original paper~\citep{lan2024interpretable}. Our implementation was rigorously validated against the reported results in the literature to ensure correctness. Given the molecular nature of our dataset, we utilized the atom encoder and bond encoder from the Open Graph Benchmark  library\footnote{\url{https://github.com/snap-stanford/ogb}} for feature encoding in all models employing GNN as their backbone. This ensured a fair and consistent representation of molecular data across all GNN-based baselines and our proposed method. The experiments are conducted using an AMD EPYC 7542 CPU and a single NVIDIA 3090 GPU.

\subsection{Solving MCES Problems}
The key property of NGA is its ability to approximate  solutions in polynominal time complexity. As shown in Table~\ref{tab:mces}, our model outperforms baselines on all datasets. Our results are very close to optimal in all datasets, which shows that NGA can effectively approximate solutions within limited time. We have additionally analyzed how often the MCES found by NGA matches the optimal solution in Appendix~\ref{appendix: percentage of optimal}. The supervised graph matching method does not perform well, possibly because the MCES problem is inherently multi-modal, meaning it has multiple equally optimal solutions. It is thus difficult for supervised learning method to learn some patterns from multiple ground truth node matching signals. The search method, on the other hand, suffers from scalability issues due to the NP-complete nature of the MCES problem. In the following part, we provide several case studies that investigate the effects of increasing the time budget on performance.

We selected several cases for detailed analysis. The largest MCES solution found thus far as time grows is illustrated in Figure~\ref{fig:best solution so far} (in Appendix~\ref{appendix: visualization results}). The visualizations of the MCES structures identified by our NGA method and the second-best method, RASCAL, are shown in Figure~\ref{fig. visualization} and Figure~\ref{fig. visualization rascal} (in Appendix~\ref{appendix: visualization results}). From these figures, it is evident that our method can find better solutions in less time, demonstrating its superiority over other competitive baselines.

\begin{figure*}[tbp]\huge
    \centering
    \resizebox{\textwidth}{!}
    {
    \begin{tabular}{cccc}
     & AIDS & MOLHIV & MCF-7\\
    \rotatebox{90}{\qquad \qquad Molecule 1} & \includegraphics[width=\linewidth]{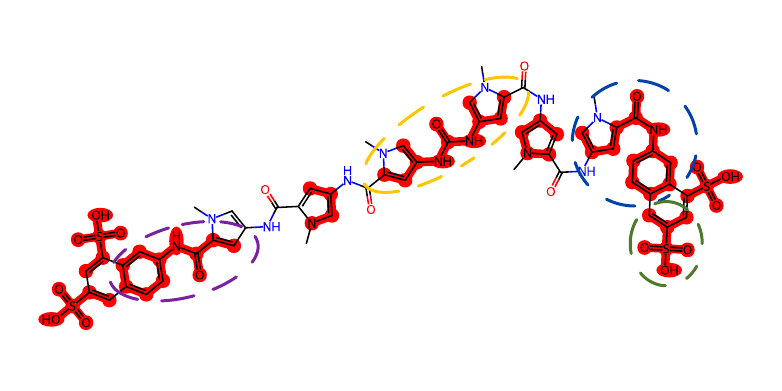} & \includegraphics[width=\linewidth]{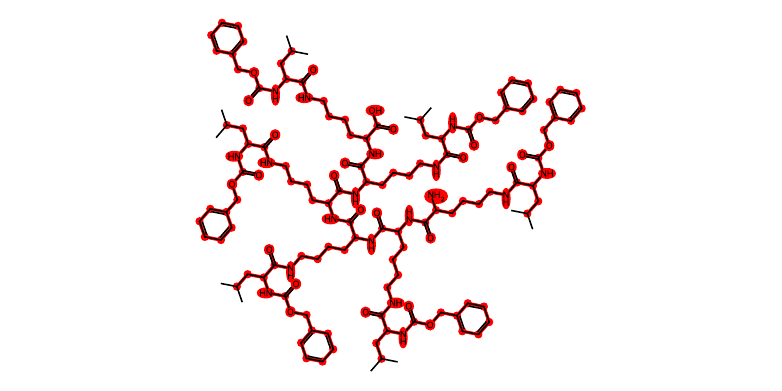} & \includegraphics[width=\linewidth]{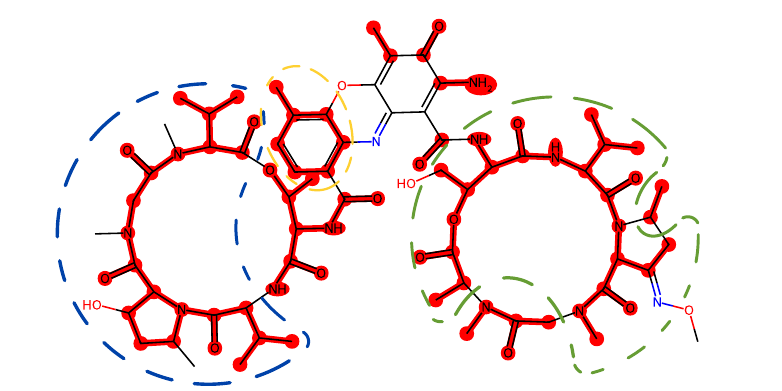}
    \\
    \rotatebox{90}{\qquad \qquad Molecule 2} & \includegraphics[width=\linewidth]{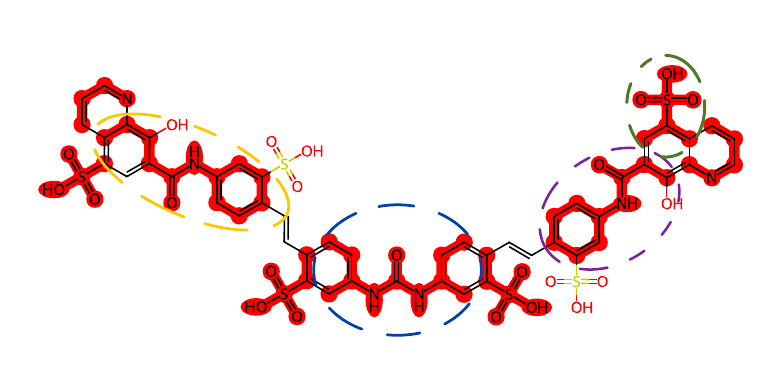} & \includegraphics[width=\linewidth]{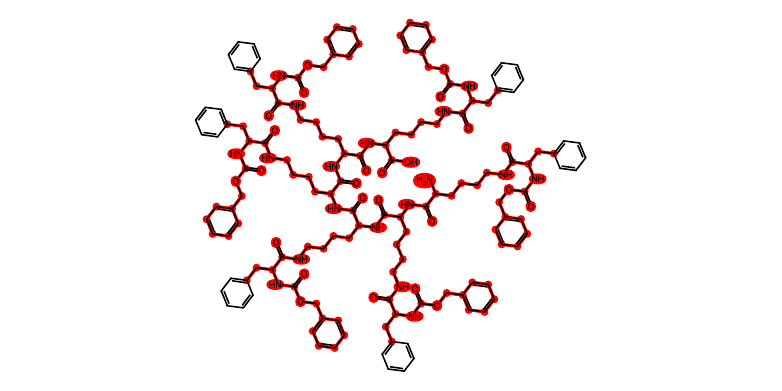} & \includegraphics[width=\linewidth]{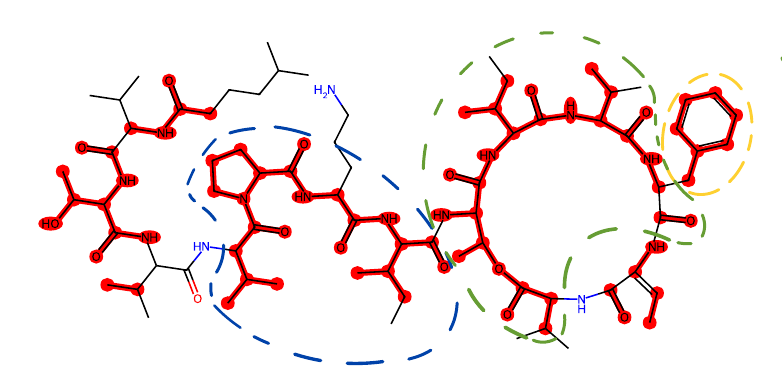} 
    
    \end{tabular}
    }
    \caption{Visualization of MCES with NGA. Matched atoms and bonds are highlighted in red, with circles of the same color indicating correspondences.}
    \label{fig. visualization}
\end{figure*}

\noindent
\begin{minipage}[tb]{0.48\textwidth}
\captionsetup{type=table}
    \centering
    \caption{Results of Accuracy $(\%)$ $\uparrow$ for MCES.}
    \begin{tabular}{lcccc}
    \toprule
    Dataset & AIDS & MOLHIV & MCF-7 \\ \midrule
    RASCAL & 90.67 & 88.74 & 90.26 \\ 
    FMCS & 60.63 & 67.99 & 71.54 \\ 
    Mcsplit & 61.32 & 72.74 & 69.23 \\ 
    GLSearch & 43.47 & 41.12 & 42.08\\
    NGM & 33.34 & 52.21 & 42.38 \\
    GANN-GM & 49.76 & 72.18 & 63.47\\
    GA & 70.99 & 71.09 & 72.92 \\
    Gurobi & 74.52 & 78.67 & 80.76 \\
    NGA (ours) & \textbf{98.64} & \textbf{99.20} & \textbf{97.94} \\ 
    \bottomrule
    \end{tabular}
    \label{tab:mces}
\end{minipage}
\hfill
\begin{minipage}[tb]{0.48\textwidth}
\captionsetup{type=table}
    \centering
    \caption{Results of MSE ($\times 10^{-3}$) $\downarrow$ for graph similarity.}
    \begin{tabular}{lcccc}
    \toprule
    Dataset & AIDS & MOLHIV & MCF-7 \\ \midrule
    NeuroMatch & 13.92 & 19.17 & 17.34\\ 
    SimGNN & 15.42 & 14.38 & 20.29\\ 
    GMN & 39.38 & 29.77 & 24.03 \\ 
    XMCS & 36.42 & 21.47 & 32.22 \\ 
    INFMCS & 25.18 & 16.39 & 31.12\\
    NGA (ours) & \textbf{1.13} & \textbf{0.66} & \textbf{0.99}\\ 
    \bottomrule
    \end{tabular}
    \label{tab:similarity}
\end{minipage}

\subsection{Graph Similarity Computation and Graph Retrieval}
In the graph similarity computation experiments, we compare NGA with several baselines.  As shown in Table~\ref{tab:similarity}, our method surpasses other compared methods by an order of magnitude. Although some methods, such as XMCS and INFMCS, learn similarity by implicitly inferring the size of MCES, they disregard the detailed structure of the MCES solution. Our method can learn to generate precise approximation of the MCES structure, leading to lower similarity computation errors.


The graph retrieval results are summarized in Table~\ref{tab:graph retrieval}, where three evaluation metrics are employed to assess performance from different perspectives. As another application of MCES, graph retrieval emphasizes the relative similarity between graphs rather than the precise computation of graph similarity. However, in cases where a group of target graphs share similar characteristics with the query graph, distinguishing their differences may require analyzing the details of common substructures. The experimental results demonstrate that our method is stable in retrieval scenarios.

\begin{table*}[tb]
    \centering
    \caption{Comparison of graph retrieval results in terms of three evaluation metrics.}
    \resizebox{1\textwidth}{!}{
    \begin{tabular}{lccc|ccc|ccc}
    \hline
    \multirow{2}{*}{} & \multicolumn{3}{c|}{Mean Reciprocal Rank (MRR) $\uparrow$}   & \multicolumn{3}{c|}{Precision at 10 (P@10) $\uparrow$} & \multicolumn{3}{c}{Mean Average Precision (MAP) $\uparrow$} \\ \cline{2-10}
                  & AIDS          & MOLHIV          & MCF-7         & AIDS         & MOLHIV        & MCF-7        & AIDS          & MOLHIV          & MCF-7          \\
    \hline
    NeuroMatch & 0.676 & 0.540 & 0.393 &0.870 & 0.870 & 0.730 & 0.911 & 0.779 & 0.739\\ 
    SimGNN & 0.419 & 0.207 & 0.468 & 0.870 & 0.830 & 0.850 & 0.875 & 0.726 & 0.703\\ 
    GMN & 0.238 & 0.291 & 0.279 & 0.400 & 0.310 & 0.540 & 0.683 & 0.699 & 0.652\\ 
    XMCS &  0.575 & 0.192 & 0.530 & 0.850 & 0.770 & 0.730 & 0.890 & 0.712 & 0.753\\ 
    INFMCS & 0.560 & 0.261 & 0.324 & 0.800 & 0.540 & 0.840 & 0.914 & 0.489 & 0.788\\
    NGA (ours) & \textbf{0.844} & \textbf{0.806} & \textbf{0.803} & \textbf{0.890} & \textbf{0.980} & \textbf{0.890} & \textbf{0.974} & \textbf{0.951} & \textbf{0.966} \\ 
    \bottomrule
    \end{tabular}
    }
    \label{tab:graph retrieval}
\end{table*}

\subsection{More Analysis}
In this section, we first analyze the hyperparameters to better understand their impact on the performance of NGA. Details of the ablation study are provided in Appendix~\ref{appendix: ablation study}, and the analysis of the number of Gumbel samples is shown in Appendix~\ref{appendix: parameter analysis}.
Here, we summarize the impact of the number of iterations $m$ and the hidden dimensions $d$ in Figure~\ref{fig: more analysis}.

\begin{figure}[tbp]
    \centering
    \subfigure[Varying number of iteration $m$]{
\includegraphics[width=0.48\textwidth]{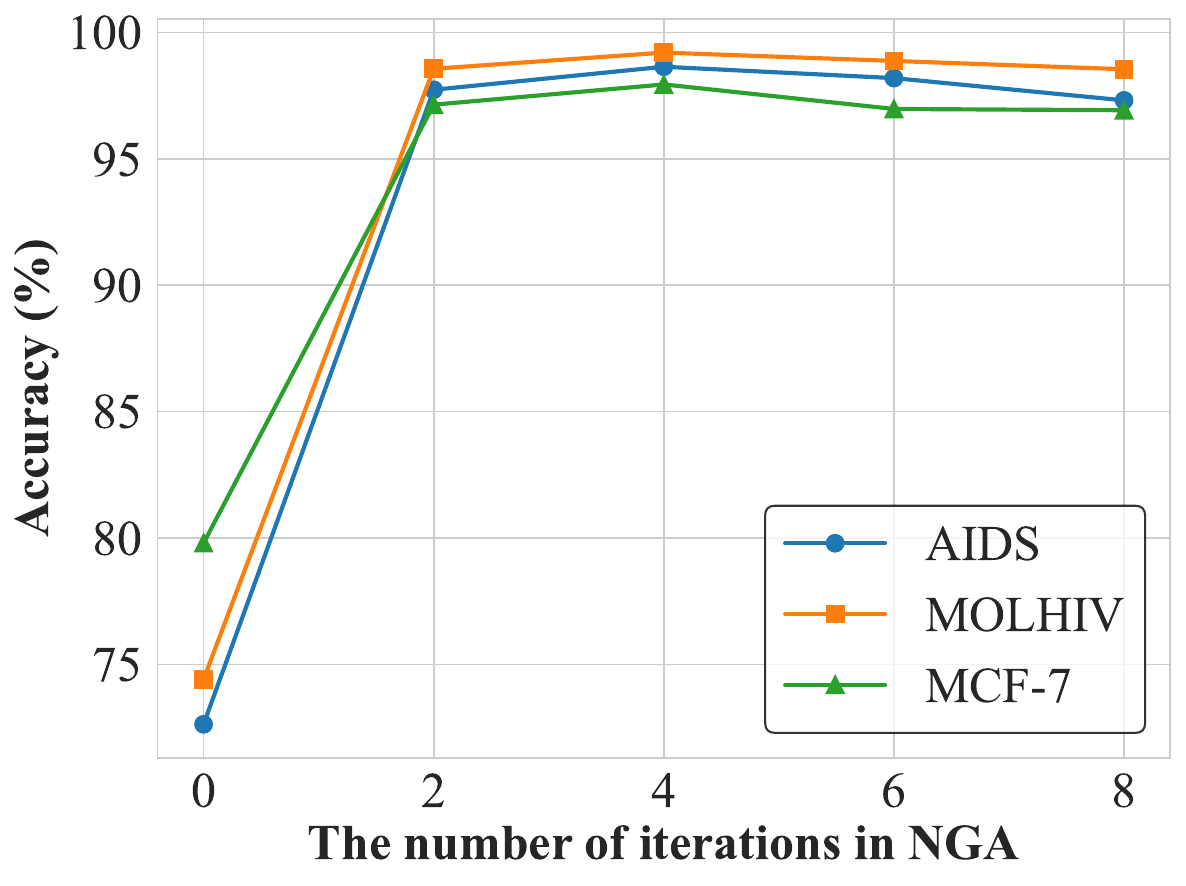}
         \label{fig:NGA_layer}
    }
    \subfigure[Varying hidden dimension $d$]{
 \includegraphics[width=0.48\textwidth]{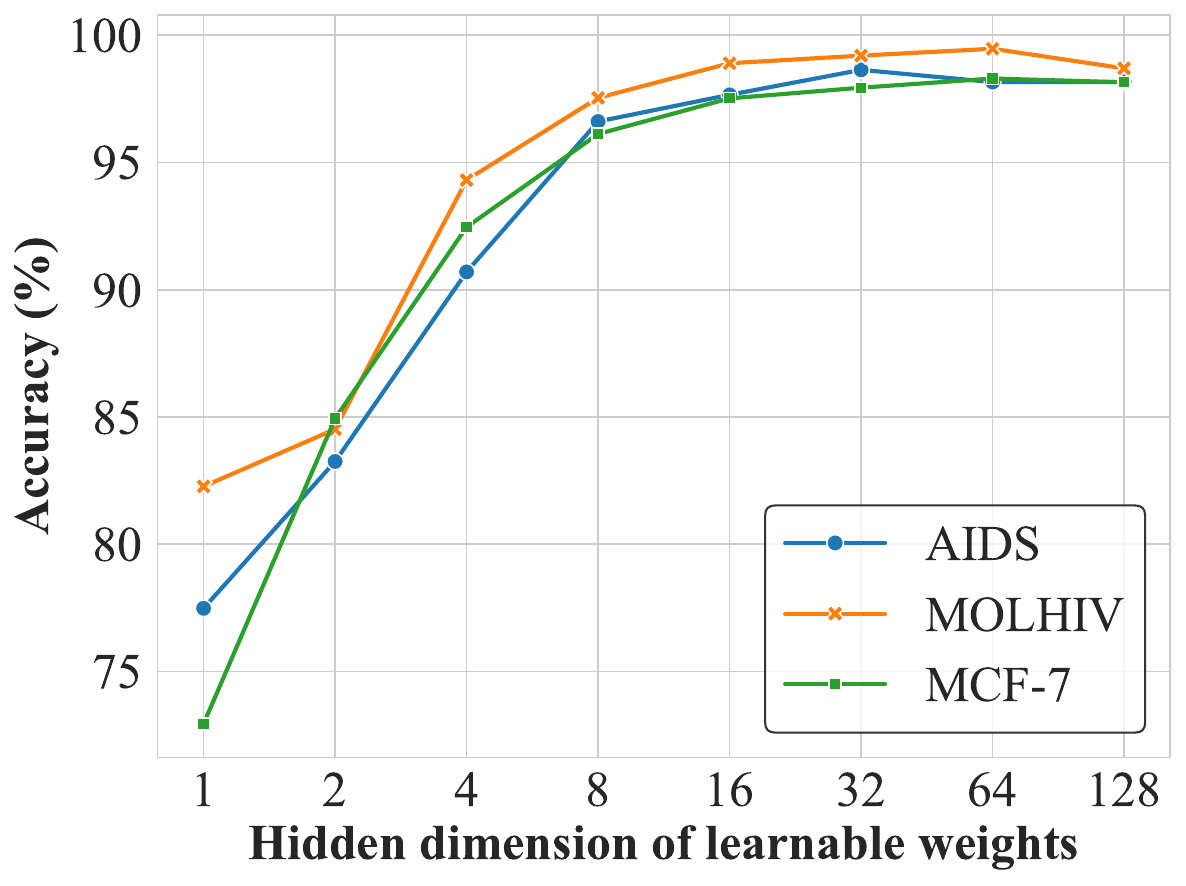}
        \label{fig:NGA_dim}
    }
    \caption{How NGA performs on MCES with different parameters.}
    \label{fig: more analysis}
\end{figure}

When $m = 0$ , our method excludes the NGA process entirely, relying solely on similarities computed by the graph feature encoder (Eq.~\eqref{Eq: sinkhorn}). In this scenario, we observe a sharp decrease in performance, highlighting the critical role of NGA in achieving strong results. Unlike traditional GA, which requires dozens of iterations to converge, our method delivers competitive performance with as few as two iterations (layers). This demonstrates that the parameterization of temperature is more effective than employing a static parameterization.

Moreover, the hidden dimension $d$ significantly affects the expressiveness of our model. When the hidden dimension $d$ is small, the expressiveness is rather limited. As evidenced in Figure~\ref{fig:NGA_dim} when $d=1$, the performance of NGA is comparable to that of traditional method. However, as $d$ increases, the performance steadily improves until it saturates at higher dimensions. This further validates the effectiveness of the temperature parameterization in NGA in accordance with Proposition \ref{prop:better_conv}. Based on this analysis, we set $m=4$ and $d=32$ in all experiments to strike a balance between performance and computational efficiency.

To broaden the scope and potential impact of our work, we have conducted additional experiments on a subset of QAPLIB~\citep{burkard1997qaplib} instances across different categories in Appendix~\ref{appendix: QAP instances}, where our method exhibits competitive performance compared with baseline methods.



\section{Conclusion}
\label{sec: conclusion}

This study introduces Neural Graduated Assignment (NGA), a novel approach to the Maximum Common Edge Subgraph (MCES) problem, of which the scalability remains a challenge for traditional methods. Drawing inspirations from annealing mechanism in statistical physics, NGA employs a neural architecture to iteratively refine the assignment process, leveraging high-dimensional learnable parameters to improve computational efficiency, scalability, and adaptive problem-solving. Empirical results show that NGA significantly outperforms prior methods in both runtime and accuracy, demonstrating strong performance across tasks like graph similarity and retrieval. The introduction of this versatile method offers substantial contributions to the understanding and resolution of assignment problems, suggesting pathways for future exploration and innovation in related areas.

\section*{Acknowledgements}
This work was supported by the National Key R\&D Program of China under grant 2022YFA1003900.

\section*{Ethics And Reproducibility Statement}
To ensure the reproducibility of our research, we provide a comprehensive set of resources. Code is open-sourced at \url{https://github.com/LOGO-CUHKSZ/NGA}. A detailed README file offers step-by-step guidance for setting up the computational environment and reproducing the experiments described in this paper.

This research focuses on methodological and technical contributions. It does not involve human subjects, personal data, or sensitive information. We are not aware of any direct ethical concerns arising from this work.

\bibliography{iclr2026_conference}
\bibliographystyle{iclr2026_conference}

\clearpage
\appendix

\section{More background}
\label{sec: supp preliminary}
\paragraph{Maximum Common Subgraph.} The maximum common subgraph (MCS) problem~\citep{bunke1998graph} is a fundamental topic in graph theory with significant implications in various real-world applications. Given two graphs, the goal of the MCS problem is to find the largest subgraph that is isomorphic to a subgraph of both input graphs. The MCS problem inherently captures the degree of similarity between two graphs, is domain-independent, and therefore finds extensive applications across various fields, such as substructure similarity search in databases~\citep{yan2005substructure}, source code analysis~\citep{djoko1997empirical} and computer vision~\citep{cook1993substructure}. The MCS problem can be broadly categorized into two variants: the maximum common induced subgraph (MCIS) and the maximum common edge subgraph (MCES). The key distinction between these variants lies in the structural constraints: MCIS focuses on finding a subgraph that preserves both vertex and edge connectivity, whereas MCES concentrates on maximizing common edge structure without enforcing vertex-induced constraints~\citep{ndiaye2011cp}. An example of MCIS and MCES is shown in Figure~\ref{fig:MCIS and MCES}. This paper focuses on the MCES problem, which is particularly applicable to practical tasks in bioinformatics, such as ﬁnding similar substructures in compounds with similar properties~\citep{ehrlich2011maximum}, where the edge-based relationships often carry significant biological meaning. 

\begin{figure}[tbp]
    \centering
    \includegraphics[width=.6\linewidth]{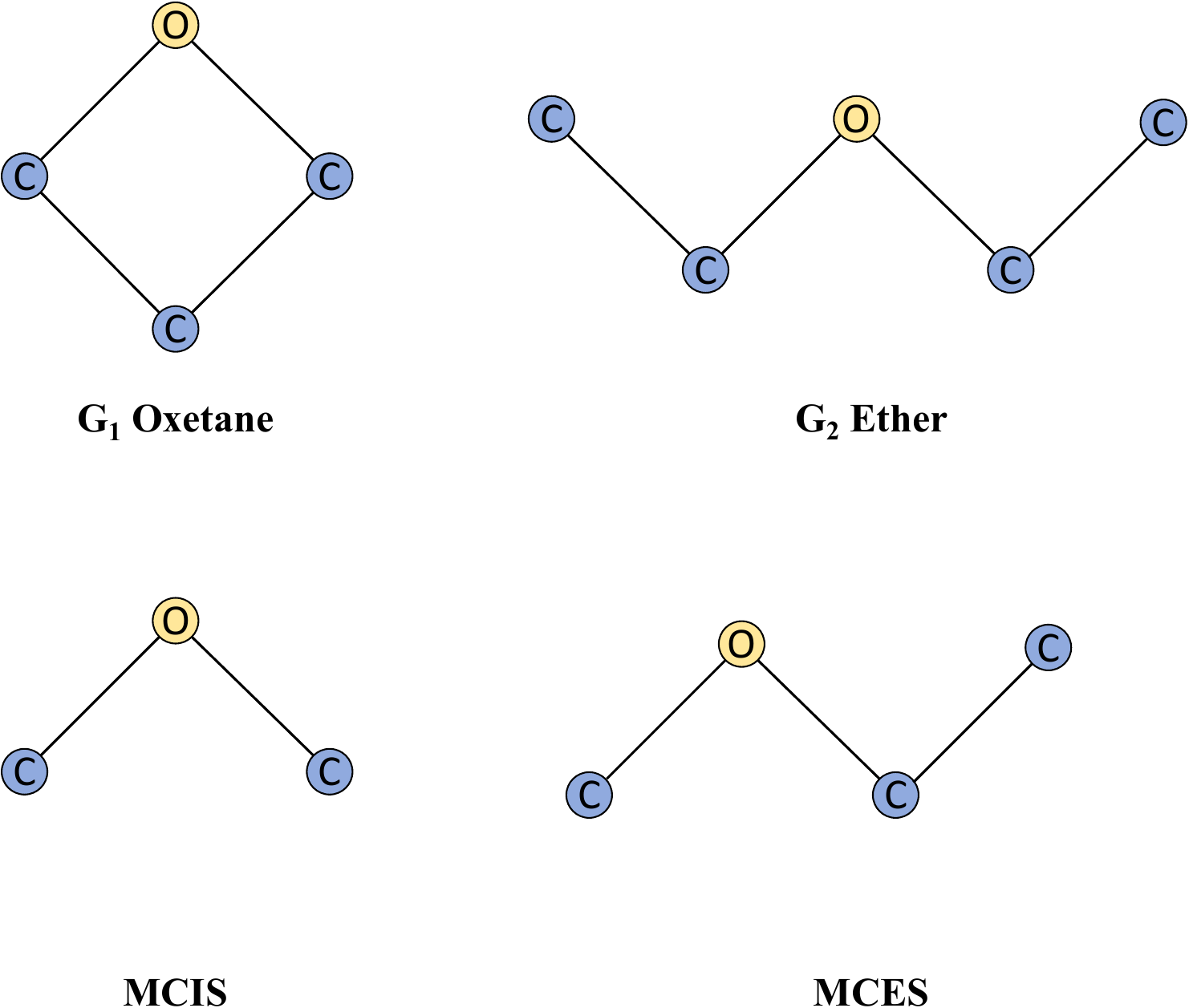}
    \caption{The maximum common induced subgraph (MCIS) and the maximum common edge subgraph (MCES) of two labeled graphs.}
    \label{fig:MCIS and MCES}
\end{figure}

\paragraph{Graph Similarity.} A key challenge in pharmaceutical research involving small molecules is organizing individual compounds into structurally related families or clusters. Manually sorting through large databases can be labor-intensive, which is why automated methods are frequently employed. These automated clustering techniques require a similarity metric for pairwise structure comparison and a clustering algorithm to categorize compounds into related groups. 

\paragraph{Graph Retrieval.} Graph retrieval involves identifying and ranking target graphs based on their similarity to a given query graph $G_q$~\citep{roy2022isonet, roy2022maximum}. The task can be framed as a ranking problem, where a graph retrieval system assigns similarity scores or distance metrics to target graphs relative to $G_q$. The process typically consists of two steps: first, calculating a  similarity score for each target graph $G_t$; and second, ranking the target graphs in decreasing order of similarity. The quality of a graph retrieval algorithm depends on its ability to place the most relevant graphs at the top of the ranking. A key challenge in this domain is the design of effective scoring functions. In this work, we adopt the Johnson similarity as scoring function, where the target graph with the highest similarity to $G_q$ is deemed the most relevant and ranked highest.

\paragraph{Graph Neural Networks.} Graph Neural Networks (GNNs) are a class of deep learning models specifically designed to process data represented as graphs. By leveraging the graph structure, GNNs can capture complex dependencies and interactions, enabling them to learn rich representations of nodes and entire graphs. These models utilize message passing mechanisms to iteratively aggregate information from neighboring nodes, ultimately allowing them to excel in various applications. As GNNs continue to evolve, they offer powerful tools for tackling challenges in machine learning and artificial intelligence across diverse domains~\citep{hamilton2017representation,ying2024tip,zhao2025del}. Given an input graph, typical GNNs compute node embeddings $\boldsymbol{h}_u^{(t)}, \forall u \in \mathcal{V}$ with $T$ layers of iterative message passing~\citep{gilmer2017neural}:
\begin{equation}
    \boldsymbol{h}_u^{(t+1)} = \psi \left(\boldsymbol{h}_u^{(t)}, \sum_{v\in \mathcal{N}_u }  \boldsymbol{h}_v^{(t)}\cdot \phi (\boldsymbol{e}_{uv}) \right),
\end{equation}
for each $t \in [0, T-1]$, where $\mathcal{N}_u = \{v\in \mathcal{V} | (u, v) \in \mathcal{E}\}$, while $\psi$ and $\phi$ are neural networks, e.g. implemented using multilayer perceptrons (MLPs).

\paragraph{Graduated Assignment.} The Graduated Assignment (GA) is a classical optimization method rooted in principles of statistical physics and commonly applied to assignment problems such as graph matching~\citep{gold1996graduated,cho2010reweighted}. GA operates by iteratively updating a soft assignment matrix, which represents the probabilistic relationship between elements of two sets (e.g., nodes, edges) that are being matched. 
One should note that GA is not a neural method and only consists of forward updates.


\section{Association Common Graph Explanation}
\label{appendix: association common graph explanation}
\acg*

\begin{proof}
Denote the node set of $ G_1 \Diamond G_2 $ as  
$\mathcal{V}_{\Diamond} = \{(u_i, v_j) \mid u_i \in \mathcal{V}(G_1), v_j \in \mathcal{V}(G_2)\}.$ In every common subgraph $ G' $ of $ G_1 $ and $ G_2 $, each vertex is isomorphic to a distinct vertex in $ \mathcal{V}(G_1) $ and $ \mathcal{V}(G_2) $. If the subgraph of the ACG we select contains two vertices in the same row, it implies that a vertex in $ G_2 $ corresponds to two vertices in $ G_1 $, which is not valid. To prevent duplication in the mapping between the two graphs, we define a subgraph of the association graph as valid if and only if it satisfies the following conditions:
\begin{equation}
\begin{cases}
    \mathcal{V}(G) \subseteq \mathcal{V}_{\Diamond}, \\
    \mathcal{E}(G) \subseteq \mathcal{E}_{\Diamond}, \\
    |\mathcal{V}(G)| = \min \{|\mathcal{V}(G_1)|, |\mathcal{V}(G_2)|\}, \\
    \forall (u_i, v_j), (u_k, v_l) \in \mathcal{V}(G), \quad i \neq k, \ j \neq l
\end{cases}
\end{equation}

We aim to show that each valid subgraph $G$ can be transformed into a common subgraph $G'$ of $G_1$ and $G_2$, satisfying $|\mathcal{E}(G)| = |\mathcal{E}(G')|$. In this way, finding the MCES of two graphs is reduced to finding the largest subgraph in this set.  Let $\mathcal{G}_1$ denote the set of all valid subgraphs of $G_1 \Diamond G_2$, and let $\mathcal{G}_2$ denote the set of all common subgraphs of $G_1$ and $G_2$. We claim that there exists a surjective function $f: \mathcal{G}_1 \to \mathcal{G}_2$ that transforms $G$ into $G'$, such that $|\mathcal{E}(G)| = |\mathcal{E}(f(G))|$.  If this claim holds, each subgraph of the association graph $G_1 \Diamond G_2$ corresponds to a common subgraph of $G_1$ and $G_2$ without omitting any common subgraph. Consequently, the subgraph with the largest number of edges corresponds to the MCES.  Here we give the construction of $f$, and the proof of the surjection of function $f$. The edge set of $G$ is given by $\mathcal{E} = \{((u_i, v_i), (u_j, v_j)) \mid ((u_i, v_i), (u_j, v_j)) \in \mathcal{E}_\Diamond\}$.  From this edge set, we restore all the edge information in the original graph $G_1$, denoted as  $\mathcal{E}' = \{(u_i, u_j) \mid ((u_i, v_i), (u_j, v_j)) \in \mathcal{E}\} \subseteq \mathcal{E}(G_1).$ The corresponding vertex set is naturally defined as  $\mathcal{V}' = \{u_k \mid \exists u_l , (u_k, u_l) \in \mathcal{E}'\}.$ Thus, the subgraph $G' = \{\mathcal{E}', \mathcal{V}'\} \subseteq G_1$ is uniquely determined, and we define $f(G) = G'$.   Similarly, we can construct $G'' = \{\mathcal{E}'' \subseteq \mathcal{E}(G_2), \mathcal{V}'' \subseteq \mathcal{V}(G_2)\} \subseteq G_2$. Since the matching pair in $\mathcal{E}$ is unique, $G'$ is isomorphic to $G''$. Hence, $G' = f(G)$ is the desired common subgraph of $G_1$ and $G_2$.  Furthermore, it is straightforward to verify that $|\mathcal{E}(G)| = |\mathcal{E}(f(G))|$.  

\begin{figure*}[h]
    \centering
    \includegraphics[width=.9\linewidth]{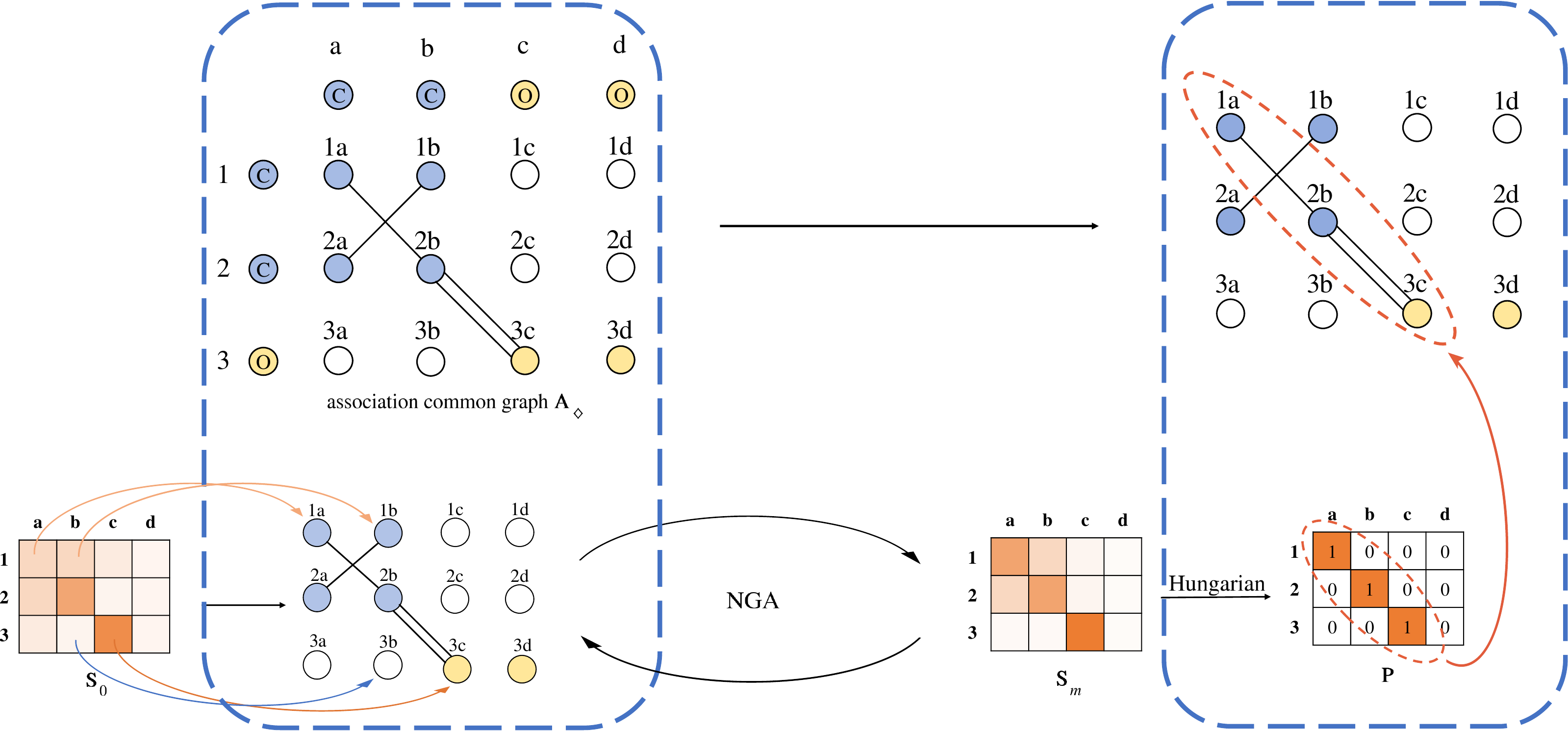}
    \caption{An illustration of the ACG. Each value in the soft assignment $\mathbf{S}$ or hard assignment $\mathbf{P}$ corresponds to a node in the ACG $\mathcal{A}_{\Diamond}$ and thus can be considered as node features in $\mathcal{A}_{\Diamond}$.}
    \label{fig: association common graph}
\end{figure*}

Next, we represent the node information of a valid subgraph $G$ of $G_1 \Diamond G_2$ using an assignment matrix. Let $\mathbf{P} = (\mathbf{P}_{ij})$, where $\mathbf{P}_{ij} = 1$ if and only if $(u_i, v_j) \in \mathcal{V}(G)$. Since the association graph is already known, the assignment matrix $\mathbf{P}$ is sufficient to restore the valid subgraph. Thus, from this point forward, we only need to work with the assignment matrix. Any assignment can be be mapped to a common subgraph, thereby enhancing interpretability of our formulation of MCES. Consider an example in Figure~\ref{fig: association common graph}, where NGA outputs an assignment matrix with entries $\mathbf{P}_{\mathrm{1a}} = \mathbf{P}_{\mathrm{2b}} = \mathbf{P}_{\mathrm{3c}} = 1$. In this case, the assignment represents a valid subgraph $G$ with vertices $\mathcal{V}(G) = \{\mathrm{1a,2b,3c}\}.$ Using the association graph, it is straightforward to derive the unique corresponding subgraph $G$ and the original common subgraph $G'$ by applying the function $f$.  Thus, instead of working directly with $G'$ or $G$, we can represent the subgraph using only the assignment matrix.  

\end{proof}

\section{Experimental Setup}
\label{appendix: experimental setup}

\subsection{Datasets}
\label{appendix: datasets}
\begin{table}[h]
    \centering
    \caption{The statistics of datasets.}
    \begin{tabular}{ccccc}
    \toprule
         & \#graphs & \#nodes & \#edges \\
    \midrule
    AIDS    & 2000 & $\sim$15.7 & $\sim$16.2   \\
    MCF-7  & 27770 & $\sim$26.4 & $\sim$28.5    \\
    MOLHIV & 41127 & $\sim$25.5 & $\sim$27.5    \\
    \bottomrule
    \end{tabular}
    \label{tab:datasets}
\end{table}

The key statistics of datasets used in this paper are summarized in Table~\ref{tab:datasets}. The solution time distributions of three datasets are illustrated in Figure~\ref{fig:solve time}.

\begin{figure}[tb]
    \centering
    \subfigure[AIDS]{
\includegraphics[width=0.3\textwidth]{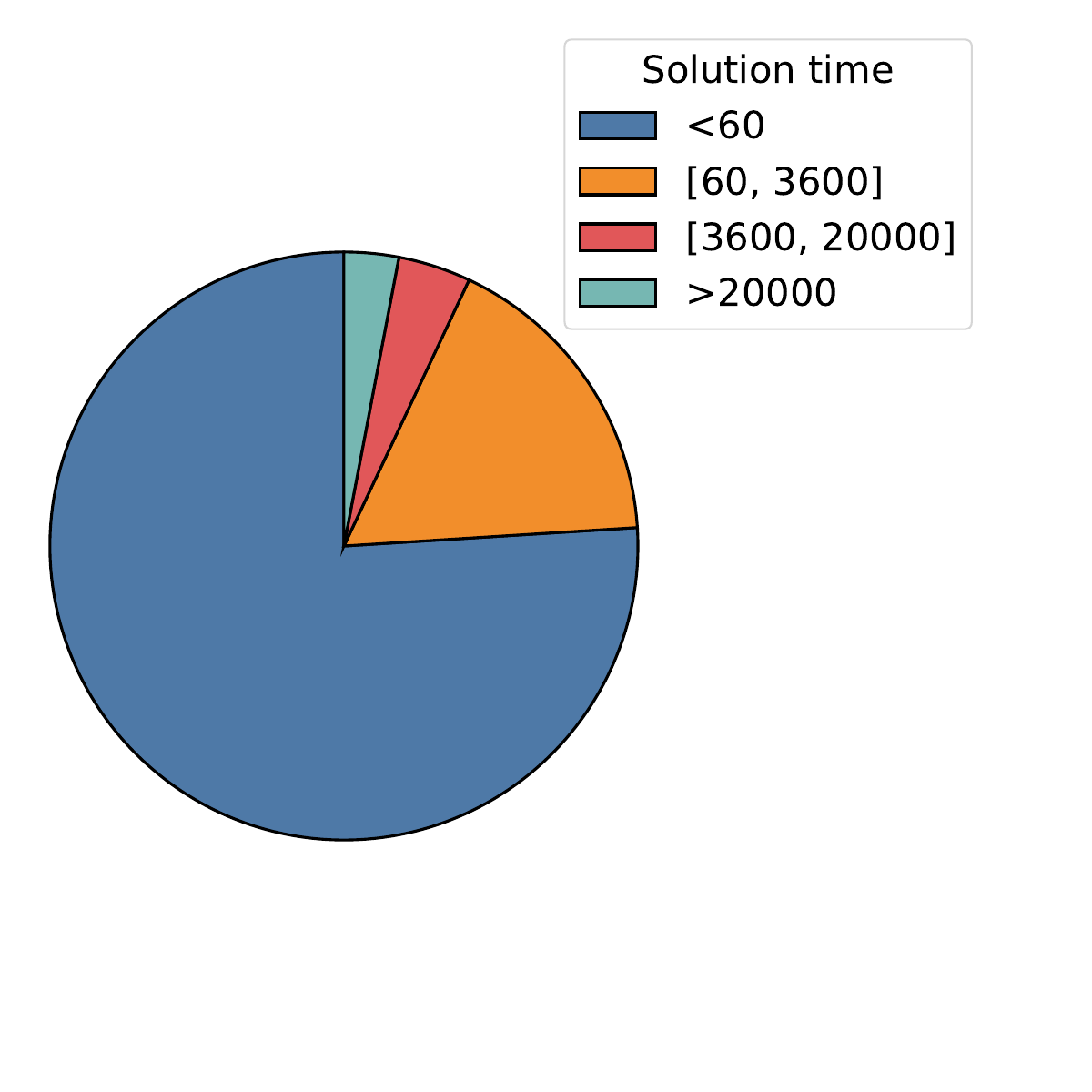}
    }
    \subfigure[MOLHIV]{
 \includegraphics[width=0.3\textwidth]{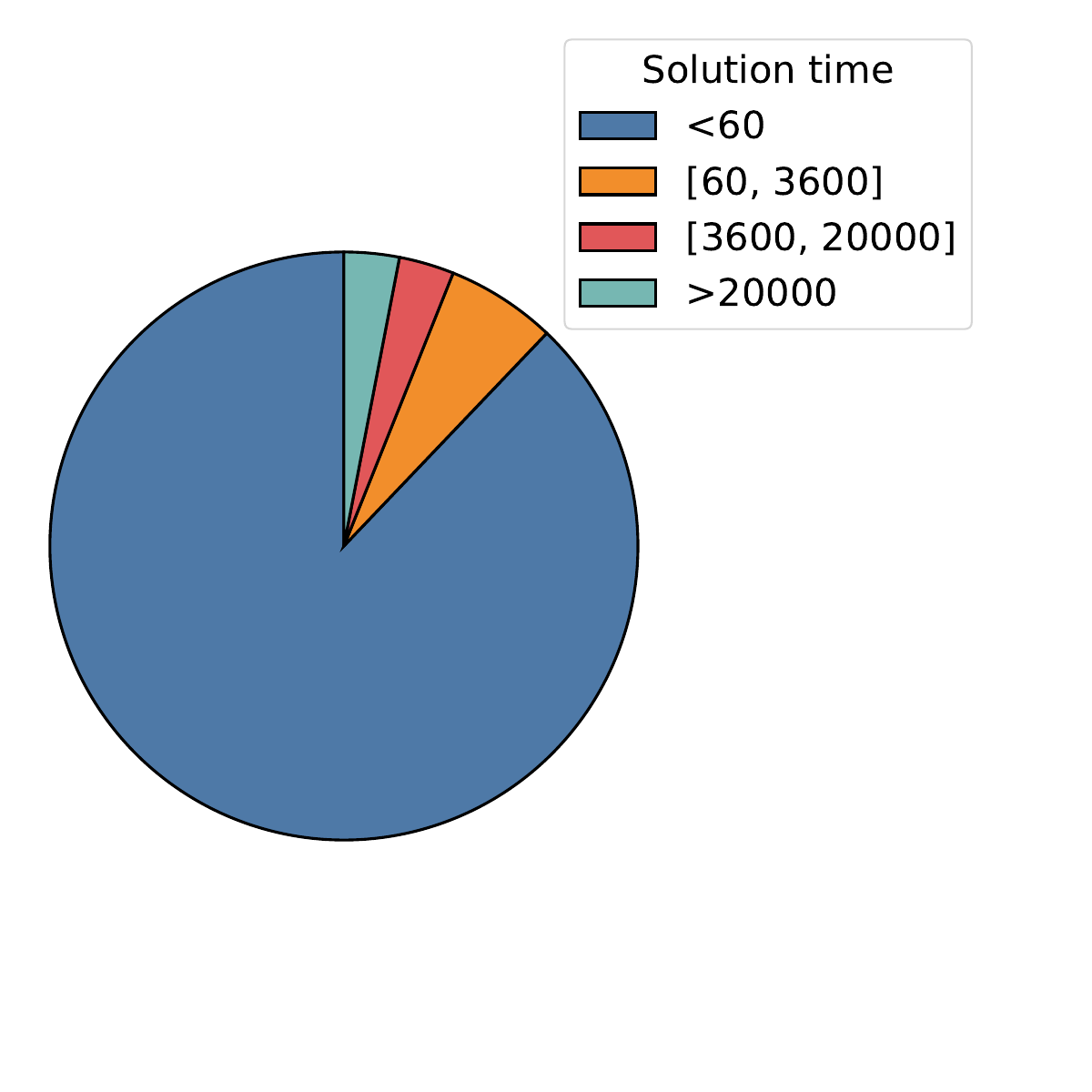}
    }
    \subfigure[MCF-7]{
\includegraphics[width=0.3\textwidth]{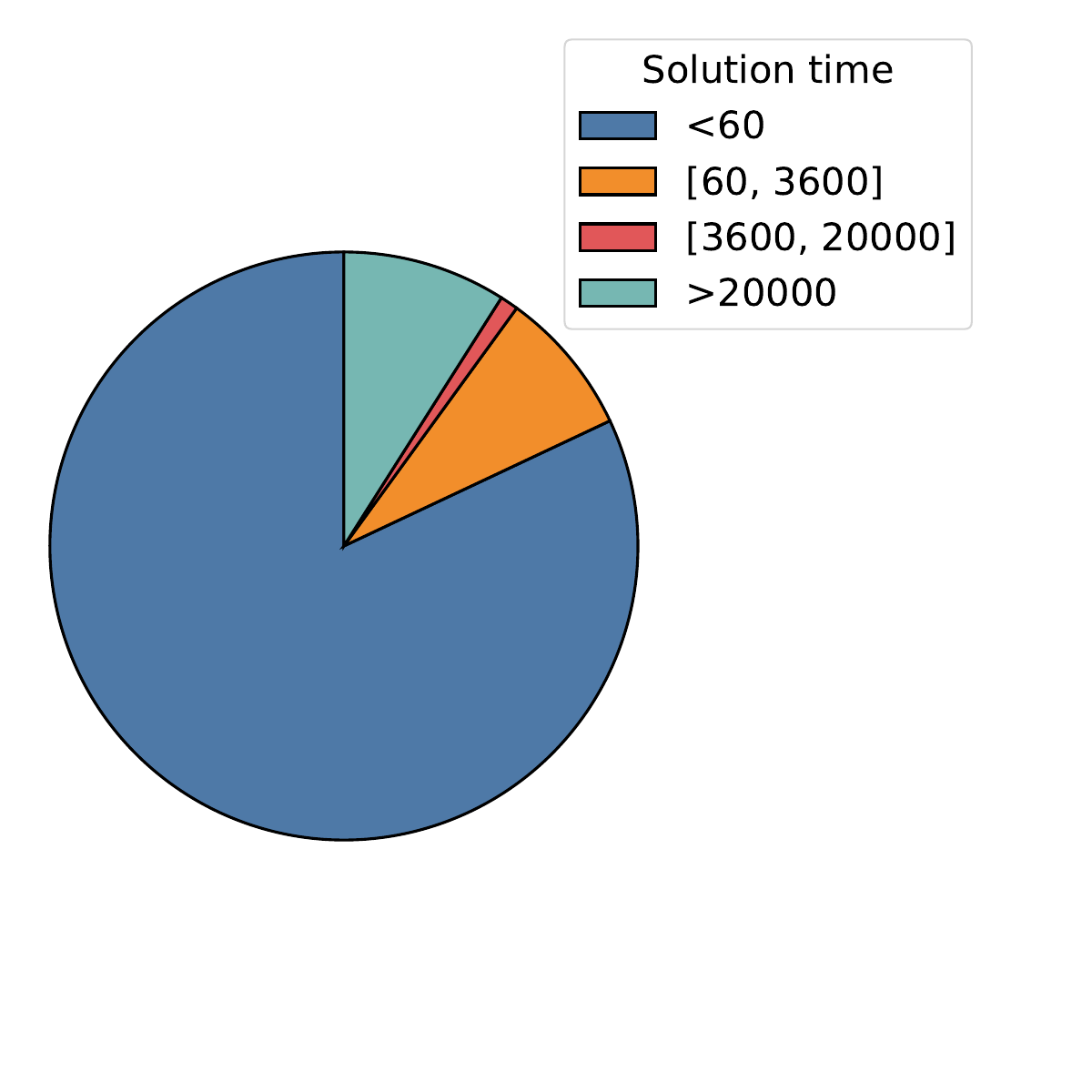}
    }
    \caption{Solution time distributions on three datasets}
    \label{fig:solve time}
\end{figure}

\subsection{Discussion of Baseline Methods}
\label{appendix: baseline}
For our comparison, we select baseline methods that are directly or indirectly related to solving the MCES problem. RASCAL and FMCS are the most relevant baselines, as they are specifically designed to solve MCES. While some methods, such as Mcsplit, are primarily developed for the MCIS problem, they are still capable of generating common subgraphs. According to the definition of MCES, the number of edges in the common subgraph of an MCIS will always be less than or equal to that of an MCES, i.e., $ |\mathcal{E}(\mathrm{MCIS}(G_1, G_2))| \leq |\mathcal{E}(\mathrm{MCES}(G_1, G_2))| $. As such, MCIS solvers are also valid comparisons. However, some methods are not suitable for serving as compared baselines. Some studies, such as \citep{bahiense2012maximum}, disregard edge labels that are essential for molecular graphs. XMCS and INFMCS, on the other hand, only implicitly infer the size of the MCS for downstream tasks without providing explicit MCS structures. As a result, these methods are used only as baselines in our graph similarity computation and graph retrieval experiments.

Besides, we provide time complexity comparison between baseline methods and ours in this section. For notation simplicity, we assume that both the input graphs  have $N$ nodes and $E$ edges. The construction of the ACG requires enumerating all edge pairs between the two input graphs $G_1$ and $G_2$, leading to a time complexity of $O(E^2)$. In practice, however, this cost is very limited because molecular graphs are typically sparse — each atom usually has only a few neighbors (typically up to 4) due to chemical valency constraints and thus $E^2 \ll N^2$. Besides, an edge in the ACG exists only when the corresponding pair of edges in  $G_1$ and $G_2$ satisfy the compatibility constraints. Consequently, the actual number of edges $M$ in ACG satisfies $M < E^2 \ll N^2$. The space complexity of constructing the ACG is $O(M)$. To handle this efficiently, we can store the ACG using sparse data structures, such as sparse tensors, which allow memory usage to scale with $O(M)$ rather than with the dense upper bound. This ensures that the space overhead remains manageable even for relatively large graphs.

The initial assignments are obtained by GNN with a complexity of $\mathcal{O}(N^2d)$. Therefore, the overall complexity is $\mathcal{O} (M+N^2)$. The complexity of NGA iteration is $\mathcal{O}(mk(M+N^2))$, where $m$ is the number of iterations and $k$ is the number of epochs in training. Overall, the time complexity of our method is $\mathcal{O}(mk(M+N^2))$. GA has the same iteration process as our method, which has a complexity of $\mathcal{O}(m(M+N^2))$. NGM has the same complexity as our method, which is $\mathcal{O}(mk(M+N^2))$. The rest baseline methods, including RASCAL, FMCS and Mcsplit have a complexity of approximately $\mathcal{O}(2^N)$ due to the NP-complete nature of the MCES problem.  Compared with these baseline methods, our proposed method provides efficient approximations of the solution in polynomial time without relying on exhaustive exploration of the solution space.

\section{Convergence Analysis}
\label{appendix: overall convergence analysis}

In this section, we first analyze the convergence behavior under our learnable temperature parameterization, and then prove the advantages of product parameterization.

\subsection{Convergence Behavior under Learnable Parameters}
\label{appendix: convergence}
The quadratic assignment objective function in Eq.~\eqref{eq: QAP} is formulated as follows~\citep{rangarajan1996novel} with regularizations:
\begin{equation}
\begin{aligned}
    \mathbb{E}(\mathbf{S}, \mathbf{\mu}, \mathbf{\nu}) &= -\frac{1}{2} \mathrm{vec}(\mathbf{S})^{\top} \mathcal{A}_{\Diamond} \mathrm{vec}(\mathbf{S}) + \mathbf{\mu}^{\top} ( \mathbf{S} \mathbf{1}_{n_2}-\mathbf{1}_{n_1} ) \\
    &+ \mathbf{\nu}(\mathbf{S}^{\top} \mathbf{1}_{n_1} - \mathbf{1}_{n_2}) - \frac{\gamma}{2} \mathrm{vec}(\mathbf{S})^{\top} \mathrm{vec}(\mathbf{S}) \\
    &+ \frac{1}{\beta} \sum \mathbf{S} \log \mathbf{S},
    \end{aligned}
\label{Eq: energy}
\end{equation}
where $\mathbb{E}$ is the energy function, $\mathbf{\mu}$ and $\mathbf{\nu}$ are Lagrange parameters for constraint satisfaction, $\gamma$ is a parameter for the self-amplification term, and $\beta$ is a deterministic annealing control parameter. 

We analyze the convergence behavior at each iteration layer and temporarily omit the layer index $(i)$ for simplicity and denote $\beta_l=\mathbf{W}_1^{\top}\mathbf{W}_2$ as the learnable parameter. In the first part of the proof, we show that a proper choice of parameter $\gamma$, is guaranteed to decrease the energy function. The energy function in Eq.~\eqref{Eq: energy} can be simplified by collecting together all quadratic terms in $\mathbf{S}$, i.e. we define
\begin{equation}
    \mathcal{A}_{\Diamond}^{(\gamma)} = \mathcal{A}_{\Diamond} + \gamma \mathbf{I}.
\end{equation}
Consider the following algebraic transformation:
\begin{equation}
    -\frac{\mathbf{X}^{\top}\mathbf{X}}{2} \rightarrow \min_\sigma (-\mathbf{X}^{\top}\sigma + \frac{\sigma^{\top}\sigma}{2})
\end{equation}
and we can convert the quadratic form into a simpler, more tractable linear form:
\begin{equation}
\begin{aligned}
    \mathbb{E}(\mathbf{S}, \mathbf{\mu}, \mathbf{\nu}, \sigma) &= - \mathrm{vec}(\mathbf{S})^{\top} \mathcal{A}_{\Diamond}^{(\gamma)} \mathbf{\sigma} +\frac{1}{2} \mathbf{\sigma}^{\top} \mathcal{A}_{\Diamond}^{(\gamma)} \mathbf{\sigma} \\
    &+\mathbf{\mu}^{\top} ( \mathbf{S} \mathbf{1}_{n_2}-\mathbf{1}_{n_1} ) + \mathbf{\nu}(\mathbf{S}^{\top} \mathbf{1}_{n_1} - \mathbf{1}_{n_2})  \\
    &+ \frac{1}{\beta_l} \sum \mathbf{S} \log \mathbf{S}
    \end{aligned}
    \label{Eq: transformed Energy}
\end{equation}

We use $deg_i$ to represent the degree of node $i$ and $deg\textsubscript{max} = \max_{i \in \mathcal{V}_{\Diamond}} deg_i$. By setting $\gamma= deg\textsubscript{max} + \epsilon$, where $\epsilon>0$ is a small quantity,  the following equation holds for any nonzero vector $\mathbf{x}$:
\begin{equation}
\begin{aligned}
& \mathbf{x}^{\top} \mathcal{A}_{\Diamond}^{(\gamma)} \mathbf{x} \\
& = \mathbf{x}^{\top} \mathcal{A}_{\Diamond} \mathbf{x} + \mathbf{x}^{\top} \operatorname{diag}(\gamma) \mathbf{x} \\
&= \sum_{j=1}^{n} ( \sum_{i=1}^{n} \mathbf{x}_i a_{ij}  )  \mathbf{x}_j + \sum_{i=1}^{n} \mathbf{x}_i^2(deg_i +deg_{max}-deg_i +\epsilon )\\
&= \frac{1}{2}(\sum_{i=1}^{n}deg_i \mathbf{x}_i^2 + 2\sum_{i,j=1}^{n}\mathbf{x}_i\mathbf{x}_j a_{ij} +  \sum_{j=1}^{n}deg_j \mathbf{x}_j^2) \\
&+ \sum_{i=1}^{n}\mathbf{x}_i^2(deg_{max}-deg_i +\epsilon)\\
&= \frac{1}{2}\sum_{i,j=1}^{n}a_{ij}(\mathbf{x}_i+\mathbf{x}_j)^2 +  \sum_{i=1}^{n}\mathbf{x}_i^2(deg_{max}-d_i +\epsilon) \\
&>0
\end{aligned}
\end{equation}
which proves that $\mathcal{A}_{\Diamond}^{(\gamma)}$ is positive definite.

Extremizing Eq.~\eqref{Eq: transformed Energy} with respect to $\sigma$, we have
\begin{equation}
    \mathcal{A}_{\Diamond}^{(\gamma)} \mathrm{vec}(\mathbf{S}) = \mathcal{A}_{\Diamond}^{(\gamma)} \sigma \quad \Rightarrow \quad \sigma = \mathrm{vec}(\mathbf{S})
\end{equation}
is a minimum, which means that setting $\sigma = \mathrm{vec}(\mathbf{S})$ is guaranteed to decrease the energy function.

In the second part of the proof, we show how the energy function in Eq.~\eqref{Eq: transformed Energy} increases or decreases according to $\beta_l$.
In the first part of the proof, we set 
$\sigma = \mathrm{vec}(\mathbf{S})$, referring 
$\sigma$ as the original value of $\mathrm{vec}(\mathbf{S})$.
To achieve convergence, we need to show that 
$\mathbb{E}(\sigma, \sigma) \ge \mathbb{E}(\mathrm{vec}(\mathbf{S}), \sigma)$
in Eq.~\eqref{Eq: transformed Energy}. The $\mathbf{S}$ here is 
the new doubly stochastic matrix gained after executing the 
Sigmoid and Sinkhorn operator. After each iteration, the new matrix 
$\mathbf{S}$ always reduces the value of the objective function.

Minimizing Eq.~\eqref{Eq: transformed Energy} with respect to $\mathbf{S}_{ai}$, we get
\begin{equation}
    \begin{aligned}
\frac{1}{\beta_l}  \log_{}{\mathbf{S}_{ai}} = 
[\mathcal{A}_{\Diamond}^{(\gamma)}\sigma]_{ai}-(\mu_a+\nu_i)- \frac{1}{\beta_l}
\end{aligned}
\label{Eq: minimizing S}
\end{equation} 

From Eq.~\eqref{Eq: minimizing S}, we get the summation
\begin{equation}
    \begin{aligned}
& \frac{1}{\beta_l}\sum \mathbf{S}_{\mathrm{ai}} \log_{}{\mathbf{S}}_{\mathrm{ai}} = \\
&\mathrm{vec} (\mathbf{S})^T \mathcal{A}_{\Diamond}^{(\gamma)}\sigma- 
\mu^T\mathrm{vec} (\mathbf{S})\mathbf{1}_{n_2} \\
&+\nu\mathrm{vec}(\mathbf{S})^T\mathbf{1}_{n_1}- 
\frac{1}{\beta_l}\sum \mathbf{S}_{\mathrm{ai}}
\end{aligned}
\label{Eq: summation S}
\end{equation} 
and
\begin{equation}
    \begin{aligned}
&  \frac{1}{\beta_l} \sum \sigma_{ai}\log_{}{\mathbf{S}}_{\mathrm{ai}} = \\
&\sigma^T \mathcal{A}_{\Diamond}^{(\gamma)}\sigma- 
\mu^T\sigma \mathbf{1}_{n_2}  +\nu \sigma^T \mathbf{1}_{n_1}- 
 \frac{1}{\beta_l} \sum \sigma_{ai}
\end{aligned}
\label{Eq: summation sigma}
\end{equation} 
From Eq.~\eqref{Eq: summation S} and 
Eq.~\eqref{Eq: summation sigma},
we get 
\begin{equation}
    \begin{aligned}
\mathbb{E}(\sigma, \sigma) - 
\mathbb{E}(\mathrm{vec}(\mathbf{S}), \sigma)=&\\
-\sigma^T \mathcal{A}_{\Diamond}^{(\gamma)}\sigma-
(-\mathrm{vec} (\mathbf{S})^T \mathcal{A}_{\Diamond}^{(\gamma)}\sigma)& \\
+  \sum \frac{\sigma_{ai}}{\beta_l} \log_{}{\sigma_{ai}} 
-  \sum \frac{\mathbf{S}_{\mathrm{ai}}}{\beta_l} \log_{}{\mathbf{S}}_{\mathrm{ai}} &\\
=  \frac{1}{\beta_l}\sum \sigma_{ai}  \log_{}{\frac{\sigma_{ai}}{\mathbf{S}_{\mathrm{ai}}}} 
\end{aligned}
\end{equation} 

By the non-negativity of the Kullback-Leibler measure, 
$\mathbb{E}(\sigma, \sigma) - 
\mathbb{E}(\mathrm{vec}(\mathbf{S}), \sigma)=
\frac {1}{\beta_l} \sum \sigma \log_{}{\frac{\sigma}{\mathbf{S}}} \ge 0$ when $\beta_l > 0$ and vice versa. The doubly stochastic property of $\mathbf{S}$ and $\sigma$ ensures that the Lagrange parameters can be eliminated from the energy function Eq.~\eqref{Eq: transformed Energy}. The new matrix $\mathbf{S}$ gained after setting $\sigma = \mathrm{vec}(\mathbf{S})$ guarantees the decrease of the energy function.

We distill the core of the proof to highlight the key aspects. At each temperature, the algorithm performs the following steps repeatedly until it reaches convergence.

\begin{equation}
    \begin{aligned}
    & \textbf{Step \ 1: } \sigma \gets \mathrm{vec}(\mathbf{S})\\
    & \textbf{Step \ 2: } \\
    & \quad \textbf{Step \ 2a: } \hat{\mathbf{S}}_{ai} \gets \mathrm{exp}(\beta_l\sum_{bj}^{} (\mathcal{A}_{\Diamond}^{(\gamma)})_{ai, bj} \mathbf{\sigma}_{bj})\\
    & \quad \textbf{Step \ 2b: } \mathbf{S} \gets \mathrm{Sinkhorn}(\hat{\mathbf{S}})\\
    & \mathrm{Return \ to\ Step \ 1\ until\ convergence.}
    \end{aligned}
\end{equation} 

Our proof shows that when a suitably constructed energy function is affect by the temperature parameter $\beta_l$ during both Step 1 and Step 2. This energy function corresponds to Eq.~\eqref{Eq: transformed Energy} without the terms involving the Lagrange parameters.

\subsection{Proof of Faster Convergence for Product Parameterization}
\label{appendix: faster convergence proof}
\convergence*

\begin{proof}
Let  $J(\mathbf{S})=\operatorname{vec}(\mathbf{S})^{\top} \mathcal{A}_{\Diamond} \operatorname{vec}(\mathbf{S})$and $g(t)=\frac{\partial J}{\partial \beta_l(t)}$ be the gradient of the loss at step $t$. For a direct scalar parameterization $\beta_l$ and learning rate $\eta$, the gradient update reads:
\begin{equation}
    \beta_l(t+1)=\beta_l(t)+\eta g(t).
\end{equation}
For the product parameterization, the update is:
\begin{equation}
    \mathbf{W}_1(t+1)=\mathbf{W}_1(t)+\eta g(t) \mathbf{W}_2(t), \quad \mathbf{W}_2(t+1)=\mathbf{W}_2(t)+\eta g(t) \mathbf{W}_1(t)
\end{equation}
Let's examine how $\beta_l$ changes over time. After one update,
\begin{equation}
    \begin{aligned}
\beta_l(t+1) & =\mathbf{W}_1(t+1)^{\top} \mathbf{W}_2(t+1) \\
& =(\mathbf{W}_1(t)+\eta g(t) \mathbf{W}_2(t))^{\top}(\mathbf{W}_2(t)+\eta g(t) \mathbf{W}_1(t)) \\
& =\mathbf{W}_1(t)^{\top} \mathbf{W}_2(t)+\eta g(t) \mathbf{W}_1(t)^{\top} \mathbf{W}_1(t)+\eta g(t) \mathbf{W}_2(t)^{\top} \mathbf{W}_2(t)+\eta^2 g(t)^2 \mathbf{W}_2(t)^{\top} \mathbf{W}_1(t) \\
& =\beta_l(t)+\eta g(t)\left(\left\|\mathbf{W}_1(t)\right\|^2+\left\|\mathbf{W}_2(t)\right\|^2\right)+\eta^2 g(t)^2 \beta_l(t)
\end{aligned}
\end{equation}

This is a nonlinear update equation with respect to the gradient $g(t)$, containing both the first-order term $\eta g(t)\left(\left\|\mathbf{W}_1(t)\right\|^2+\left\|\mathbf{W}_2(t)\right\|^2\right)$ and the second-order term $\eta^2 g(t)^2 \beta_l(t)$. 
If we omit the higher order gradients, the term $\mathbf{W}_1(t)^{\top} \mathbf{W}_2(t)$ can adaptively adjust the learning rate. 

Let's analyze the scenario where the gradient $g(t)>0$ for several consecutive iterations, indicating that increasing $\beta_l$ would increase the loss. This is a common scenario in optimization where the algorithm consistently moves in a beneficial direction. For the direct scalar parameterization, the update is 
\begin{equation}
    \beta_l(t+1)=\beta_l(t)+\eta |g(t)|.
\end{equation}
After $k$ iterations with approximately constant gradient magnitude $|g|$, we get
\begin{equation}
    \beta_l(t+k) \approx \beta_l(t)+k \eta|g|.
\end{equation}
Let $\mathbf{W}_1(0)=\mathbf{W}_0$, $\mathbf{W}_2(0)=\mathbf{U}_0$ as the initialization state, we can derive:
\begin{equation}
    \begin{aligned}
& \mathbf{W}_1(k)\approx\frac{\left(\mathbf{W}_0+\mathbf{U}_0\right)}{2}(1+\eta |g|)^k+\frac{\left(\mathbf{W}_0-\mathbf{U}_0\right)}{2}(1-\eta |g|)^k \\
& \mathbf{W}_2(k)\approx\frac{\left(\mathbf{W}_0+\mathbf{U}_0\right)}{2}(1+\eta |g|)^k-\frac{\left(\mathbf{W}_0-\mathbf{U}_0\right)}{2}(1-\eta |g|)^k .
\end{aligned}
\end{equation}
Thus, the first-order term becomes:
\begin{equation}
\begin{split}
    &\eta |g|\left(\left\|\mathbf{W}_1(t)\right\|^2+\left\|\mathbf{W}_2(t)\right\|^2\right)\\
&=\frac{\eta |g|}{2}[ \left(\left\|\mathbf{W}_0\right\|^2+\left\|\mathbf{U}_0\right\|^2\right)\left((1+\eta |g|)^{2 k}+(1-\eta |g|)^{2 k}\right) \\
&+\left. 2\left(\mathbf{W}_0^{\top} \mathbf{U}_0\right)\left((1+\eta |g|)^{2 k}-(1-\eta |g|)^{2 k}\right)\right],
\end{split}
\end{equation}
which grows exponentially with the iteration count $k$. Therefore, the product parameterization leads to accelerating updates to $\beta_l$. In contrast, the direct scalar parameterization has a constant update magnitude $\eta |g|$regardless of the iteration count. Aside from this, this reparameterization naturally introduces bounds on the magnitude of $\beta_l$: $|\beta_l| \leq \|\mathbf{W}_1^{(l)}\| \|\mathbf{W}_2^{(l)}\|\leq (\frac{\|\mathbf{W}_1^{(l)}\|^2 + \|\mathbf{W}_2^{(l)}\|^2}{2})^2$. 
\end{proof}

Moreover, this proposition is supported by empirical evidence. Specifically, we provide the loss curves in Figure~\ref{fig:empirical evidence of convergence}. Compared to parameterizing the temperature as a learnable scalar, our method exhibits faster convergence. Within the same training time, this results in improved performance. As shown in Table~\ref{tab: ablation} from Appendix~\ref{appendix: ablation study}, the product parameterization (NGA) outperforms the scalar version (NGA-Scalar) by approximately 15\%. These results highlight the effectiveness of our method compared to directly parameterizing $\beta_l$ as a learnable scalar.

Recent works in the implicit-bias literature have shown that reparameterized models—particularly those involving quadratic or low-rank factorizations—can induce non-trivial optimization dynamics that resemble adaptive or mirror-descent–like updates. For example, \citet{li2022implicit} and \citet{jacobsmirror} demonstrate that such reparameterizations implicitly modify the effective geometry of gradient descent, leading to adaptive learning-rate behaviors without explicit scheduling. Related work on sparsity-inducing training \citep{jacobsmask, kolbdeep} further illustrates how factored parameterizations can shape convergence trajectories and amplify or damp particular update directions.

Our product parameterization of the temperature term shares conceptual similarities with these findings. These perspectives provide additional insight into why our reparameterizations can yield favorable optimization behavior in our setting.

\section{Behavior at Local Optima}\label{app:local_opt}

Before proving Theorem \ref{thm:localoptbehave}, we first conclude the following Lemma.
\begin{restatable}{lemma}{localopt}
\label{lemma:local_opt}
For a local optimum $\mathbf{S}^{(l)}$, and for $|\beta_l|$ sufficiently small, the following property holds:
\begin{equation}
\mathbf{S}^{(l+1)}_{ij}=\frac{1}{n}[1 + \beta_l(h_{ij} - \bar{h}) + O(|\beta_l|^2)]
\end{equation}
where $h_{ij} = [\mathcal{A}_{\Diamond}\mathrm{vec}(\mathbf{S}^{(l)})]_{ij}$ and $\bar{h}$ is the mean of $h_{ij}$ over all $i$ and $j$.
\end{restatable}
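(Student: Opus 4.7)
The plan is to Taylor-expand the composite nonlinear step of Algorithm~\ref{alg:NGA} around $\beta_l=0$. Since the input to Sinkhorn degenerates to the all-ones matrix as $\beta_l \to 0$, and the Sinkhorn normalization of the all-ones matrix is the uniform doubly-stochastic matrix $\tfrac{1}{n}\mathbf{1}\mathbf{1}^\top$, the leading-order term of $\mathbf{S}^{(l+1)}_{ij}$ is $\tfrac{1}{n}$; the entire content of the lemma lies in extracting the $O(\beta_l)$ correction.

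First I would expand the entrywise exponential, $\exp(\beta_l h_{ij}) = 1 + \beta_l h_{ij} + O(|\beta_l|^2)$, and write the Sinkhorn scaling vectors as perturbations of the uniform solution, $u_i = \tfrac{1}{\sqrt n}(1 + \beta_l a_i)$ and $v_j = \tfrac{1}{\sqrt n}(1 + \beta_l b_j)$. Multiplying gives
\begin{equation}
\mathbf{S}^{(l+1)}_{ij} = u_i \exp(\beta_l h_{ij}) v_j = \tfrac{1}{n}\bigl(1 + \beta_l(a_i + h_{ij} + b_j)\bigr) + O(|\beta_l|^2).
\end{equation}
Second, I would impose the doubly-stochastic constraints $\sum_j \mathbf{S}^{(l+1)}_{ij}=1$ and $\sum_i \mathbf{S}^{(l+1)}_{ij}=1$ at first order, yielding the linear conditions $a_i + \bar h_i + \bar b = 0$ and $b_j + \bar h'_j + \bar a = 0$, together with the gauge identity $\bar a + \bar b = -\bar h$, where $\bar h_i,\bar h'_j$ denote row- and column-means of $h$. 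Solving these produces the generic first-order correction $a_i + h_{ij} + b_j = h_{ij} - \bar h_i - \bar h'_j + \bar h$.

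Third, I would invoke the local-optimum hypothesis on $\mathbf{S}^{(l)}$. At a stationary point of the QAP objective on the Birkhoff polytope, the KKT conditions force $h_{ij} = \lambda_i + \mu_j$ on the active support, so the row- and column-mean deviations $\bar h_i - \bar h$ and $\bar h'_j - \bar h$ reduce to shifts of the Lagrange multipliers that can be absorbed into the Sinkhorn gauge freedom for $(a_i,b_j)$. Consequently the correction collapses from $h_{ij} - \bar h_i - \bar h'_j + \bar h$ to $h_{ij} - \bar h$, and I obtain
\begin{equation}
\mathbf{S}^{(l+1)}_{ij} = \tfrac{1}{n}\bigl(1 + \beta_l(h_{ij} - \bar h) + O(|\beta_l|^2)\bigr),
\end{equation}
as claimed.

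The main obstacle is the third step: the raw Taylor expansion naturally yields the fully centered form $h_{ij} - \bar h_i - \bar h'_j + \bar h$, and pushing this down to $h_{ij} - \bar h$ requires carefully exploiting the rank-one additive structure of $h$ that the KKT conditions at a local optimum of~\eqref{eq: QAP} impose. A secondary technical check is that the Sinkhorn map is smooth in its input at the all-ones matrix and that the $O(|\beta_l|^2)$ remainder is uniform in $(i,j)$; both follow from the standard contraction analysis of Sinkhorn iterations once the kernel $\exp(\beta_l h_{ij})$ stays in a compact positive neighborhood of $1$, which is guaranteed for $|\beta_l|$ sufficiently small under the boundedness of $h$.
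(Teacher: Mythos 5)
Your overall strategy is the same as the paper's: expand the Sinkhorn scaling around the uniform solution at $\beta_l=0$ (the paper works with log-potentials $u_i,v_j$ and the implicit function theorem, you with multiplicative scalings $a_i,b_j$, which is cosmetically different only) and impose the doubly-stochastic constraints at first order. Your first two steps are sound, and the doubly-centered quantity $h_{ij}-\bar h_i-\bar h'_j+\bar h$ you reach there is in fact the rigorous generic first-order correction for Sinkhorn applied to $\exp(\beta_l h)$.

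The gap is your third step, and it is a genuine one. First, the Sinkhorn gauge freedom is only the one-parameter rescaling $a_i\mapsto a_i+c$, $b_j\mapsto b_j-c$; it leaves $a_i+b_j$, and hence the first-order correction $a_i+h_{ij}+b_j$ of $\mathbf{S}^{(l+1)}_{ij}$, completely invariant, so no row- or column-dependent deviation can be ``absorbed'' into it. Second, the KKT structure you invoke works against the claimed collapse rather than for it: if $h_{ij}=\lambda_i+\mu_j$ held at every entry, double centering annihilates such an additively rank-one matrix, so your correction would become identically $0$, whereas $h_{ij}-\bar h=(\lambda_i-\bar\lambda)+(\mu_j-\bar\mu)$ is generically nonzero -- the two expressions differ precisely under the structure you appeal to (and complementary slackness gives the additive identity only on the support, which does not repair the argument). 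Note also that the paper's own proof of this lemma never uses the local-optimum/KKT hypothesis: it derives the first-order constraint equations on the potentials, $\sum_j(u_i^{(1)}+h_{ij}+v_j^{(1)})=0$ and $\sum_i(u_i^{(1)}+h_{ij}+v_j^{(1)})=0$, and passes to $h_{ij}-\bar h$ through a global normalization step in which only the aggregate sums $\sum_k u_k^{(1)}$ and $\sum_m v_m^{(1)}$ appear; the Lagrange multipliers enter only afterwards, in the proof of Theorem~\ref{thm:localoptbehave}. So while your final formula coincides with the statement, the step that produces it does not hold as argued; your intermediate doubly-centered expression is the defensible result, and it reduces to $h_{ij}-\bar h$ only under additional conditions such as $h$ having constant row and column means.
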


\begin{proof}

First, we can characterize the pre-Sinkhorn matrix $\mathbf{M}^{(l)}$ through its log-space representation: \begin{equation}
\ln \mathbf{M}^{(l)}_{ij} = \beta_l h_{ij}
\end{equation}

By the Sinkhorn-Knopp theorem~\citep{sinkhorn1967concerning}, there exist diagonal matrices $\mathbf{D}_r$ and $\mathbf{D}_c$ with positive entries such that: \begin{equation}
\mathbf{S}^{(l+1)} = \mathbf{D}_r \mathbf{M}^{(l)} \mathbf{D}_c
\end{equation} is doubly stochastic.
Taking logarithm we have: \begin{equation}
\ln \mathbf{S}^{(l+1)}_{ij} = \ln \mathbf{D}_r^{(i)} + \beta_l h_{ij} + \ln \mathbf{D}_c^{(j)}
\end{equation}

Let's define the potentials $u_i = \ln \mathbf{D}_r^{(i)}$ and $v_j = \ln \mathbf{D}_c^{(j)}$. These satisfy: 
\begin{equation}
\exp(u_i + \beta_l h_{ij} + v_j) = \mathbf{S}^{(l+1)}_{ij}
\end{equation}
\begin{equation}
\label{eq:constraints}
\begin{split}
\sum_j \mathbf{S}^{(l+1)}_{ij} = \sum_j \exp(u_i + \beta_l h_{ij} + v_j) &= 1 \\
\sum_i \mathbf{S}^{(l+1)}_{ij} = \sum_i \exp(u_i + \beta_l h_{ij} + v_j) &= 1
\end{split}
\end{equation}

When $\beta_l = 0$, the Sinkhorn algorithm has a trivial solution: \begin{equation}
\mathbf{S}^{(l+1)}_{ij} = \frac{1}{n}, \quad u_i^{(0)} = v_j^{(0)} = -\frac{1}{2}\ln n
\end{equation}
For small $|\beta_l|$, we can view the solution as a perturbation around this base solution. By the implicit function theorem, if the Sinkhorn algorithm converges (guaranteed by our assumptions) and $|\beta_l|$ is sufficiently small, the functions involved are smooth.

Let's write the system of equations that defines our problem. For the doubly stochastic constraints (Eq.~\eqref{eq:constraints}):
\begin{equation}
F_{i}(u, v, \beta_l) = \sum_j \exp(u_i + \beta_l h_{ij} + v_j) - 1 = 0 \quad \forall i
\end{equation} 
\begin{equation}
G_{j}(u, v, \beta_l) = \sum_i \exp(u_i + \beta_l h_{ij} + v_j) - 1 = 0 \quad \forall j
\end{equation}

For this system, the smoothness can be verified by checking the insingularity of the Jacobian matrix. 
Therefore, by the implicit function theorem, there exist unique (up to addition of constants) functions $u_i(\beta_l)$ and $v_j(\beta_l)$ in a neighborhood of $\beta_l = 0$ that satisfy our constraints and are continuously differentiable.
Since these functions are differentiable at $\beta_l = 0$, we can write their Taylor expansions: \begin{equation}
\begin{split}
u_i(\beta_l) &= u_i^{(0)} + \beta_l u_i^{(1)} + O(|\beta_l|^2) \\
v_j(\beta_l) &= v_j^{(0)} + \beta_l v_j^{(1)} + O(|\beta_l|^2)
\end{split}
\end{equation}

where $u_i^{(1)} = \frac{du_i}{d\beta_l}|_{\beta_l=0}$ and $v_j^{(1)} = \frac{dv_j}{d\beta_l}|_{\beta_l=0}$.
Then the potentials $u_i$ and $v_j$ are analytic functions of $\beta_l$ near 0.
To determine $u_i^{(1)}$ and $v_j^{(1)}$, we use the row and column sum constraints: \begin{equation}
\sum_j \exp(u_i + \beta_l h_{ij} + v_j) = 1
\end{equation}

Substituting the expansions (as when $\beta_l=0$, we have $u_i^{(0)}=v_j^{(0)}=-\frac{1}{2}\ln n$)
\begin{equation}\label{eq:expansions}
\begin{split}
u_i &= -\frac{1}{2}\ln n + \beta_l u_i^{(1)} + O(|\beta_l|^2) \\
v_j &= -\frac{1}{2}\ln n + \beta_l v_j^{(1)} + O(|\beta_l|^2)
\end{split}
\end{equation}

into the numerator of $\mathbf{S}^{(l+1)}_{ij}$: \begin{equation}
\label{eq: exp(beta h + u+ v)}
\begin{split}
&\exp(\beta_l h_{ij} + u_i + v_j) \\
&= \exp(\beta_l h_{ij} - \ln n + \beta_l(u_i^{(1)} + v_j^{(1)}) + O(|\beta_l|^2)) \\
&= \frac{1}{n}\exp(\beta_l(h_{ij} + u_i^{(1)} + v_j^{(1)}) + O(|\beta_l|^2)) \\
&= \frac{1}{n}(1 + \beta_l(h_{ij} + u_i^{(1)} + v_j^{(1)}) + O(|\beta_l|^2))
\end{split}
\end{equation}

Inside the exponential: \begin{equation}
\begin{split}
&u_i + \beta_l h_{ij} + v_j \\
&= (-\frac{1}{2}\ln n + \beta_l u_i^{(1)} + O(|\beta_l|^2)) + \beta_l h_{ij} + (-\frac{1}{2}\ln n + \beta_l v_j^{(1)} + O(|\beta_l|^2)) \\
&= -\ln n + \beta_l(u_i^{(1)} + h_{ij} + v_j^{(1)}) + O(|\beta_l|^2)
\end{split}
\end{equation}

Therefore: \begin{equation}
\begin{split}
&\exp(u_i + \beta_l h_{ij} + v_j) \\
&= \exp(-\ln n + \beta_l(u_i^{(1)} + h_{ij} + v_j^{(1)}) + O(|\beta_l|^2)) \\
&= \frac{1}{n}\exp(\beta_l(u_i^{(1)} + h_{ij} + v_j^{(1)}) + O(|\beta_l|^2))
\end{split}
\end{equation}

For small $|\beta_l|$, using Taylor expansion of exponential: \begin{equation}
\begin{split}
&\exp(\beta_l(u_i^{(1)} + h_{ij} + v_j^{(1)}) + O(|\beta_l|^2)) \\
&= 1 + \beta_l(u_i^{(1)} + h_{ij} + v_j^{(1)}) + O(|\beta_l|^2)
\end{split}
\end{equation}

Then we have: \begin{equation}
\begin{split}
&\sum_j \exp(u_i + \beta_l h_{ij} + v_j) \\
&= \sum_j \frac{1}{n}(1 + \beta_l(u_i^{(1)} + h_{ij} + v_j^{(1)}) + O(|\beta_l|^2)) \\
&= \frac{1}{n}\sum_j(1 + \beta_l(u_i^{(1)} + h_{ij} + v_j^{(1)}) + O(|\beta_l|^2)) = 1
\end{split}
\end{equation}

For this equation to be consistent for small $|\beta_l|$, we must have: 
\begin{equation}\label{eq:u1_constraints}
\sum_j(u_i^{(1)} + h_{ij} + v_j^{(1)}) = 0,\qquad \sum_i(u_i^{(1)} + h_{ij} + v_j^{(1)}) = 0
\end{equation}

These equations determine $u_i^{(1)}$ and $v_j^{(1)}$ up to an additive constant. 

Next, we check how the overall behavior of $\mathbf{S}^{(l+1)}$ when $|\beta_l|$ is sufficiently small. For the assignment matrix: \begin{equation}
\mathbf{S}^{(l+1)}_{ij} = \frac{\exp(\beta_l h_{ij} + u_i + v_j)}{\sum_{k,m} \exp(\beta_l h_{km} + u_k + v_m)}
\end{equation}

Substitute Eq.~\eqref{eq:expansions} into the numerator, we get the same formulation as Eq.~\eqref{eq: exp(beta h + u+ v)}. For $|\beta_l|$ sufficiently small, the subsequent Sinkhorn layer can converge very fast. Therefore, in the denominator: 
\begin{equation}
\begin{split}
&\sum_{k,m} \exp(\beta_l h_{km} + u_k + v_m) \\
&= \sum_{k,m} \frac{1}{n}(1 + \beta_l(h_{km} + u_k^{(1)} + v_m^{(1)}) + O(|\beta_l|^2)) \\
&= 1 + \beta_l(\frac{1}{n}\sum_{k,m} h_{km} + \sum_k u_k^{(1)} + \sum_m v_m^{(1)}) + O(|\beta_l|^2)
\end{split}
\end{equation}

Let's define $\bar{h} = \frac{1}{n^2}\sum_{k,m} h_{km}$. Using Eq.~\eqref{eq:u1_constraints} and the division formula $(1+a)/(1+b) = 1 + (a-b) + O((a-b)^2)$ for small $a,b$: 
\begin{equation}\label{eq:nga_update}
\begin{split}
\mathbf{S}^{(l+1)}_{ij} &= \frac{\frac{1}{n}(1 + \beta_l(h_{ij} + u_i^{(1)} + v_j^{(1)}) + O(|\beta_l|^2))}{1 + \beta_l(n^2\bar{h} + \sum_k u_k^{(1)} + \sum_m v_m^{(1)}) + O(|\beta_l|^2)} \\
&= \frac{1}{n}[1 + \beta_l(h_{ij} + u_i^{(1)} + v_j^{(1)} - n^2\bar{h} - \sum_k u_k^{(1)} - \sum_m v_m^{(1)}) + O(|\beta_l|^2)] \\
&= \frac{1}{n}[1 + \beta_l(h_{ij} - \bar{h}) + O(|\beta_l|^2)]
\end{split}
\end{equation}
where $\bar{h}$ is the mean of $h_{ij}$.
\end{proof}

Having Lemma~\ref{lemma:local_opt}, we can further conclude the following theorem.

\localoptbehave*

\begin{proof}
Expand the change in objective value, we have:
\begin{equation}\label{eq:obj_change}
\begin{split}
\Delta J &= J(\mathbf{S}^{(l+1)}) - J(\mathbf{S}^{(l)}) \\
&= \mathrm{vec}(\mathbf{S}^{(l+1)} - \mathbf{S}^{(l)})^\top \mathcal{A}_{\Diamond}\mathrm{vec}(\mathbf{S}^{(l)}) + \frac{1}{2}\mathrm{vec}(\mathbf{S}^{(l+1)} - \mathbf{S}^{(l)})^\top \mathcal{A}_{\Diamond}\mathrm{vec}(\mathbf{S}^{(l+1)} - \mathbf{S}^{(l)})
\end{split}
\end{equation}

From Lemma~\ref{lemma:local_opt}
\begin{equation}
\mathbf{S}^{(l+1)}_{ij} - \mathbf{S}^{(l)}_{ij} = \frac{1}{n}[1 + \beta_l(h_{ij} - \bar{h}) + O(|\beta_l|^2)] - \mathbf{S}^{(l)}_{ij}
\end{equation}

Let's analyze the first term in Eq.~\eqref{eq:obj_change}: \begin{equation}
\begin{split}
&\mathrm{vec}(\mathbf{S}^{(l+1)} - \mathbf{S}^{(l)})^\top \mathcal{A}_{\Diamond}\mathrm{vec}(\mathbf{S}^{(l)}) \\
&= \sum_{i,j} (\frac{1}{n}[1 + \beta_l(h_{ij} - \bar{h}) + O(|\beta_l|^2)] - \mathbf{S}^{(l)}_{ij})h_{ij} \\
&= \sum_{i,j} [\frac{1}{n} - \mathbf{S}^{(l)}_{ij}]h_{ij} + \frac{\beta_l}{n}\sum_{i,j} (h_{ij} - \bar{h})h_{ij} + O(|\beta_l|^2)
\end{split}
\end{equation}

Since $\mathrm{Var}(h) = \frac{1}{n^2}\sum_{i,j} h_{ij}^2 - \bar{h}^2$: 
\begin{equation}
\mathrm{vec}(\mathbf{S}^{(l+1)} - \mathbf{S}^{(l)})^\top \mathcal{A}_{\Diamond}\mathrm{vec}(\mathbf{S}^{(l)}) = \sum_{i,j} [\frac{1}{n} - \mathbf{S}^{(l)}_{ij}]h_{ij} + \beta_l \cdot \mathrm{Var}(h) + O(|\beta_l|^2)
\end{equation}

At the local optimum, by KKT conditions: \begin{equation}
2h_{ij} - \lambda_i - \mu_j = 0
\end{equation} for entries where $\mathbf{S}^{(l)}_{ij} > 0$. $\lambda_i$ and $\mu_j$ are Lagrange multipliers.
Therefore, 
\begin{equation}
\begin{split}
\sum_{i,j} [\frac{1}{n} - \mathbf{S}^{(l)}_{ij}]h_{ij} &= \sum_{i,j} [\frac{1}{n} - \mathbf{S}^{(l)}_{ij}]\frac{\lambda_i + \mu_j}{2} \\
&= \frac{1}{2}(\sum_i \lambda_i - \sum_j \mu_j)
\end{split}
\end{equation} using the doubly stochastic constraints.

It is easy to show that the second-order term is $O(|\beta_l|^2)$.
Therefore, summing the two terms we have \begin{equation}
\begin{split}
\Delta J &= \frac{1}{2}(\sum_i \lambda_i - \sum_j \mu_j) + \beta_l \cdot \mathrm{Var}(h) + O(|\beta_l|^2)
\end{split}
\end{equation}

\end{proof}


\section{Ablation Study}
\label{appendix: ablation study}

To better understand the contribution of each component in our framework, we conduct an ablation study by systematically removing or modifying key modules. This analysis highlights the effectiveness of individual design choices and demonstrates how each component contributes to the overall performance. The results are summarized in Table~\ref{tab: ablation}.

We first propose an ablated variant named NGA-w/o GNN, which removes GNN and instead employs a randomly initialized matrix for $\mathbf{S}^{(0)}$. Experimental results reveal that this simplified variant maintains strong performance in solving MCES, thereby demonstrating the crucial role and inherent effectiveness of our subsequent NGA refinement algorithm. While this variant achieves satisfactory results, the integration of GNN generates more optimized initial node assignments through learned representations. This enhanced initialization, when coupled with the NGA refinement process, translates to superior overall performance.

To demonstrate the effectiveness of the exploration and exploitation process of our proposed NGA, we tested a variant of our method where we use the Sigmoid function to restrict $\beta_l > 0$, denoted as NGA-Positive. This variant disables the exploration ability of our method. We observe that restricting $\beta_l$ to positive values significantly degrades performance, indicating that allowing negative temperatures is crucial for effective exploration and for enabling NGA to escape local minima.

To systematically evaluate the sensitivity of NGA to Gumbel sampling, we conduct an ablation by varying the number of samples $M$. Empirical analysis in Table~\ref{tab: ablation} reveals that the performance increases sublinearly with $M$, demonstrating the expected exploration-computation trade-off. Besides, the marginal gain becomes statistically insignificant when $M \geq 10$. For comparison, we also implement GA-Gumbel, an variant of GA incorporating the Gumbel-Sinkhorn sampling. Under identical experimental protocols, GA-Gumbel demonstrates consistent performance gains over baseline GA. However, our proposed NGA still significantly outperforms GA-Gumbel, highlighting the necessity of our parameterization of the temperature.

Finally, to demonstrate the superiority of the product parameterization $\beta_l$ in NGA, we introduce two variants: (1) NGA-Manual employs a manually configured temperature schedule initialized with negative values that gradually transitions to positive values, aligned with the mean parameter distribution patterns observed in Fig.~\ref{fig:empirical evidence}. (2) NGA-Scalar parameterizes temperature as a learnable scalar. In Table~\ref{tab: ablation}, both variants exhibit suboptimal performance, primarily due to constrained adaptive capacity and limited expressive power. This performance gap underscores the importance of our design. Note that MLP is composed of learnable weights, which is equivalent to our original implementation.

\begin{table}[tbp]
    \centering
    \caption{Results of Ablation Study in terms of Accuracy $(\%)$ $\uparrow$ for solving MCES problem.}
    \begin{tabular}{lcccc}
    \toprule
    Dataset & AIDS & MOLHIV & MCF-7 \\ \midrule
    GA & 70.99 & 71.09 & 72.92\\
    GA-Gumbel & 75.08 & 74.46 & 76.12\\
    NGA-w/o GNN & 97.45 & 98.77 & 97.11\\
    NGA-Positive & 83.42 & 85.63 & 80.32\\
    NGA-Manual & 74.27 & 70.90 & 76.45\\
    NGA-Scalar & 77.48 & 82.27 & 72.94\\
    
    NGA & \textbf{98.64} & \textbf{99.20} & \textbf{97.94} \\ 
    \bottomrule
    \end{tabular}
    \label{tab: ablation}
\end{table}

\section{Additional Experimental Results}
\label{appendix: additional experimental results}

\subsection{Additional Results for Solving MCES}
\label{appendix: percentage of optimal}
We have additionally conducted a detailed analysis of the quality of the solutions obtained by our proposed NGA. In particular, we examine how frequently the MCES identified by NGA coincides with the true optimal solution. This evaluation provides deeper insights into the effectiveness and reliability of NGA beyond average performance metrics. The results highlights not only the overall accuracy of the method but also its stability across different problem instances. A summary of these results is provided in Table~\ref{tab: percentage of optimal}.

\begin{table}[h]
    \centering
    \caption{The percentage of optimal solutions of MCES obtained on different datasets.}
    \begin{tabular}{cccc}
    \toprule
      AIDS & MCF-7 & MOLHIV \\
    \midrule
    49\%   & 56\% & 61\%    \\
    \bottomrule
    \end{tabular}
    \label{tab: percentage of optimal}
\end{table}

\subsection{Results for Case Study}
\label{appendix: visualization results}
In this section, we select several instances from the three datasets as case studies. We give a time budget of 3600 seconds and record the best solution found so far for different methods, as shown in Figure~\ref{fig:best solution so far}. On the AIDS and MCF-7 datasets, our method can find better solutions within the time budget. On the MOLHIV dataset, both NGA and RASCAL find the best solution, but NGA takes much shorter time.

Specifically, we take AIDS dataset as an example and give detailed illustration about how optimization pushes the solution found so far to approximate the optimal MCES. As shown in Figure~\ref{fig: optimization example}
, the current identified common edge subgraph serves as a lower bound, which is gradually improved as the optimization process goes.


\begin{figure}[tb]
    \centering
    \subfigure[AIDS]{
\includegraphics[width=0.3\textwidth]{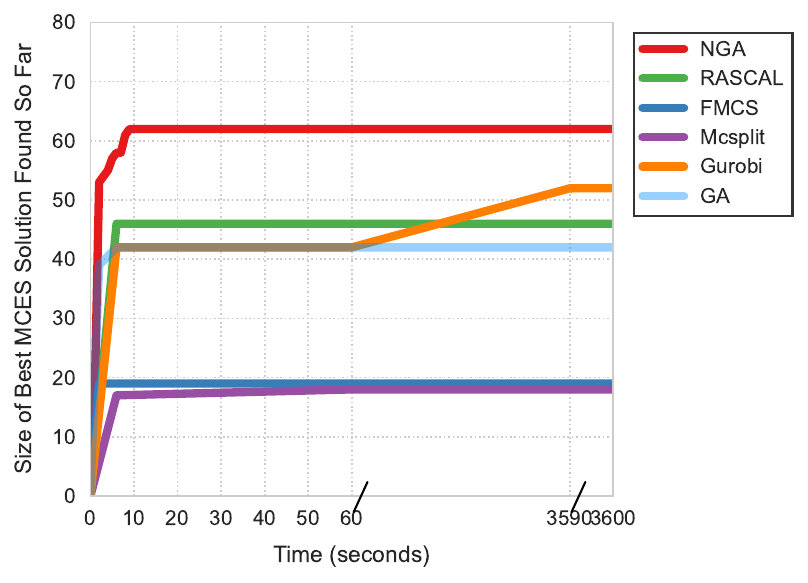}
    }
    \subfigure[MOLHIV]{
 \includegraphics[width=0.3\textwidth]{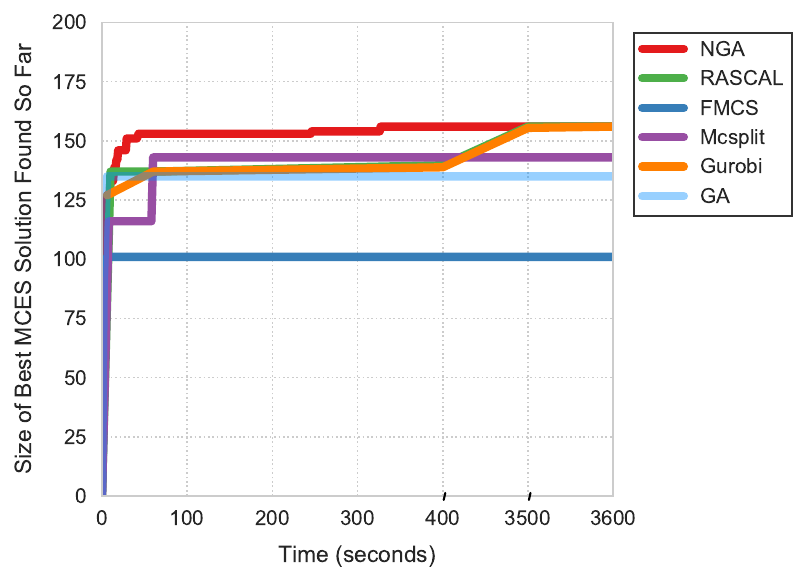}
    }
    \subfigure[MCF-7]{
\includegraphics[width=0.3\textwidth]{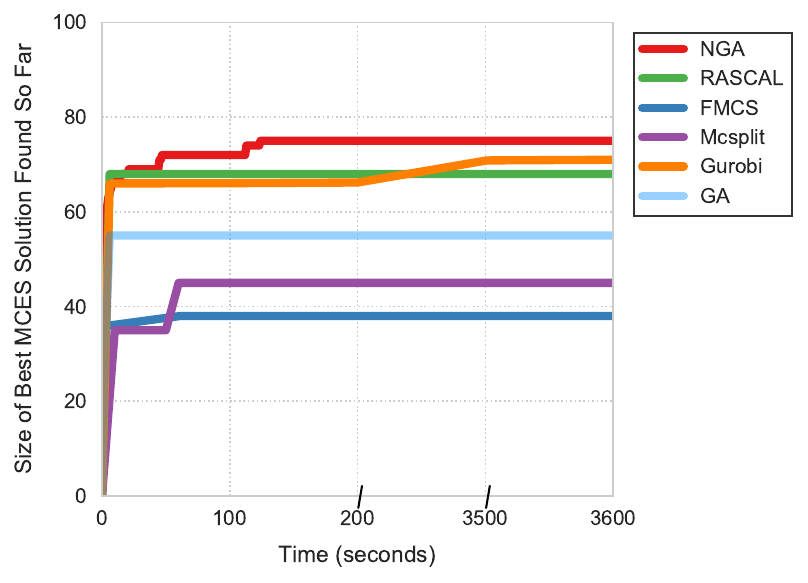}
    }
    \caption{Comparison of the best MCES solution size found so far on different dataset.}
    \label{fig:best solution so far}
\end{figure}


\begin{figure}[tbp]
    \centering
    \includegraphics[width=0.8\linewidth]{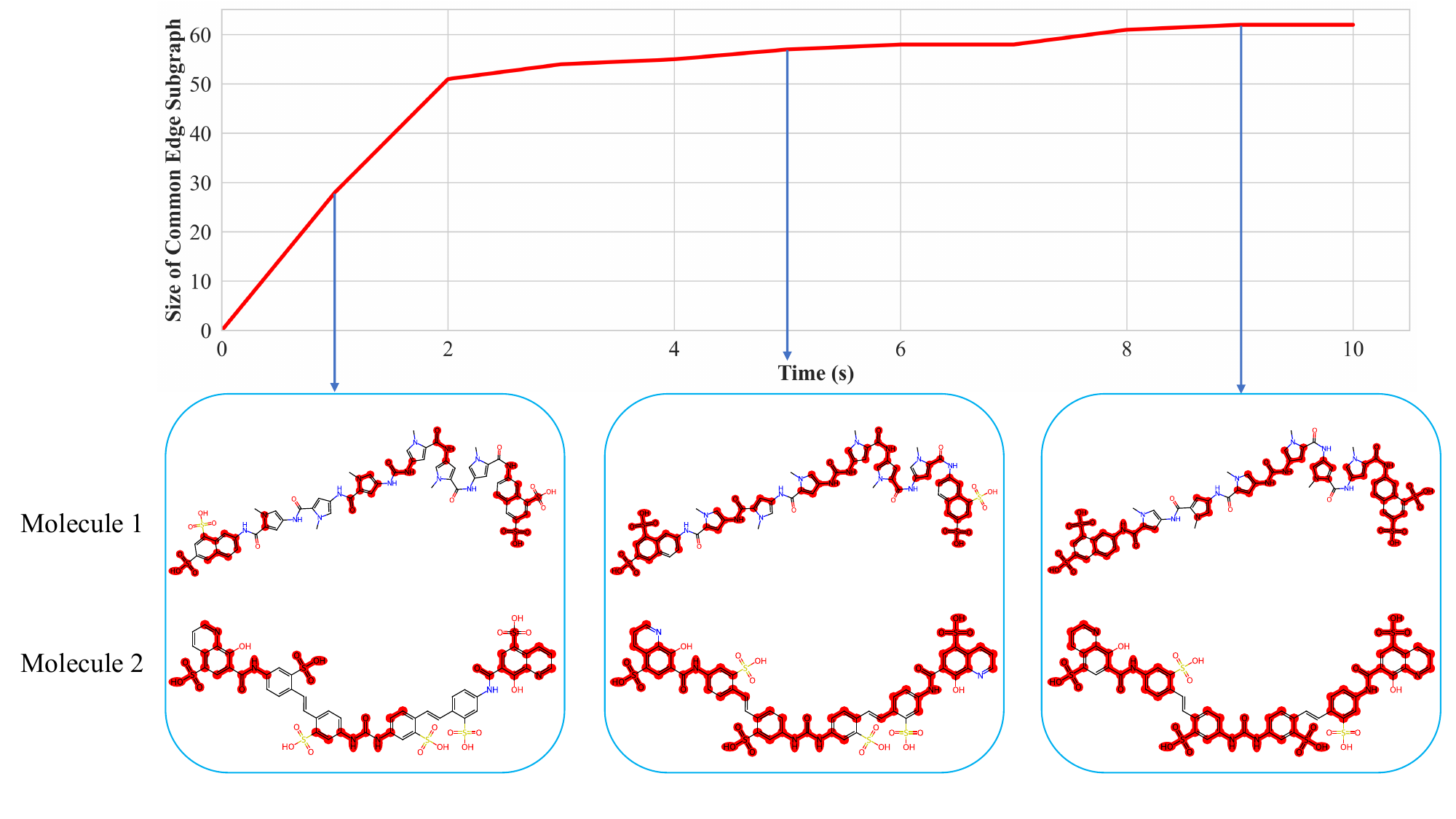}
    \caption{The size of the common edge subgraph found by NGA (serving as a lower bound) increases over the course of the optimization process (runtime in seconds). Matched atoms and bonds are highlighted in red.}
    \label{fig: optimization example}
\end{figure}

\begin{figure*}[tb]\large
    \centering
    \resizebox{\textwidth}{!}
    {
    \begin{tabular}{cccc}
     & AIDS & MOLHIV & MCF-7\\
    \rotatebox{90}{\qquad \qquad \qquad Molecule 1} & \includegraphics[width=\linewidth]{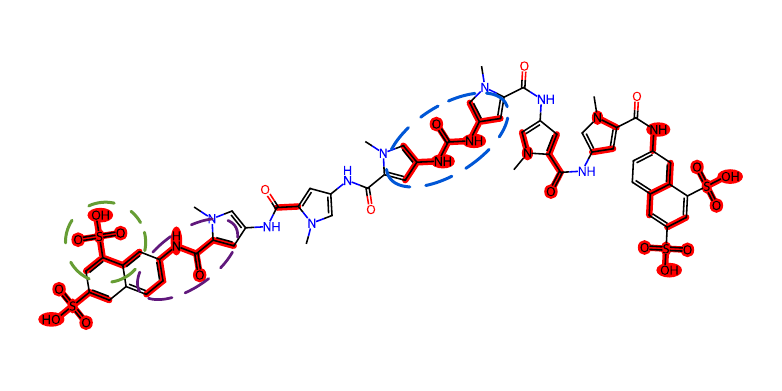} & \includegraphics[width=\linewidth]{results/MOLHIV-1.pdf} & \includegraphics[width=\linewidth]{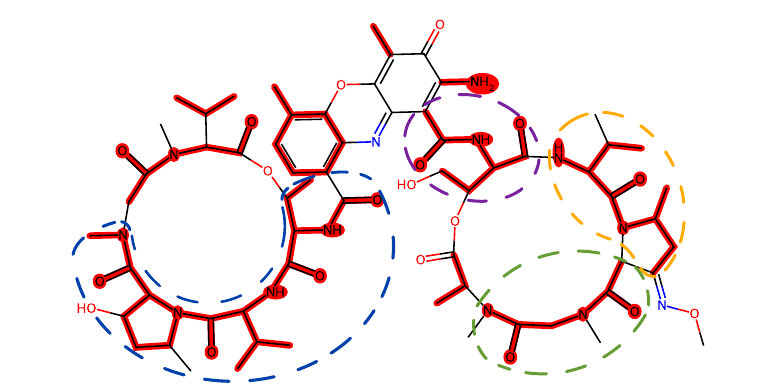}
    \\
    \rotatebox{90}{\qquad \qquad \qquad Molecule 2} & \includegraphics[width=\linewidth]{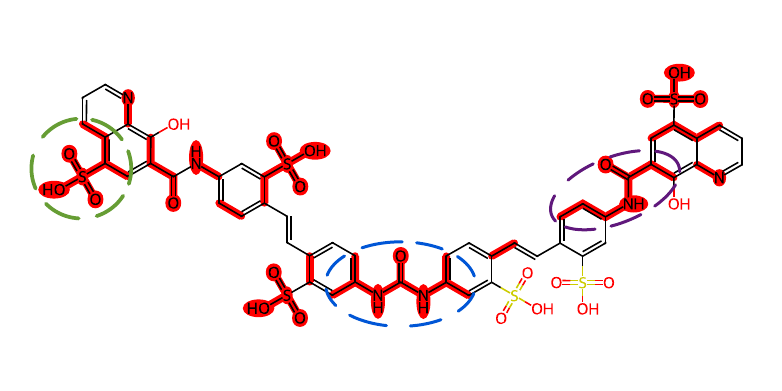} & \includegraphics[width=\linewidth]{results/MOLHIV-2.pdf} & \includegraphics[width=\linewidth]{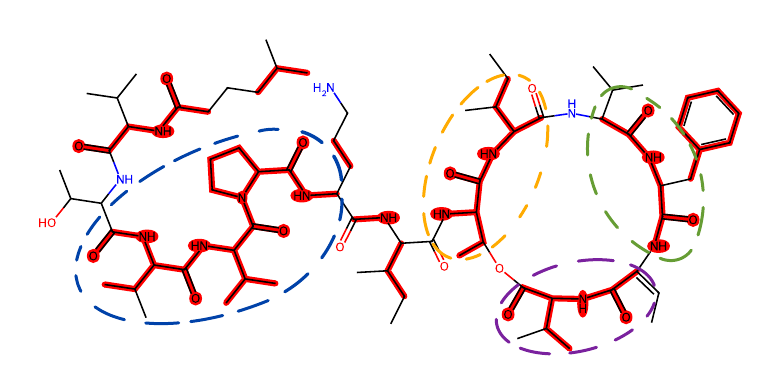} 
    
    \end{tabular}
    }
    \caption{Visualization results of MCES for RASCAL.}
    \label{fig. visualization rascal}
\end{figure*}



\subsection{Empirical Evidence}
\label{appendix: empirical evidence}
We conduct extensive experiments to demonstrate the empirical evidence of the effectiveness of our proposed method. The experiments are performed on three datasets, and all test instances are included. Specifically, we measure the parameterized temperature at different iterations during the training process, particularly after the model has converged. The learned sequence of temperatures across layers represents the optimization trajectory of the problem. This trajectory includes both positive and negative values, where the negative values often play a critical role in escaping local optima.

As shown in Figure~\ref{fig:empirical evidence}, the parameterized temperature varies with different iterations in NGA, revealing stable trends across the datasets.  In the initial iterations of NGA, we observe a cautious exploration of the parameter space, characterized by relatively small and stable values of the parameterized temperature. As the NGA iteration progresses, the algorithm transitions to a more aggressive convergence strategy, reflected by a significant increase in the magnitude of the parameterized temperature. Our empirical results consistently align with the theoretical analysis, highlighting distinct patterns along the optimization trajectory. These patterns reflect the dynamic behavior of the algorithm, where early stages focus on exploration and later stages emphasize convergence, guided by the learned temperature sequence.

\begin{figure}[tb]
    \centering
    \subfigure[AIDS]{
\includegraphics[width=0.3\textwidth]{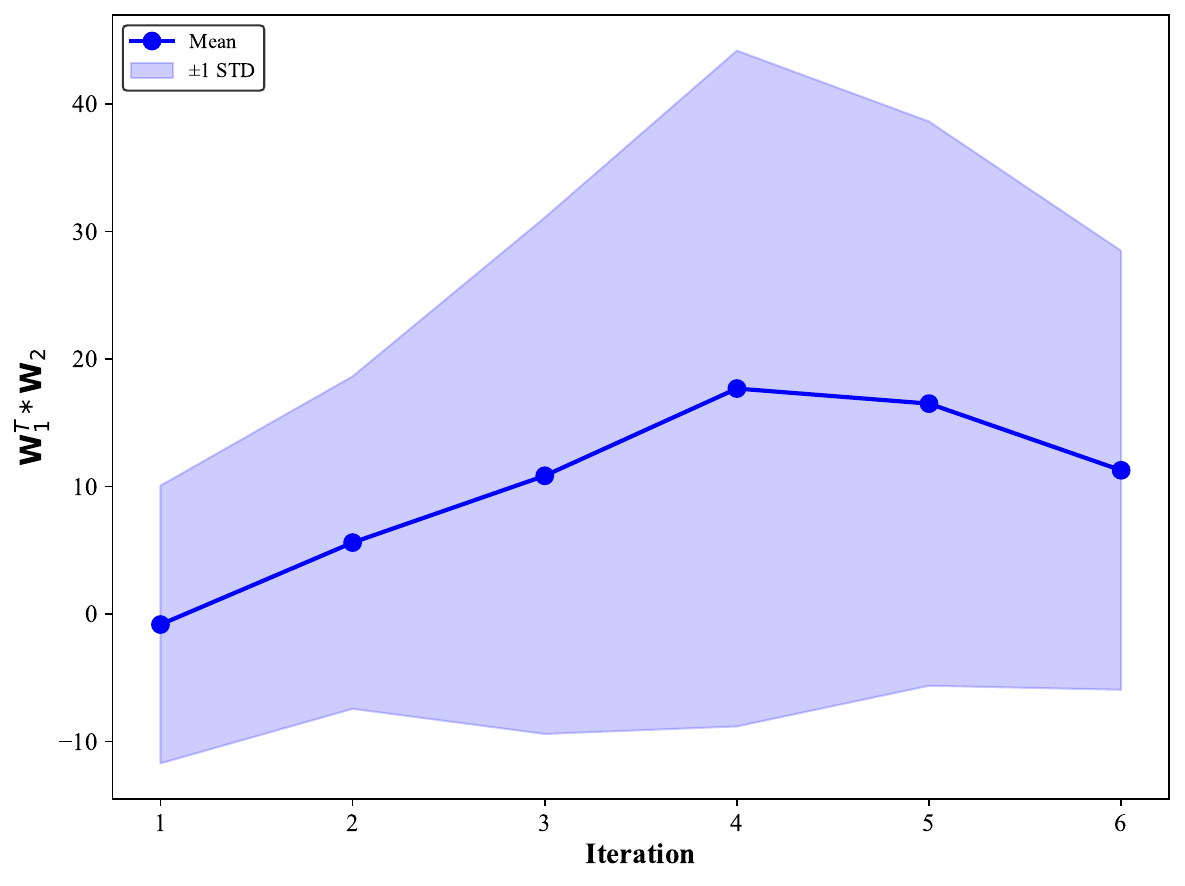}
    }
    \subfigure[MOLHIV]{
 \includegraphics[width=0.3\textwidth]{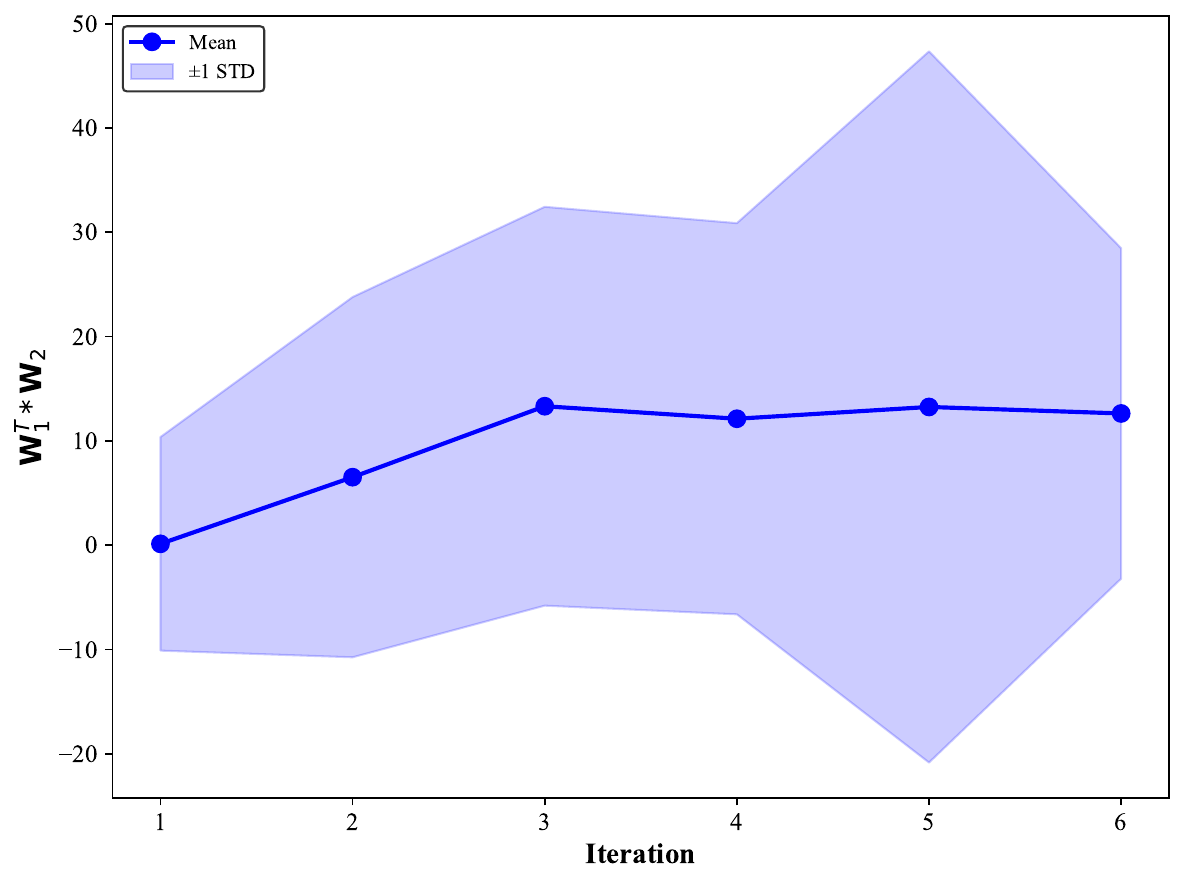}
    }
    \subfigure[MCF-7]{
\includegraphics[width=0.3\textwidth]{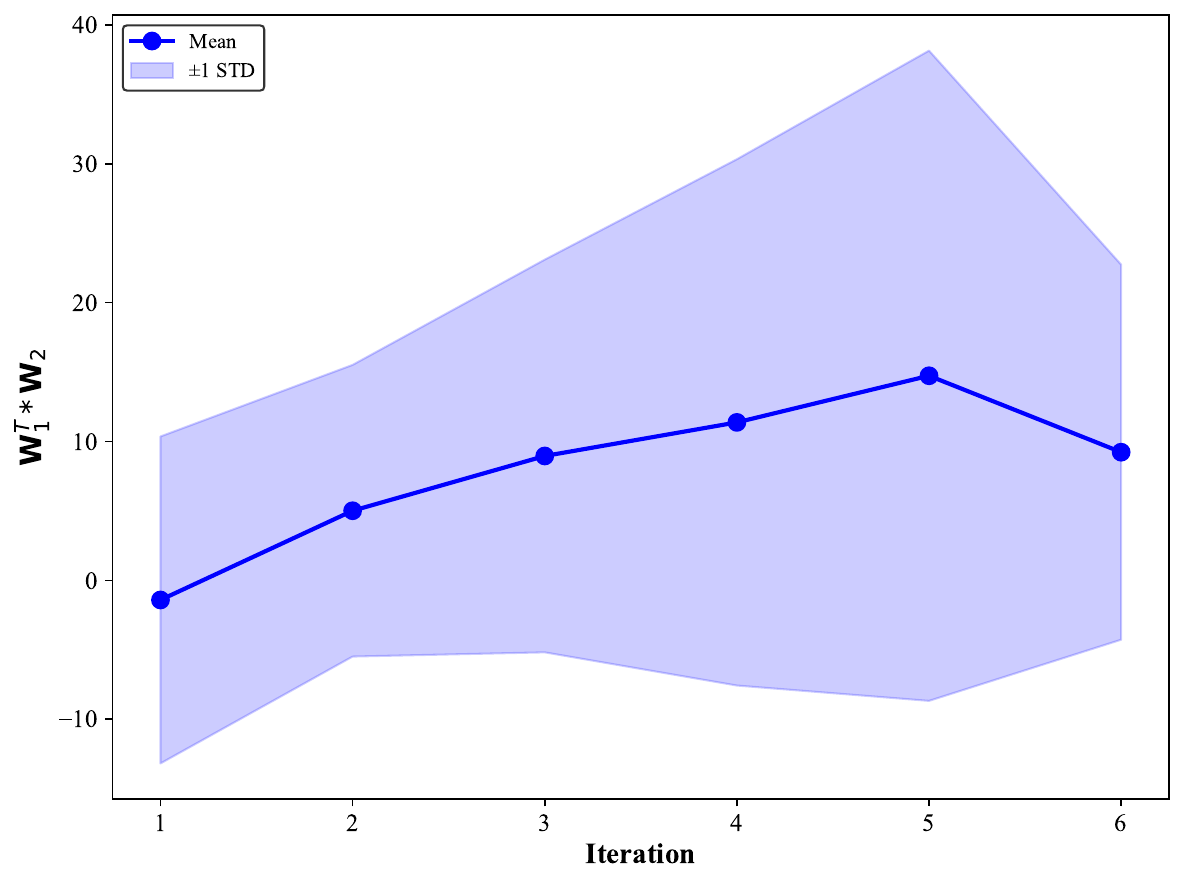}
    }
    \caption{Distribution of parameterized temperature values across iterations in NGA on different datasets.}
    \label{fig:empirical evidence}
\end{figure}

Importantly, the values shown in Figure~\ref{fig:empirical evidence} correspond to converged or near-converged states. We additionally visualize how $|\beta_l|$ changes in the optimization process in Figure~\ref{fig: visualization weights epoch}. Empirically, we observe that $|\beta_l|$ takes smaller magnitudes during the early and intermediate iterations—precisely when escaping shallow local optima is relevant—while larger magnitudes emerge in the final refinement phase, consistent with the intended design of NGA. This aligns with the theoretical role of Theorem 1, which analyzes updates around a stationary point rather than global dynamics across all layers.

\begin{figure}[tb]
    \centering
    \subfigure[AIDS]{
\includegraphics[width=0.3\textwidth]{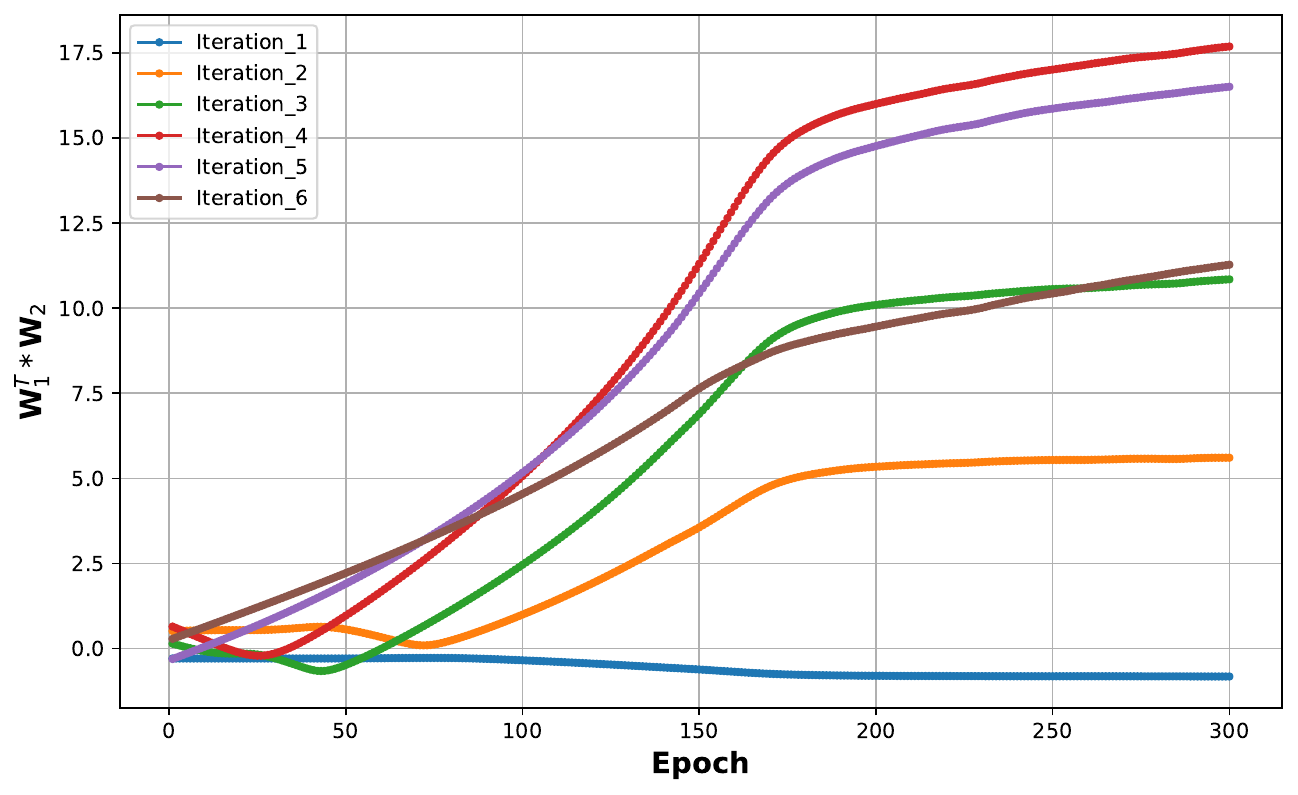}
    }
    \subfigure[MOLHIV]{
 \includegraphics[width=0.3\textwidth]{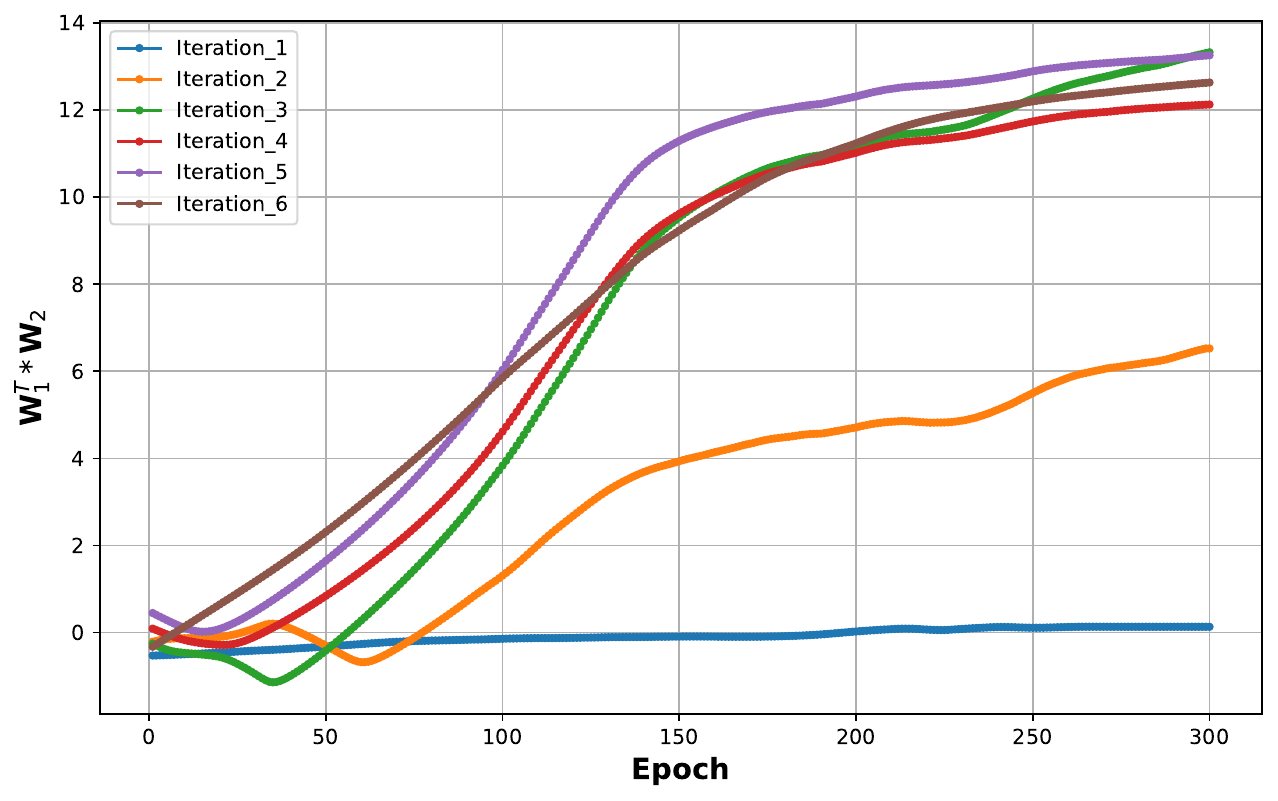}
    }
    \subfigure[MCF-7]{
\includegraphics[width=0.3\textwidth]{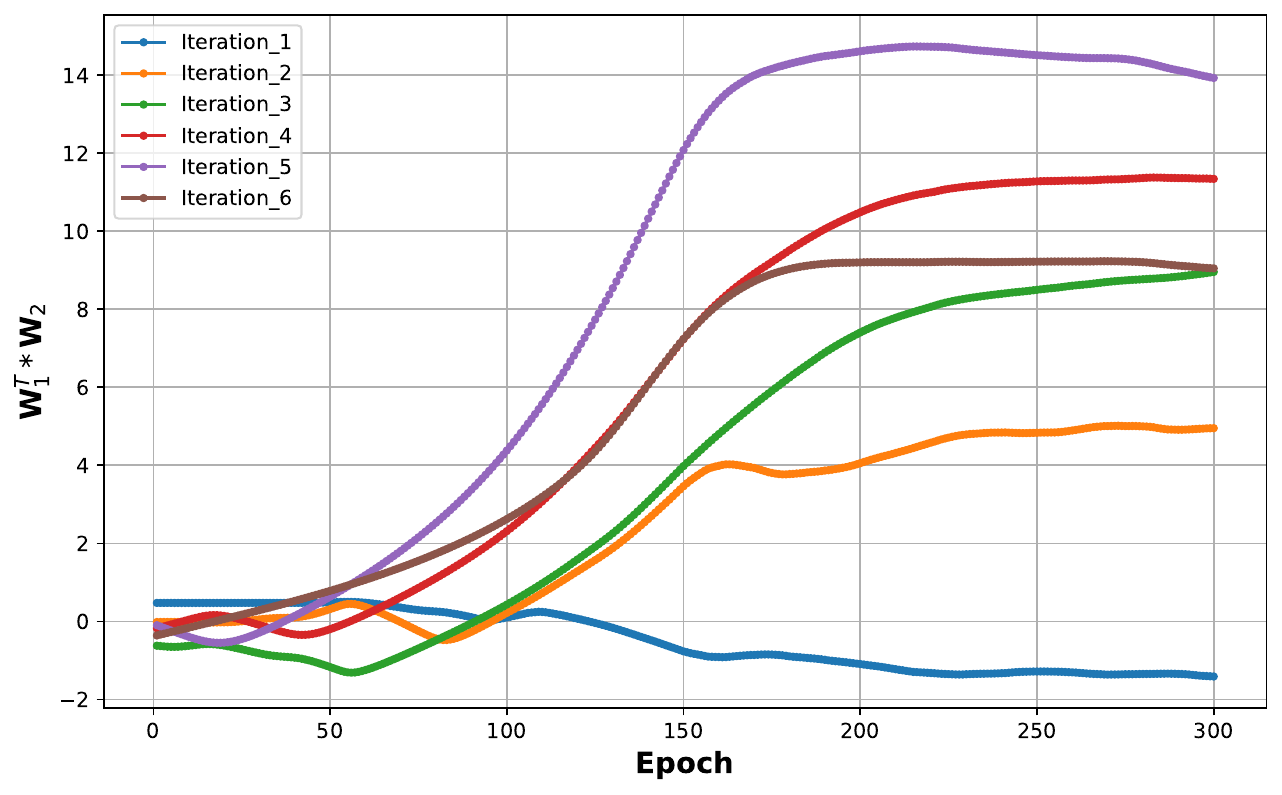}
    }
    \caption{Mean values of parameterized temperature in different NGA iteration layers across epoch on different datasets.}
    \label{fig: visualization weights epoch}
\end{figure}

\begin{figure}[tb]
    \centering
    \subfigure[AIDS]{
\includegraphics[width=0.3\textwidth]{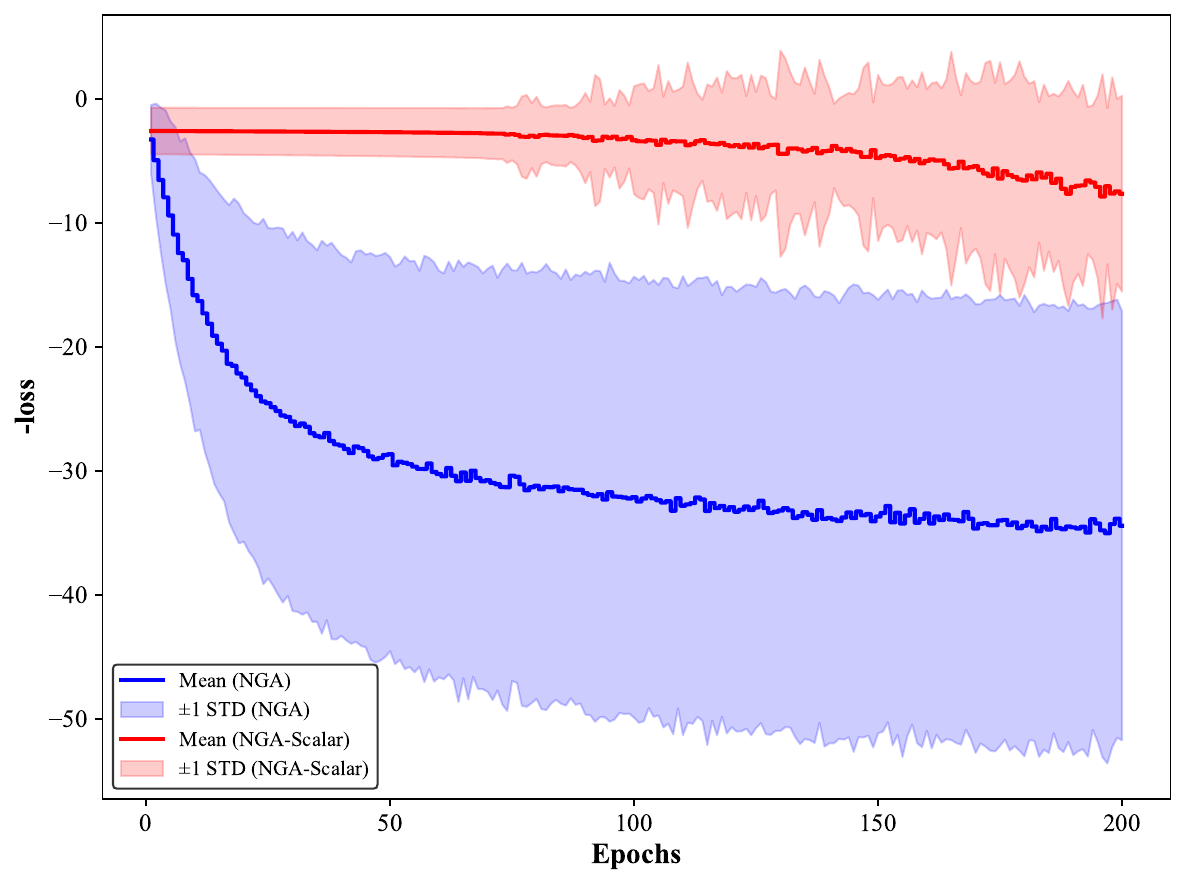}
    }
    \subfigure[MOLHIV]{
 \includegraphics[width=0.3\textwidth]{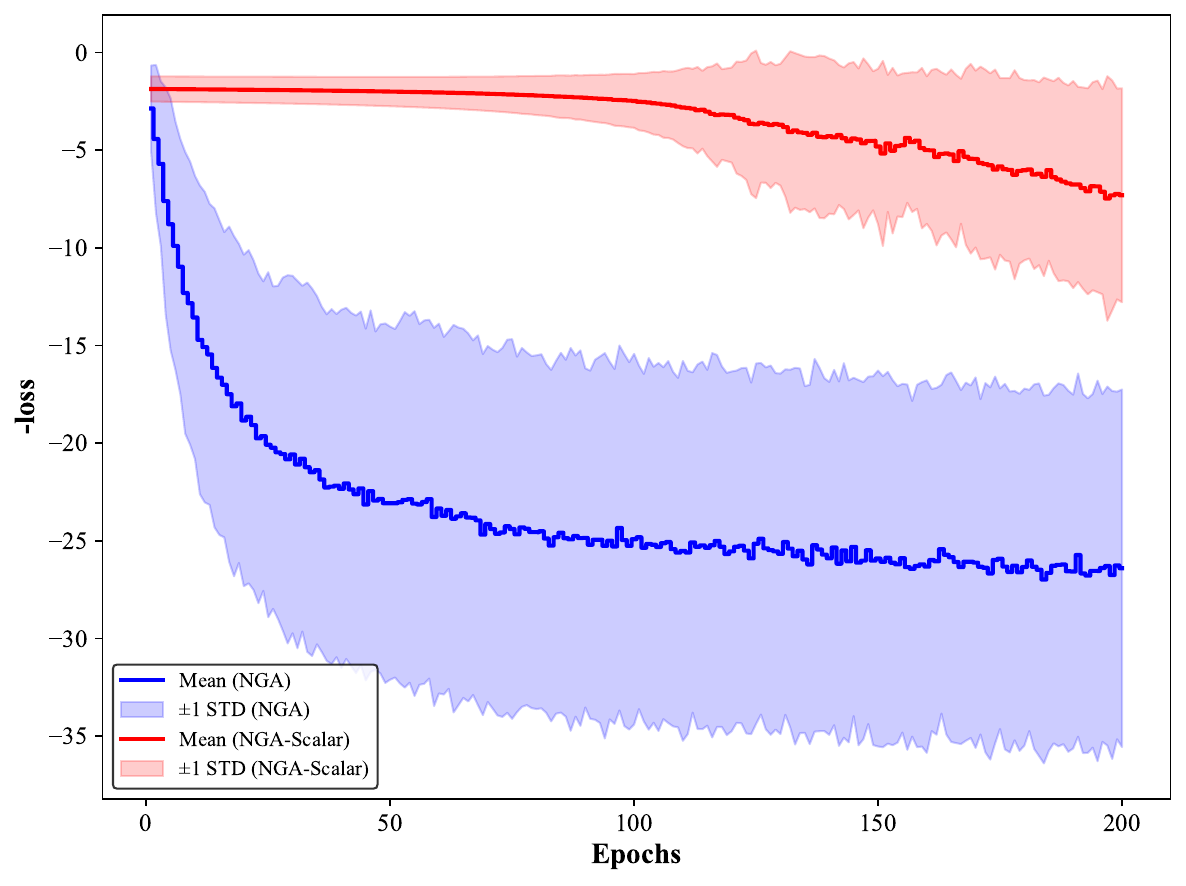}
    }
    \subfigure[MCF-7]{
\includegraphics[width=0.3\textwidth]{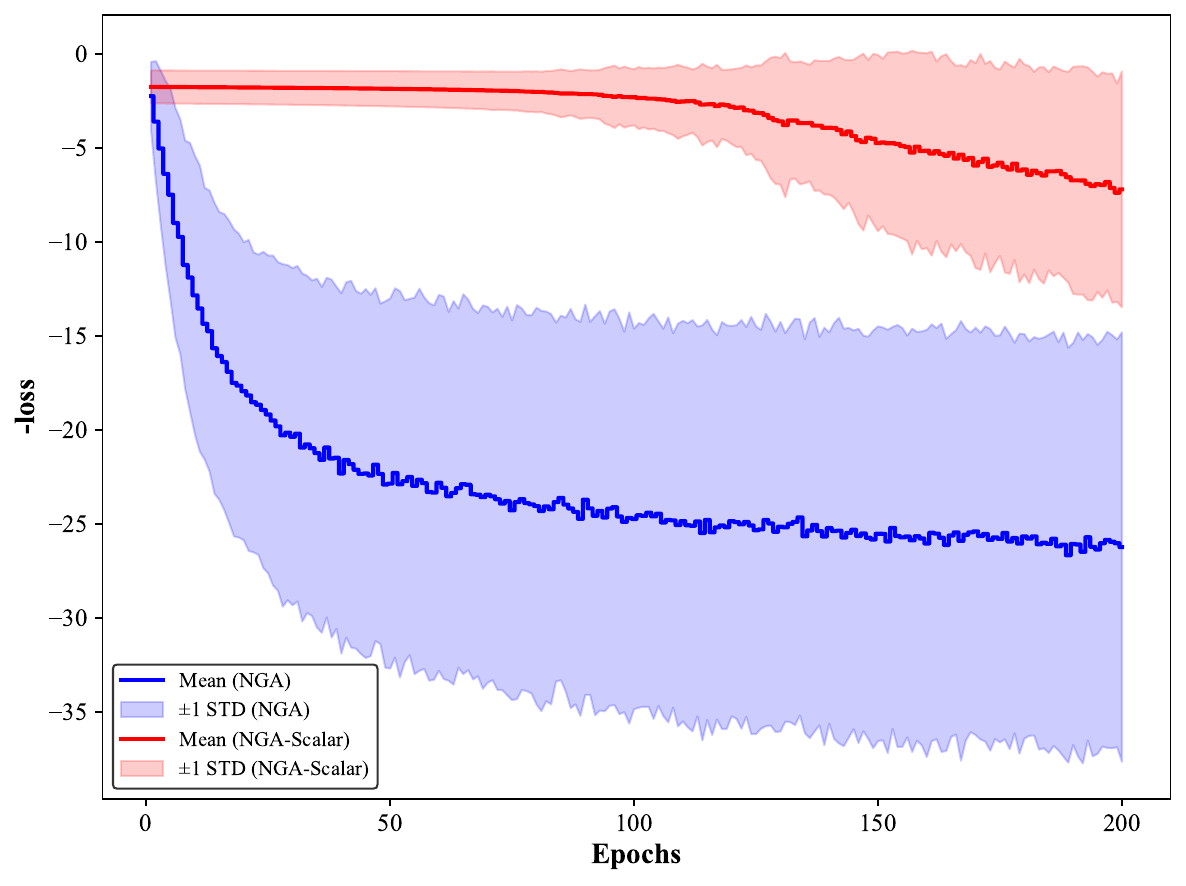}
    }
    \caption{Loss curves across epochs on different datasets. NGA-Scalar represents an variant of NGA by setting the temperature value as a learnable scalar.}
    \label{fig:empirical evidence of convergence}
\end{figure}

\subsection{Additional Parameter Analysis}
\label{appendix: parameter analysis}
To systematically evaluate the sensitivity of our method to Gumbel-Sinkhorn sampling, we conducted an ablation study by varying the number of samples $M$. Empirical analysis in Figure~\ref{fig: number gumbel} reveals that the performance increases sublinearly with $M$, demonstrating the expected exploration-computation trade-off. Besides, the marginal gain becomes statistically insignificant when $M \geq 10$. Based on these findings, we select $M = 10$ as the optimal trade-off.

\begin{figure}[tbp]
    \centering
    \includegraphics[width=0.33\textwidth]{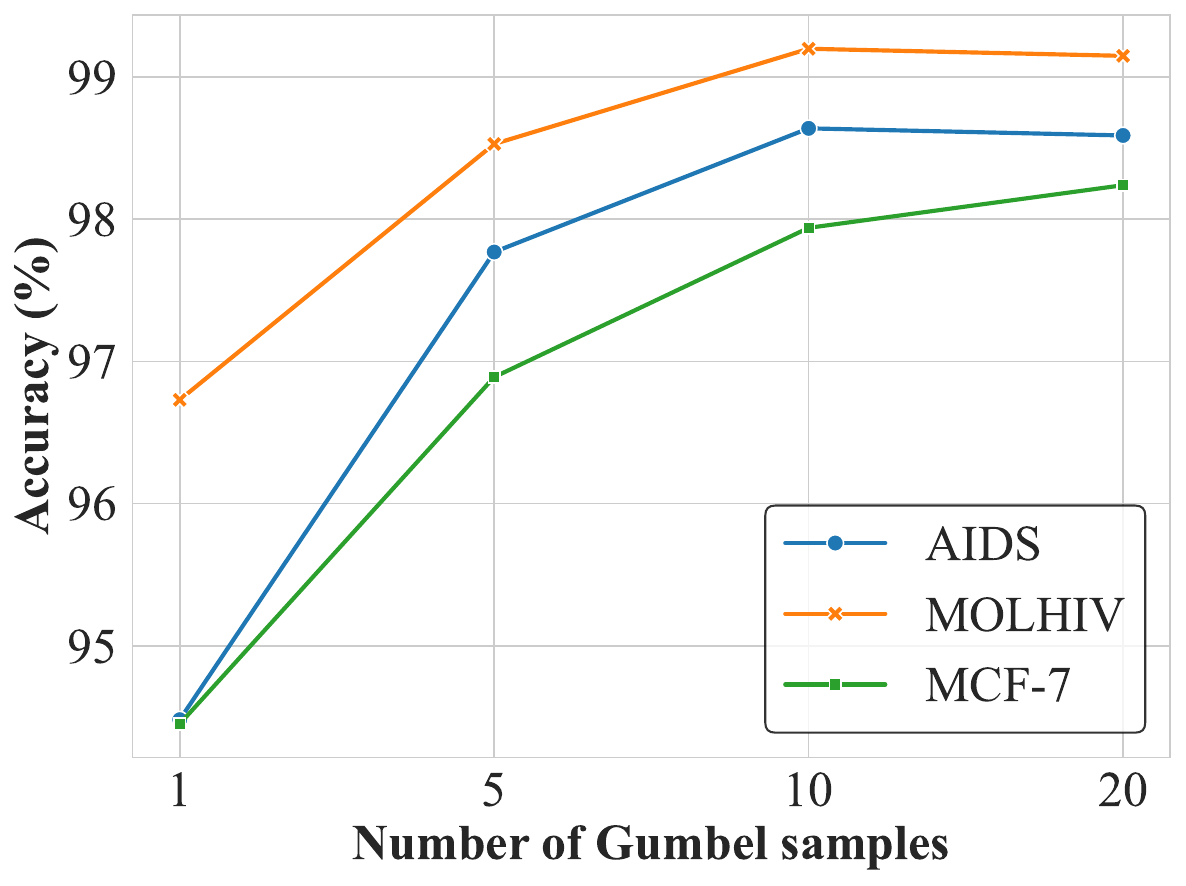}
    \caption{How the number of Gumbel samples affect NGA.}
    \label{fig: number gumbel}
\end{figure}

\subsection{Results on QAP Instances}
\label{appendix: QAP instances}
To demonstrate the applicability of our method to address the general QAP problem, we conducted additional experiments on Koopmans-Beckmann’s QAP benchmarks: QAP20, QAP50, and QAP100, corresponding to problems with 20, 50, and 100 nodes. In each task, location node coordinates are uniformly sampled from the unit square $[0, 1]^2$. The flow $f_{ij}$ between facility $i$ and $j$ is sampled uniformly from $[0,1]$, symmetrized, diagonal entries set to zero, and randomly set to zero with probability $p=0.7$. We train with up to 5120 instances and evaluate on a test set of 256 instances from the same distribution.
Since these tasks do not involve the MCES-specific structure, we omit ACG construction and directly use the NGA module (Alg.~\ref{alg:NGA}). We compare our method with the heuristic solver SM~\citep{leordeanu2005spectral} and the learning-based solver NeuOpt~\citep{ma2023learning}. The results are summarized in Table~\ref{tab: Koopmans-Beckmann’s QAP}. Our method achieves competitive or superior solutions across these QAP instances.

\begin{table}[ht]
\centering
\caption{Total Assignment Cost (lower is better) on Koopmans-Beckmann’s QAP benchmarks}
\begin{tabular}{lccc}
\toprule
Method & QAP20 & QAP50 & QAP100 \\
\midrule
SM      & 70.07  & 446.62 & 1800.75 \\
NeuOpt  & 61.37  & 430.61 & 1789.22 \\
NGA     & 62.17  & 407.59 & 1726.54 \\
\bottomrule
\end{tabular}
\label{tab: Koopmans-Beckmann’s QAP}
\end{table}

Besides, we conduct additional experiments on a subset of QAPLIB instances across different categories. The compared methods include GLAG~\citep{fiori2013glag}, PATH~\citep{zaslavskiy2008path}, FAQ~\citep{vogelstein2015faq}. The results are summarized in Table~\ref{tab:QAPLIB}. These results show that while our method is primarily designed for MCES, it can still perform competitively on general QAP, further demonstrating the versatility of our approach.

\begin{table}[htbp]
    \centering
    \caption{Total Assignment Cost (lower is better) on 16 Benchmark Examples of the QAPLIB Library}
    \begin{tabular}{ccccc}
    \toprule
    QAP &  PATH & GLAG & FAQ & NGA \\
    \midrule
    chr12c     & 18048 & 61430 & 13088 & 17598\\
    chr15a     & 19086 & 78296 & 29018 & 17352\\
    chr15c     & 16206 & 82452 & 11936 & 20978\\
    chr20b     & 5560 & 13728 & 2764 &  3366\\
    chr22b     & 8500 & 21970 & 8774 & 8580\\
    esc16b     & 300   & 320   & 314  & 292\\
    rou12      & 256320 & 353998 & 254336 & 251094\\
    rou15      & 391270 & 521882 & 371458 & 389048\\
    rou20      & 778284 & 1019622 & 759838 & 813576\\
    tai10a     & 152534 & 218604  & 157954 & 145270\\
    tai15a     & 419224 & 544304  & 397376 & 418618\\
    tai17a     & 530978 & 708754  & 520754 & 537896\\
    tai20a     & 753712 & 1015832 & 736140 & 802464\\
    tai30a     & 1903872 & 2329604 & 1908814 & 1861206\\
    tai35a     & 2555110 & 3083180 & 2531558  & 2789834\\
    tai40a     & 3281830 & 4001224 & 3237014  & 3214724\\

    \bottomrule
    \end{tabular}
    
    \label{tab:QAPLIB}
\end{table}

\subsection{Results on unlabeled datasets}

To further enhance the generalizability of our method, we have additionally conducted experiments on unlabeled graphs. Specifically, we constructed an unlabeled version of the AIDS dataset, denoted as AIDS-unlabeled, where all atom types were replaced by carbon atoms (C) and all bonds were converted to single C–C bonds. This transformation results in unlabeled graphs while preserving the overall structural diversity of the original dataset.

In the unlabeled settings, additional structural embeddings may provide meaningful information. We replaced the original GCN backbone in our method NGA with a Graph Transformer architecture equipped with structural encodings~\citep{rampavsek2022recipe} and name this variant as NGA-GPS. This variant naturally incorporates positional and structural information, enabling NGA to operate effectively when explicit labels are absent.

The experimental settings follow those used in our paper (Appendix~\ref{sec: implementation details}). For NGA and NGA-GPS, we employ an 8-layer GNN with 32 hidden dimensions. The NGA-GPS is equipped with random walk based structural encodings and the multi-head attention mechanism with 4 heads. For the other baseline methods, we also adhere to the configurations described in our paper. The experimental results in Table~\ref{tab: MCES unlabeled} show that our method maintains strong performance even in the unlabeled setting, demonstrating its robustness and potential applicability beyond labeled molecular graphs.

\begin{table}[]
    \centering
    \caption{MCES results on unlabeled dataset}
\begin{tabular}{lcccc}
    \toprule
    Dataset & AIDS-unlabeled \\ \midrule
    RASCAL & 89.74 \\ 
    FMCS & 61.53 \\ 
    Mcsplit & 74.47 \\ 
    GLSearch & 68.13 \\
    NGM & 55.20 \\
    GA & 67.44 \\
    Gurobi & 83.89 \\
    NGA & 93.72 \\ 
    NGA-GPS & 94.37\\
    \bottomrule
    \end{tabular}
    
    \label{tab: MCES unlabeled}
\end{table}

\section{Limitations and Future Work}
This work focuses on labeled graphs, leaving open the question of how to handle unlabeled ones. When node or edge labels are absent, learning correspondences relies solely on graph structure, which increases the number of potential matches per node, enlarges the solution space, and makes the problem significantly more challenging.

Multi-graph MCES is a meaningful and practically relevant extension of the classical pairwise MCES formulation. In this work, we focus on the pairwise setting, which is the standard and well-established definition of MCES and also a necessary first step toward addressing the long-standing and more general challenge. Pairwise MCES provides the fundamental building block on top of which multi-graph formulations are typically constructed.
A feasible path toward multi-graph MCES is to combine pairwise MCES computations with cycle-consistency constraints, which has been long studied in multi-graph matching~\citep{pachauri2013solving,xia2025multi}. Our method can be extended along these lines—by running NGA across all graph pairs and enforcing consistency—to obtain a principled multi-graph solution. We view this direction as a natural and promising extension of our work.

\section{Discussion on Graph Labels}

In the MCES formulation, a common subgraph requires exact matching of compatible labels. Therefore, the choice of labels directly reflects what we consider as "common structure." Labels can be domain-specific features aligned with the MCES problem definition.

Concrete examples across domains:
\begin{itemize}
\item Molecular graphs: Atom types (C, N, O) and bond types (single, double, aromatic) are natural labels because pharmacological similarity requires exact chemical structure matching.
\item Social networks: User attributes (occupation, location, age group) could serve as labels when finding common community structures.
\item Knowledge graphs: Entity types and relation types naturally define what constitutes matching substructures.
\end{itemize}

If the application requires tolerance to label noise or similarity rather than exact matching, one could:
\begin{itemize}
    \item  Preprocess labels by clustering fine-grained categories into coarser groups
    \item Define softer compatibility functions in the ACG construction (modifying Eq.~\ref{eq: association common graph} to allow approximate matching)
    \item Formulate as a different problem variant (e.g., approximate common subgraph)
\end{itemize}

\section{LLM Usage Statement}
During the preparation of this manuscript, we utilized a large language model (LLM) as a writing assistant. The LLM's role was strictly limited to improving the clarity, grammar, and readability of our text through sentence polishing and paragraph restructuring. The LLM did not contribute to research ideation, experimental design, data analysis, or the formulation of conclusions. All scientific content and claims are the sole responsibility of the human authors.
\end{document}